\documentclass{article}

\usepackage{microtype}
\usepackage{graphicx}
\usepackage{subcaption}
\usepackage{booktabs} 

\usepackage{hyperref}

\usepackage[preprint]{icml2026}

\usepackage{amsmath}
\usepackage{amssymb}
\usepackage{mathtools}
\usepackage{amsthm}
\usepackage{float}
\usepackage{multirow}
\usepackage{tcolorbox}
\usepackage{listings}
\usepackage{adjustbox}
\usepackage{placeins}
\usepackage[ruled,lined,linesnumbered,algo2e]{algorithm2e}
\usepackage{bm}
\usepackage{xcolor}

\usepackage[capitalize,noabbrev]{cleveref}

\theoremstyle{plain}
\newtheorem{theorem}{Theorem}[section]
\newtheorem{proposition}[theorem]{Proposition}

\newtheorem{corollary}[theorem]{Corollary}
\theoremstyle{definition}
\newtheorem{definition}[theorem]{Definition}
\newtheorem{assumption}[theorem]{Assumption}
\theoremstyle{remark}
\newtheorem{remark}[theorem]{Remark}

\usepackage[textsize=tiny]{todonotes}
\icmltitlerunning{Towards On-Policy SFT: Distribution Discriminant Theory and its Applications in LLM Training}

\SetKwInput{KwInput}{Input}
\SetKwInput{KwOutput}{Output}
\SetKwInput{KwParam}{Param}
\SetKwComment{Comment}{$\triangleright$~}{}
\SetKwFunction{Entropy}{Entropy}
\SetKwFunction{LogSoftmax}{LogSoftmax}
\SetKwFunction{Sample}{Sample}
\SetKwFunction{Decode}{Decode}
\SetKwFunction{Contains}{Contains}
\SetKwFunction{Tokenize}{Tokenize}
\SetKwFunction{Generate}{Generate}
\SetKwFunction{Concat}{Concat}
\SetKwFunction{Clamp}{clamp}

\begin{document}

\twocolumn[
  \icmltitle{Towards On-Policy SFT: Distribution Discriminant Theory \\
  and its Applications in LLM Training}



  \icmlsetsymbol{equal}{*}

  \begin{icmlauthorlist}
    \icmlauthor{Miaosen Zhang}{equal,seu,msra}
    \icmlauthor{Yishan Liu}{equal,seu,shopee}
    \icmlauthor{Shuxia Lin}{seu}
    \icmlauthor{Xu Yang}{seu}
    \icmlauthor{Qi Dai}{msra}
    \icmlauthor{Chong Luo}{msra} \\
    \icmlauthor{Weihao Jiang}{shopee}
    \icmlauthor{Peng Hou}{shopee}
    \icmlauthor{Anxiang Zeng}{shopee}
    \icmlauthor{Xin Geng}{seu}
    \icmlauthor{Baining Guo}{seu,msra}
  \end{icmlauthorlist}

  \icmlaffiliation{seu}{Department of Computer Science, Southeast University, Nanjing, China}
  \icmlaffiliation{shopee}{Shopee, Shanghai, China}
  \icmlaffiliation{msra}{Microsoft Research Asia, Beijing, China}

  \icmlcorrespondingauthor{Xu Yang}{}
  \icmlcorrespondingauthor{Xin Geng}{}
  \icmlcorrespondingauthor{Baining Guo}{}

  \icmlkeywords{Machine Learning, ICML}

  \vskip 0.3in
]



\printAffiliationsAndNotice{}  

\begin{abstract}
Supervised fine-tuning (SFT) is computationally efficient but often yields inferior generalization compared to reinforcement learning (RL). This gap is primarily driven by RL’s use of on-policy data.
We propose a framework to bridge this chasm by enabling On-Policy SFT.
We first present \textbf{\textit{Distribution Discriminant Theory (DDT)}}, which explains and quantifies the alignment between data and the model-induced distribution.
Leveraging DDT, we introduce two complementary techniques: (i) \textbf{\textit{In-Distribution Finetuning (IDFT)}}, a loss-level method to enhance generalization ability of SFT, and (ii) \textbf{\textit{Hinted Decoding}}, a data-level technique that can re-align the training corpus to the model’s distribution. 
Extensive experiments demonstrate that our framework achieves generalization performance surpassing prominent offline RL algorithms, including DPO and SimPO, while maintaining the efficiency of an SFT pipeline.
The proposed framework thus offers a practical alternative in domains where RL is infeasible. We open-source the code here: \hyperlink{https://github.com/zhangmiaosen2000/Towards-On-Policy-SFT}{https://github.com/zhangmiaosen2000/Towards-On-Policy-SFT}.

\end{abstract}

\section{Introduction}

Reinforcement learning (RL) \cite{sutton1998reinforcement,ouyang2022training} and supervised fine-tuning (SFT) are two key methods in the post-training stage of large language models (LLMs) \cite{zhao2023survey}. 
RL often exhibits stronger generalization than SFT \cite{xu2021generalization,lin2025debunk}, whether used to improve value alignment via RL from Human Feedback (RLHF) \cite{ouyang2022training} like ChatGPT \cite{schulman2022chatgpt}, or to strengthen reasoning via RL from verifiable rewards (RLVR) \cite{lightman2023let} such as DeepSeek-R1 \cite{guo2025deepseek}. However, RL also has drawbacks. First, when obtaining reliable, verifiable feedback is difficult (e.g., resource-intensive agent settings \cite{yao2022webshop}, mathematical proof problems \cite{yang2023leandojo}), or biased \cite{gao2023scaling}, RL becomes hard to apply. Second, current RL for LLM typically provides only a terminal reward signal. This sparse supervision yields a much lower learning efficiency than SFT. For example, if repeated rollouts on a given instance fail to improve the reward, RL training tends to stall \cite{zelikman2022star}.


Therefore, an alternative path is to retain SFT’s data efficiency while enhancing its generalization. A prevailing view in prior work is that the key distinction between SFT and RL lies in RL’s use of on-policy data \cite{chu2025sft,ouyang2022training}, which preserves the model’s native distribution and mitigates catastrophic forgetting \cite{yuan2025mitigating}. In contrast, standard SFT forces the model to fit all external data equally.
Enforcing the model to learn data with a large distribution gap can damage its pre-trained knowledge structures. Thus, distinguishing whether a sequence matches the model's internal distribution is essential for understanding the learning process and preventing catastrophic forgetting. This naturally raises our central question: 
Can we bridge the SFT-to-RL chasm by aligning the training process with the model's own distribution, both at the data level and at the objective level, while preserving the original knowledge?

In this work, we show that it is feasible. 
We first consider the problem of how to directly quantify what constitutes in-distribution data. 
Through systematic comparison, we identify the centered log-likelihood (CLL) \cite{cox1961tests} as the optimal criterion for distribution discrimination. 
Our contribution for the theory part is that we leverage signal detection theory \cite{macmillan2002signal} to provide a novel linear-threshold decision perspective and establish the optimality of CLL. Furthermore, in the sequential setting, we derive error bounds for using the token-summed CLL as an LLM distribution test. These results constitute our \textbf{Distribution Discriminant Theory (DDT)}, which provides a rigorous foundation grounded in actual LLM generative mechanics. Extensive experiments on recent advanced LLMs \cite{guo2025deepseek,grattafiori2024llama,yang2025qwen3}, validate the theory and align with its predictions.

Building on DDT, we develop two direct applications to improve SFT. First, at the loss level, we reweight the SFT objective using our distributional criterion and introduce \textbf{I}n-\textbf{D}istribution \textbf{F}ine\textbf{T}uning (\textbf{IDFT}), allocating more weight to tokens that are in-distribution for the model, thereby preserving its native distribution and mitigating catastrophic forgetting. Second, we apply DDT into the decoding process for dataset re-alignment. We propose \textbf{Hinted Decoding}: given a question and its answer, it decodes a response aligned with the model’s distribution.

Training a base model on a fixed dataset, under the same setup as prior work that enhances SFT generalization \cite{wu2025generalization,diao2026entropy}, our IDFT delivers substantial gains.
By applying Hinted Decoding to rewrite the dataset into in-distribution samples and then training with IDFT, our approach surpasses offline RL methods \cite{rafailov2023direct,yin2024relative,xu2024contrastive,meng2024simpo}) on the same data while using less compute and achieves higher data efficiency. 
We hope this work empowers scenarios that are hard to apply RL, inspires further research.

\paragraph{Related works.} Several works explored the differences between SFT and RL. 
Theoretical analyses have characterized SFT as minimizing the forward KL divergence, which induces mode-covering behavior and forces the model to average over the data distribution, potentially limiting its ability to filter out noise~\cite{chu2025sft, kirk2023understanding}. 
In contrast, RLHF and methods like DPO approximate the reverse KL divergence, promoting mode-seeking behavior that concentrates probability mass on high-reward regions~\cite{chu2025sft}. 
Furthermore, prior work~\cite{xu2021generalization} analytically shows that RL methods achieve superior out-of-distribution generalization by optimizing policies on self-generated rollouts rather than fixed datasets. 
However, these works often lack a direct quantification of the distributional proximity. Our work contributes additional insights.

In terms of enhancing the generalization capability of SFT, recent works have proposed hybrid objectives to bridge this gap. 
\citet{wu2025generalization} proposed Dynamic Fine-Tuning (DFT), which dynamically reweights the SFT loss based on token probabilities to mitigate the high variance of implicit rewards. 
Building on this, Anchored Supervised Fine-Tuning (ASFT)~\cite{zhu2025anchored} incorporates a trust-region constraint to prevent distributional drift, while other approaches like ProFit~\cite{liu2026profit} selectively mask low-probability tokens to prevent surface-level overfitting.
However, these approaches simply re-weight the SFT objective, often lacking an interpretation of data distribution perspective.
Besides, our experiments demonstrate that when using stronger models (e.g., instruct models) where there is a significant gap in data distribution~\cite{lin2025debunk}, simply modifying the SFT loss is insufficient. Therefore, we also optimize at the data level and train with instruct models, comparing our results with offline RL.

\section{Distribution Discriminant Theory}
To realize the distributional alignment proposed in our central question, we first need to quantify what constitutes `in-distribution' data for a given model. In this section, we establish the \textbf{Distribution Discriminant Theory (DDT)}, framing the detection of distribution alignment as a statistical hypothesis testing problem. 
\paragraph{Notations.} 
Let $\mathcal{V}$ denote a finite vocabulary. We consider a sequence of tokens $\mathbf{x} = \{x_1, x_2, \dots, x_n\}$, where each $x_t \in \mathcal{V}$. 
At each time step $t$, the language model induces a conditional probability distribution over the next token given the context $c_t = (Q, x_{<t})$, denoted as $p_t(\cdot) \coloneqq p_\theta(\cdot \mid c_t)$, where $\theta$ is the parameter of LLM. Denote the standard Shannon entropy at step $t$ with context $c_t$ as:
\begin{equation*}
  H[p_t] \coloneqq -\sum_{v \in \mathcal{V}} p_t(v) \log p_t(v) = \mathbb{E}_{X \sim p_t} [-\log p_t(X)].
\end{equation*}
In this work, we treat the detection of distribution alignment as a statistical decision problem. We introduce the following hypothesis testing framework:

\begin{assumption}  
  \label{ass:hypotheses}
  For any observed token $x_t$, we consider two mutually exclusive hypotheses:
  \begin{itemize}
  \vspace{-0.4 cm}
    \item $\mathcal{H}_0$ (In-distribution): The token is sampled from the model's own distribution, $x_t \sim p_t$.
    \vspace{-0.2 cm}
    \item $\mathcal{H}_1$ (Out-of-distribution): The token originates from an unknown external mechanism $x_t \sim q_t$, where $q_t \neq p_t$.
    \vspace{-0.6 cm}
  \end{itemize}
\end{assumption}

To evaluate the discriminative capability of a given statistic scorer $S$ in distinguishing $\mathcal{H}_0$ from $\mathcal{H}_1$, 
we employ the \textit{Signal-to-Noise Ratio (SNR)}, a commonly used and reliable metric derived from \textit{Signal Detection Theory} (SDT) \cite{macmillan2002signal}:
\begin{equation*}
  \text{SNR}[S] \coloneqq \frac{\left( \mathbb{E}[S \mid \mathcal{H}_1] - \mathbb{E}[S \mid \mathcal{H}_0] \right)^2}{\mathrm{Var}(S \mid \mathcal{H}_0)}.
\end{equation*}

\begin{figure}[ht]
  \begin{center}
    \centerline{\includegraphics[width=0.9\columnwidth]{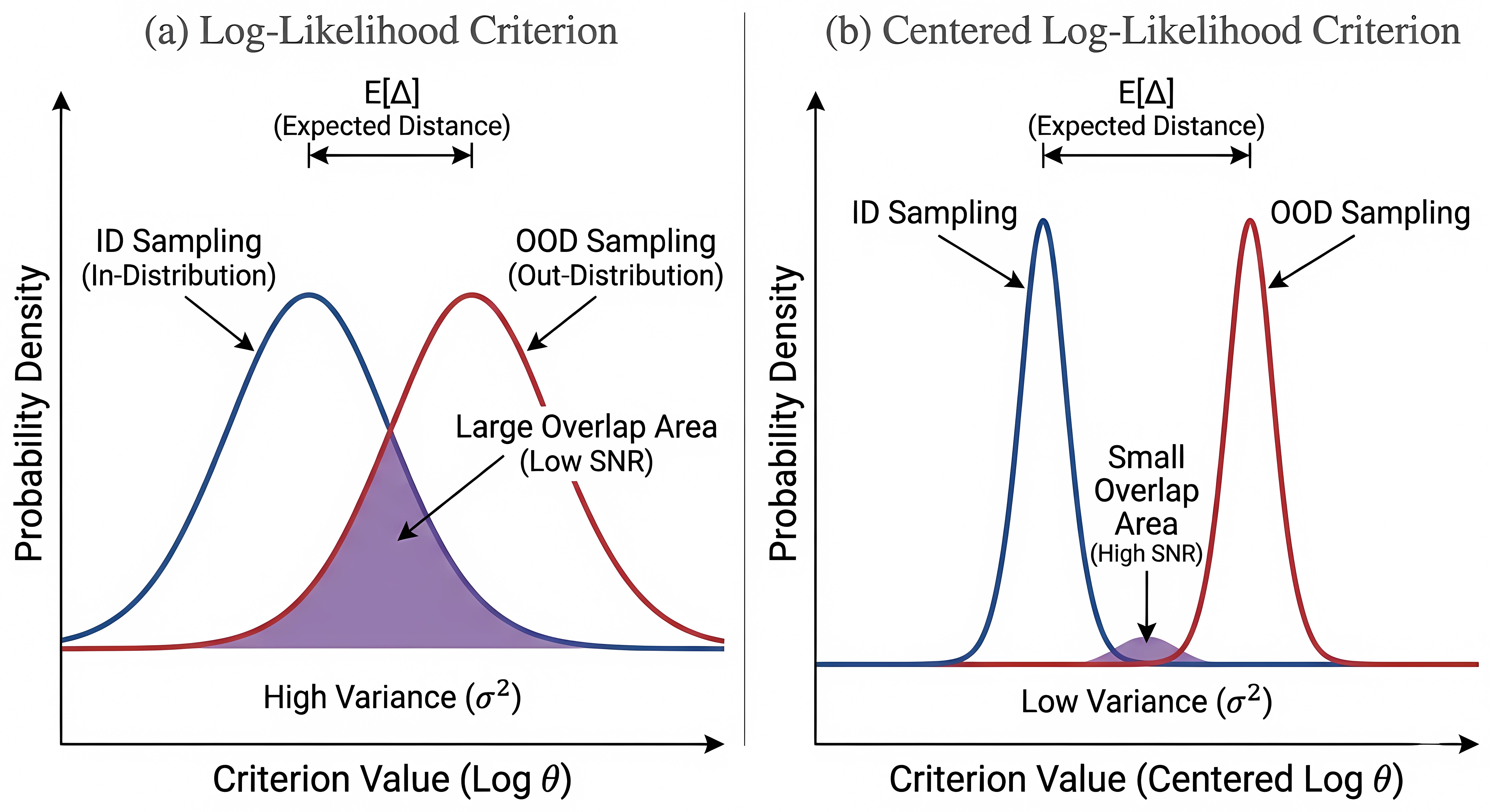}}
    \caption{
      Intuitive understanding of SNR.
    }
\label{fig:snr_explain}
  \end{center}
  \vspace{-1.0 cm}
\end{figure}

\begin{figure*}[ht]
    \vspace{-0.2cm}
  \begin{center}
    \centerline{\includegraphics[width=2.0\columnwidth, trim=25 0 35 0, clip]{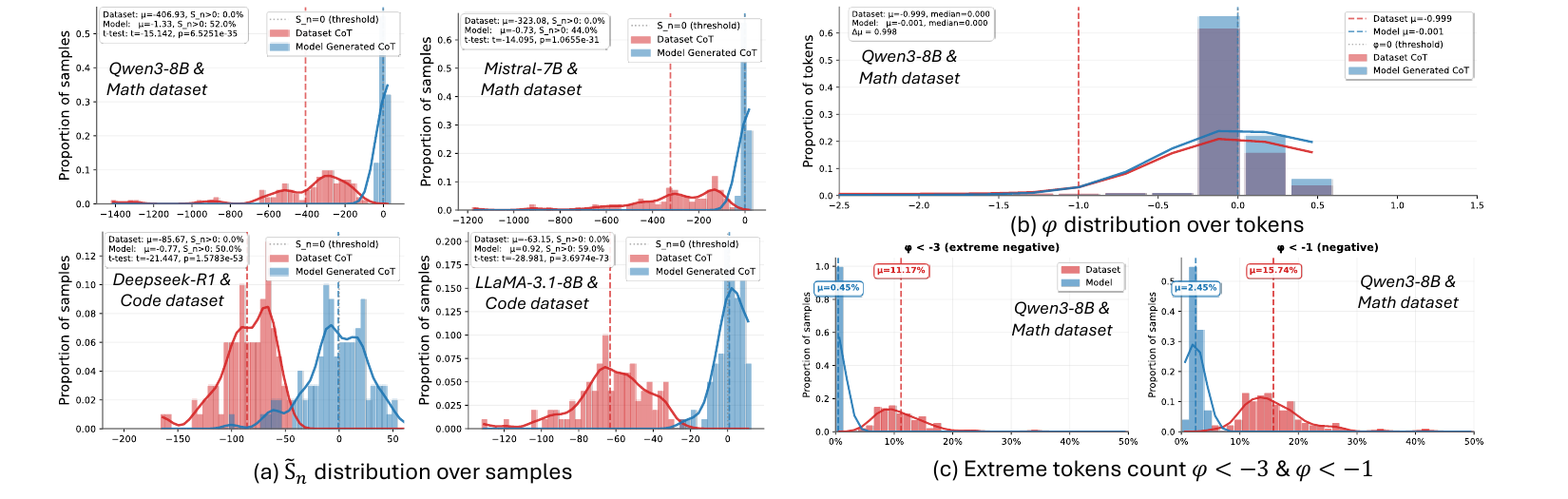}}
    \caption{
      Empirical validation of the theory with multiple advanced LLMs and data types. More results in Appendix \ref{sec:ddt-very}.
    }
    \label{fig:phi-empirical}
  \end{center}
  \vspace{-1 cm}
\end{figure*}

We provide an intuitive explanation in Figure \ref{fig:snr_explain}. Simply distinguishing the means of two classes for a given statistic is not sufficient. The SNR is negatively correlated with the overlap between the two distributions. The larger the SNR, the better the distinction. We note that for the ideal case of normal distributions, the following conclusion from \cite{green1966signal} holds:
\begin{remark}
    The overlap area between two normal distributions with equal variances is given by $S_{overlap}=2\cdot\Phi(-\sqrt{\text{SNR}/2})$ where $\Phi$ is the standard normal cumulative distribution function. 
\end{remark}

Based on the above information, this work proves the following content, all proofs can be found in Appendix \ref{sec:theory}.

\begin{theorem}
  \label{thm:best-SNR}
Given the context of using a statistic $S$ to distinguish $\mathcal{H}_0$ from $\mathcal{H}_1$, Consider the operator family
$\mathcal{J} = \left\{\log p(x) + \mathcal{C}[p] | p\in\Omega, \mathcal{C}: \Omega \mapsto R\right\}$,
where $p$ is a probability density and $\mathcal{C}$ is a real-valued functional on the space of densities $\Omega$ (e.g., entropy, mean). We have:
\begin{equation*}
    \text{argmax}_{S\in \mathcal{J}} \quad  \text{SNR}[S] = \log p(x) + H[p]
\end{equation*}
\end{theorem}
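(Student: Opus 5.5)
The plan is to reduce the optimization over $\mathcal{J}$ to a one-dimensional problem in the scalar $c \coloneqq \mathcal{C}[p]$. The density $p$ is the model's own next-token distribution $p_t$, since that is the only quantity available to the scorer under \cref{ass:hypotheses}, so $p$ is fixed. For fixed $p$, every member of $\mathcal{J}$ has the form $S_c(x) = \log p(x) + c$ with $c\in\mathbb{R}$ a deterministic constant: it depends on $p$, not on the observed token $x$. I would first compute the two moments that enter the SNR under $\mathcal{H}_0$, namely
\begin{equation*}
\mathbb{E}[S_c \mid \mathcal{H}_0] = -H[p] + c, \qquad \mathrm{Var}(S_c\mid\mathcal{H}_0) = \mathrm{Var}_{X\sim p}[\log p(X)],
\end{equation*}
and under $\mathcal{H}_1$, $\mathbb{E}[S_c\mid\mathcal{H}_1] = -\mathrm{CE}(q,p) + c$, where $\mathrm{CE}(q,p) = -\mathbb{E}_{X\sim q}[\log p(X)]$ is the cross-entropy.

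Taken literally, the ratio as defined is invariant under the shift $c$: the difference of means is $H[p]-\mathrm{CE}(q,p)$, independent of $c$, and the variance is also independent of $c$. So the literal SNR is flat on $\mathcal{J}$ for fixed $p$. The content of the theorem must therefore come from the linear-threshold decision view the paper announces in the introduction. In that view the scorer is used with a fixed threshold at $0$: a token is declared in-distribution when $S\ge 0$. The noise relevant to the decision is then the spread of $S$ around the decision point $0$, not around its own mean. Equivalently, the denominator should be read as the mean-squared deviation of $S$ under $\mathcal{H}_0$ about the reference threshold, $\mathbb{E}[S^2\mid\mathcal{H}_0]$. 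I would make this reading explicit as the working definition for the argmax, noting that it coincides with the stated $\mathrm{Var}(S\mid\mathcal{H}_0)$ exactly when the statistic is centered under $\mathcal{H}_0$.

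With this reading, the bias--variance decomposition gives
\begin{equation*}
\mathbb{E}[S_c^2\mid\mathcal{H}_0] = \mathrm{Var}_{X\sim p}[\log p(X)] + \bigl(c - H[p]\bigr)^2 .
\end{equation*}
The numerator stays $c$-independent. Hence the SNR is a strictly decreasing function of $(c-H[p])^2$ and is uniquely maximized at $c = H[p]$, i.e.\ at $S = \log p(x) + H[p]$. At this point the denominator collapses to the true variance and the SNR coincides with the one stated in the theorem. A short final remark would handle the degenerate case where $p$ is uniform on its support: there the variance vanishes and every $c$ is a maximizer, and the claimed $S$ is still among the maximizers.

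The main obstacle is the step above that pins down the meaning of the argmax, since the literal ratio is shift-invariant. I would first try to justify the threshold-at-zero reading directly from signal detection theory: for an equal-variance Gaussian approximation, the false-alarm rate at threshold $0$ is $\Phi\bigl(-\mathbb{E}[S\mid\mathcal{H}_0]/\sigma\bigr)$, and a balanced, calibrated test requires $\mathbb{E}[S\mid\mathcal{H}_0]=0$. If that is too informal, the fallback is to state the result as: among all $S\in\mathcal{J}$ attaining the maximal SNR, $\log p(x)+H[p]$ is the unique one centered under $\mathcal{H}_0$, so it is the one usable with a data-independent threshold.
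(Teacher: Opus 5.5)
Your proposal does not prove the stated theorem. It proves a variant with a different denominator, and the step that goes wrong is the first one: freezing $p$.

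In the paper the SNR is a global quantity. Means and variance are taken jointly over the context $c\sim\mathcal{D}$ and the token, so $p=p(\cdot\mid c)$ ranges over $\Omega$. Consequently $\mathcal{C}[p_c]$ is a context-dependent random variable, not a constant shift. This is why the family is indexed by functionals on $\Omega$; for one fixed $p$ it would collapse to $\{\log p(x)+c : c\in\mathbb{R}\}$.

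Having frozen $p$, you correctly observe that the ratio is flat on $\mathcal{J}$. You then replace $\mathrm{Var}(S\mid\mathcal{H}_0)$ by $\mathbb{E}[S^2\mid\mathcal{H}_0]$. Your bias--variance computation for that new objective is correct, but it maximizes a quantity the theorem does not refer to. Your fallback (CLL as the unique centered maximizer of a flat function) is again a different claim.

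The missing idea is the law of total variance over contexts. Take $S=\log p_c(x)+\mathcal{C}[p_c]$. Under $\mathcal{H}_0$ the conditional mean given $c$ is $\mathcal{C}[p_c]-H(c)$. The conditional variance $\sigma_\epsilon^2(c)=\mathrm{Var}_{x\sim p_c}[\log p_c(x)]$ does not involve $\mathcal{C}$. Therefore
\[
\mathrm{Var}(S\mid\mathcal{H}_0)=\mathrm{Var}_c\bigl(\mathcal{C}[p_c]-H(c)\bigr)+\mathbb{E}_c\bigl[\sigma_\epsilon^2(c)\bigr]\ \ge\ \mathbb{E}_c\bigl[\sigma_\epsilon^2(c)\bigr],
\]
with equality if and only if $\mathcal{C}[p_c]-H(c)$ is almost surely constant in $c$.

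The numerator is the same for every member of $\mathcal{J}$. The reason is that $\mathcal{C}[p_c]$ depends only on the context, whose distribution is shared by both hypotheses, so it cancels in the difference of means. Hence the SNR with the literal variance denominator is maximized by $\log p(x)+H[p]$, up to an additive constant that no ratio of this form can detect. The appendix carries out exactly this decomposition for $\mathcal{C}=H$ versus $\mathcal{C}\equiv 0$.

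Your threshold intuition is the right one, but it enters here rather than through a redefinition. One threshold is shared across contexts, and the between-context spread $\mathrm{Var}_c(\mathcal{C}[p_c]-H(c))$ of the per-context means is precisely the noise that centering by $H$ removes.
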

\vspace{-0.2cm}
This result takes the form of the Centralized Log-Likelihood (CLL) criterion, while $\log p(x)$ is log-likelihood (LL). We also provide an explanation for the necessity of using $\log p(x) + \mathcal{C}$ family, rather than other families like $p(x) + \mathcal{C}$, in Appendix \ref{app:log_optimality}.
\begin{corollary}
$\text{SNR}[S_{CLL}] \geq \text{SNR}[S_{LL}]$, and when an LLM employs any decoding method other than greedy decoding, $\text{SNR}[S_{CLL}] > \text{SNR}[S_{LL}]$ holds strictly.
\end{corollary}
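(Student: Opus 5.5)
Write $S_{\mathcal{C}} = \log p(X) + \mathcal{C}[p]$ for a member of $\mathcal{J}$. The plan is to compute the SNR of $S_{\mathcal{C}}$ explicitly, use Theorem~\ref{thm:best-SNR} to settle the weak inequality, and then characterize exactly when LL and CLL coincide.

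\textbf{SNR of a family member.} The added term $\mathcal{C}[p]$ is deterministic given the context, so it drops out of the variance:
\[
\mathrm{Var}(S_{\mathcal{C}}\mid\mathcal{H}_0)=\mathrm{Var}_{X\sim p}(\log p(X)) .
\]
The difference of conditional means also does not depend on $\mathcal{C}$. The paper's statement that CLL is optimal, which makes $\mathcal{C}$ matter, should therefore be read per context, with aggregation over contexts or tokens. In the sequential setting the scores are $\sum_t \bigl(\log p_t(x_t) + \mathcal{C}[p_t]\bigr)$, and the variance is taken across mixed contexts. Writing $\mu_t = \mathbb{E}_{p_t}[\log p_t] = -H[p_t]$, a law-of-total-variance decomposition gives
\[
\mathrm{Var}(S\mid\mathcal{H}_0)=\mathbb{E}_t\!\left[\mathrm{Var}_{p_t}(\log p_t)\right]+\mathrm{Var}_t\!\left(\mathcal{C}[p_t]-H[p_t]\right).
\]

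\textbf{Weak inequality.} The first variance term is fixed, and the second is minimized (to zero) exactly when $\mathcal{C}[p_t]=H[p_t]$ up to a constant. This is the content of Theorem~\ref{thm:best-SNR}, which I will assume. LL is the member with $\mathcal{C}\equiv 0$, so it lies in $\mathcal{J}$. Since CLL is an argmax over $\mathcal{J}$, $\mathrm{SNR}[S_{CLL}]\ge \mathrm{SNR}[S_{LL}]$ follows immediately.

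\textbf{Strict inequality.} The gap is governed by the extra variance LL carries, namely $\mathrm{Var}_t(H[p_t])$, possibly interacting with the numerator. I will argue in three steps.
\begin{itemize}
\item Under greedy decoding the realized per-step distributions are effectively point masses. Then $H[p_t]=0$ for all $t$, the centering vanishes, and the two criteria coincide, so equality holds.
\item Under any stochastic decoding (temperature $>0$, top-$k$ with $k>1$, nucleus), the entropies $H[p_t]$ are positive. They vary across contexts, since some steps are near-deterministic and others are not, so $\mathrm{Var}_t(H[p_t])>0$ and LL's denominator is strictly larger.
\item I also need the numerator not to compensate. I would show that the mean shift $\mathbb{E}[S\mid\mathcal{H}_1]-\mathbb{E}[S\mid\mathcal{H}_0]=\mathbb{E}_t\bigl[\mathbb{E}_{q_t}\log p_t + H[p_t]\bigr]$ (a negative cross-entropy gap) is unchanged by the centering. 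This holds because the added term $H[p_t]$ appears identically under both hypotheses when the context distribution is shared. Hence only the denominator differs, and strictness follows.
\end{itemize}

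\textbf{Main obstacle.} The hardest step is making ``non-greedy $\Rightarrow \mathrm{Var}_t(H[p_t])>0$'' rigorous. Nonzero entropy alone does not force variation. I would either state it as a mild non-degeneracy condition (entropy is not constant across contexts), which is always satisfied in practice, or argue that EOS and forced tokens create near-zero-entropy steps alongside high-entropy ones.
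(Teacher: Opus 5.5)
Your argument is essentially the paper's own (Appendix~\ref{app:snr_derivation}). The law of total variance shows that centering removes the between-context term $\mathrm{Var}(H[p_t])$ and leaves the mean shift unchanged under a shared context distribution; strictness then rests on that term being positive, which the paper likewise just asserts (``diverse complexity'') rather than deriving from non-greedy decoding.

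Two small fixes are needed. First, strictness also requires the signal $\mathbb{E}_t\bigl[\mathbb{E}_{q_t}\log p_t+H[p_t]\bigr]$ to be nonzero, and your (correct) formula does not guarantee this even when $q_t\neq p_t$; the paper gets it for free only by writing the shift as $-\mathbb{E}[D_{\mathrm{KL}}]$ and silently dropping $H[p_t]-H[q_t]$. Second, your greedy bullet is a degenerate case the statement does not need, since point masses make $\mathrm{Var}(\cdot\mid\mathcal{H}_0)=0$ and both SNRs undefined.
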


\begin{definition} [SNR-Optimal Distribution Criterion]
We define the SNR-optimal criterion for LLM at position $t$ as:
  \label{def:inj}
  \begin{equation}
  \varphi_t(\cdot) \coloneqq \log p_t(\cdot) + H[p_t].
\end{equation}
For practical engineering applications, we propose a bounded version of this criterion: $\tilde{\varphi} = \text{clip}(\varphi,-B,B)$.
\end{definition}

We also provide an intuitive explanation of why $\varphi$ can effectively determine whether data originates from the model's own distribution. As illustrated in Figure \ref{fig:phiexplain}, when data is sampled from the model's distribution, it is highly likely to be sampled near the probability peaks. If these probability peaks are small, the probability distribution naturally flattens due to the requirement that the integral of the probability is one, resulting in increased entropy.

\begin{figure}[H]
  \vspace{-0.1 cm}
  \begin{center}
    \centerline{\includegraphics[width=0.8\columnwidth]{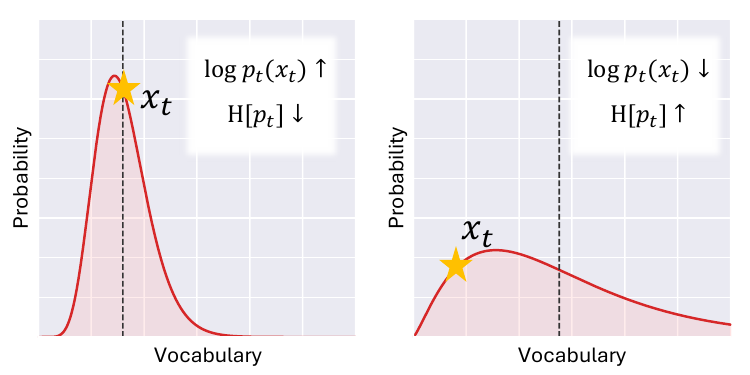}}
    \caption{
      A simple illustration of why the criterion keeps stably high when data is sampled with model's distribution. The yellow star represents the temperature sampled token.
    }
    \label{fig:phiexplain}
  \end{center}
  \vspace{-1 cm}
\end{figure}

We also discuss the sequential statistical properties.
Define the cumulative score trajectory for a sequence of length $L$ as $S_k = \sum_{t=1}^k \varphi_t$. We identify the stochastic behaviors of $S_k$ under the two hypotheses.

\begin{proposition}[Martingale Property under $\mathcal{H}_0$]
  \label{prop:martingale}
  Under the in-distribution hypothesis $\mathcal{H}_0$, the sequence of cumulative scores $\{S_k\}_{k \ge 0}$ constitutes a zero-mean martingale with respect to the filtration $\mathcal{F}_{k-1}$ generated by the history. That is, for all $k \ge 1$:
  \begin{equation*}
    \mathbb{E}[S_k \mid \mathcal{F}_{k-1}, \mathcal{H}_0] = S_{k-1}.
  \end{equation*}
\end{proposition}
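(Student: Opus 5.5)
The argument reduces to a single one-step identity: under $\mathcal{H}_0$, the increment $\varphi_k$ has zero conditional mean given the history. Fix the filtration $\mathcal{F}_{k-1} = \sigma(Q, x_1,\dots,x_{k-1})$ generated by the prompt and the previously emitted tokens. Since $S_k = S_{k-1} + \varphi_k$, it suffices to show
\begin{equation*}
\mathbb{E}[\varphi_k \mid \mathcal{F}_{k-1}, \mathcal{H}_0] = 0 .
\end{equation*}

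The first step is to record measurability and integrability. The context $c_k = (Q, x_{<k})$ is $\mathcal{F}_{k-1}$-measurable. Hence the conditional distribution $p_k = p_\theta(\cdot\mid c_k)$ and its entropy $H[p_k]$ are $\mathcal{F}_{k-1}$-measurable as well. In particular, $S_{k-1}$ is $\mathcal{F}_{k-1}$-measurable, so it passes through the conditional expectation. For integrability, $\mathcal{V}$ is finite, so $H[p_k] \le \log|\mathcal{V}|$. Under $\mathcal{H}_0$ we sample $x_k \sim p_k$, so only tokens with $p_k(v) > 0$ occur. The conditional expectation of $|\log p_k(x_k)|$ is $\sum_v p_k(v)\,|\log p_k(v)| = H[p_k]$, which is finite. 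Iterating over the finitely many steps then gives $\mathbb{E}|S_k| < \infty$, using the tower property and the same bound at each step.

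The second step is the key computation. Under $\mathcal{H}_0$ the conditional law of $x_k$ given $\mathcal{F}_{k-1}$ is exactly $p_k$. Therefore
\begin{equation*}
\mathbb{E}[\log p_k(x_k)\mid\mathcal{F}_{k-1}] = \sum_{v} p_k(v)\log p_k(v) = -H[p_k],
\end{equation*}
by the definition of Shannon entropy given in the Notations. Adding the $\mathcal{F}_{k-1}$-measurable term $H[p_k]$ gives $\mathbb{E}[\varphi_k\mid\mathcal{F}_{k-1}] = 0$, and hence $\mathbb{E}[S_k\mid\mathcal{F}_{k-1}] = S_{k-1}$. Taking expectations and using $S_0 = 0$ yields the zero mean by induction.

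No step is genuinely hard. The one point needing care is how $\mathcal{H}_0$ is interpreted at the sequence level. It must mean that each token is drawn from the same $p_k$ that appears in $\varphi_k$, namely the model's raw conditional distribution. If decoding instead uses temperature or top-$p$ sampling, the expectation must be taken under the actual sampling law. In that case the centering term would have to be the cross-entropy rather than $H[p_k]$, and the martingale identity would fail. I would state this identification explicitly.

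I would also note that the result does not hold for the bounded version $\tilde\varphi$, because clipping breaks the exact cancellation.
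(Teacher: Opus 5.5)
Your proposal is correct and matches the paper's proof: both reduce to $\mathbb{E}[\varphi_k \mid \mathcal{F}_{k-1}, \mathcal{H}_0] = \sum_v p_k(v)\log p_k(v) + H[p_k] = 0$ and then use that $S_{k-1}$ is $\mathcal{F}_{k-1}$-measurable. Your additions are accurate points the paper glosses over: the explicit integrability bound (the paper states $\mathbb{E}|S_n|<\infty$ as a requirement but never checks it), the caveat about temperature sampling, and the observation that clipping to $\tilde\varphi$ breaks the exact cancellation.
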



\begin{proposition}[Negative Drift under $\mathcal{H}_1$]
  \label{prop:drift}
  Under the hypothesis $\mathcal{H}_1$, assuming the tokens are generated by $q_t$, the sequence $S_k$ exhibits a systematic linear negative drift determined by the Kullback-Leibler divergence:
  \begin{equation*}
    \mathbb{E}[\varphi_t \mid \mathcal{H}_1] = -D_{\mathrm{KL}}(q_t \| p_t) \le 0.
  \end{equation*}
\end{proposition}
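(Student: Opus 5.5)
The plan is a direct computation of the conditional expectation of $\varphi_t$ under $q_t$, followed by recognizing the result as a KL divergence. Fix a position $t$ and condition on the context $c_t=(Q,x_{<t})$, that is, on $\mathcal{F}_{t-1}$. Given $c_t$, both $p_t$ and $H[p_t]$ are deterministic, so under $\mathcal{H}_1$ the only randomness in $\varphi_t(x_t)=\log p_t(x_t)+H[p_t]$ comes from $x_t\sim q_t$. Linearity then gives
\begin{equation*}
  \mathbb{E}[\varphi_t \mid \mathcal{F}_{t-1},\mathcal{H}_1] = \sum_{v\in\mathcal{V}} q_t(v)\log p_t(v) + H[p_t] = -H(q_t,p_t) + H[p_t],
\end{equation*}
where $H(q_t,p_t)$ denotes the cross-entropy. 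The decomposition $H(q_t,p_t)=H[q_t]+D_{\mathrm{KL}}(q_t\|p_t)$ then yields
\begin{equation*}
  \mathbb{E}[\varphi_t\mid\mathcal{F}_{t-1},\mathcal{H}_1] = -D_{\mathrm{KL}}(q_t\|p_t) + \bigl(H[p_t]-H[q_t]\bigr).
\end{equation*}
The claimed equality therefore holds exactly when the entropy mismatch $H[p_t]-H[q_t]$ vanishes, i.e.\ the external mechanism has the same per-step uncertainty as the model.

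\textbf{Main obstacle.} As literally stated, the identity does not hold for arbitrary $q_t\neq p_t$. One can check this with $q_t$ a point mass on the argmax of $p_t$: the KL divergence is then $-\log \max_v p_t(v)$, while the expectation of $\varphi_t$ is $\log\max_v p_t(v)+H[p_t]$, and these differ in general. I would handle this by making the implicit assumption explicit: either $H[q_t]=H[p_t]$, or the drift is measured relative to the entropy-matched baseline $\log p_t(x_t)+H[q_t]$. In the latter case the identity $\mathbb{E}_{q_t}[\log p_t(X)+H[q_t]]=-D_{\mathrm{KL}}(q_t\|p_t)$ is exact. The inequality $\le 0$ follows from Gibbs' inequality (Jensen applied to $-\log$), with equality iff $q_t=p_t$, which $\mathcal{H}_1$ excludes.

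To obtain the linear drift of $S_k$, I would sum the conditional expectations and apply the tower property:
\begin{equation*}
  \mathbb{E}[S_k\mid\mathcal{H}_1] = -\sum_{t=1}^k \mathbb{E}\bigl[D_{\mathrm{KL}}(q_t\|p_t)\bigr].
\end{equation*}
This is nonincreasing in $k$. It decreases at least linearly whenever the per-step divergence is bounded below by some $\delta>0$, which contrasts with the zero-mean martingale of Proposition~\ref{prop:martingale}.

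Finiteness requires that $\mathrm{supp}(q_t)\subseteq\mathrm{supp}(p_t)$. Otherwise both sides equal $-\infty$, which is consistent with the statement, and the clipped $\tilde\varphi$ keeps everything finite in practice.
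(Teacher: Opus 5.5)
Your computation matches the paper's own proof. The appendix also arrives at $\mathbb{E}[\varphi_t\mid\mathcal{H}_1]=-D_{\mathrm{KL}}(q_t\|p_t)+\bigl(H(p_t)-H(q_t)\bigr)$ and so needs an extra hypothesis as well. The paper assumes $D_{\mathrm{KL}}(q_t\|p_t)>H(p_t)-H(q_t)$ and concludes only strict negativity, whereas you impose entropy matching to keep the exact identity. Your point-mass example is worth keeping: since $-\log p_t(v)\ge-\log\max_u p_t(u)$ for every $v$, we get $\log\max_v p_t(v)+H[p_t]\ge 0$, so without some such assumption even the sign claim $\le 0$ fails, not just the equality.
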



\begin{proposition}[Error Bound of $S_n$]
The probability that an in-domain sequence of length $L$ is misclassified (i.e., $\tilde{S}_L=\sum_{t=1}^{L} \tilde{\varphi_t} < -\gamma$) decays exponentially\cite{freedman1975tail}:
\begin{equation*}
  \mathbb{P}(\tilde{S}_L \le -\gamma \mid \mathcal{H}_0) \le \exp \left( - \frac{\gamma^2}{2(V_L + B\gamma/3)} \right),
\end{equation*}
where $V_L$ is the cumulative conditional variance. 
\end{proposition}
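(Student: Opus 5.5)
The plan is to reduce the claim to Freedman's inequality \cite{freedman1975tail} for martingales with increments bounded on one side, using Proposition~\ref{prop:martingale} as the starting point. Freedman's inequality states the following. Let $D_t$ be a martingale difference sequence with respect to $\mathcal{F}_t$, with $D_t \le b$ almost surely, and let $W_L=\sum_{t\le L}\mathbb{E}[D_t^2\mid\mathcal{F}_{t-1}]$. Then $\mathbb{P}(\sum_{t\le L} D_t \ge \gamma,\ W_L\le v)\le \exp(-\gamma^2/(2(v+b\gamma/3)))$. Since we want a lower tail of $\tilde S_L$, we will apply it to $-\tilde\varphi_t$, suitably centered. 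We read $V_L$ as a deterministic bound on the cumulative conditional variance $W_L$, or equivalently we state the result on the event $\{W_L\le V_L\}$.

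\textbf{Step 1 (effect of clipping on the drift).} Under $\mathcal{H}_0$, Proposition~\ref{prop:martingale} gives $\mathbb{E}[\varphi_t\mid\mathcal{F}_{t-1}]=0$. Clipping can destroy this, so we first control it. Observe that $\varphi_t=\log p_t(x_t)+H[p_t]\le H[p_t]\le \log|\mathcal{V}|$, so $\varphi_t$ is bounded above. If $B\ge\log|\mathcal{V}|$, the upper clip is never active and $\tilde\varphi_t=\max(\varphi_t,-B)\ge\varphi_t$. Hence the conditional drift satisfies
\[
\mu_t:=\mathbb{E}[\tilde\varphi_t\mid\mathcal{F}_{t-1}]=\mathbb{E}[(-B-\varphi_t)_+\mid\mathcal{F}_{t-1}]\ge 0 .
\]
Writing $\tilde S_L=M_L+\sum_t\mu_t$ with $M_L=\sum_t D_t$ and $D_t=\tilde\varphi_t-\mu_t$, we get $\{\tilde S_L\le-\gamma\}\subseteq\{M_L\le-\gamma\}$. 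In words, clipping only makes misclassification less likely.

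\textbf{Step 2 (bounded increments and Freedman).} The differences $-D_t=\mu_t-\tilde\varphi_t$ are conditionally mean zero and satisfy $-D_t\le \mu_t+B$. Their conditional variances are $\mathrm{Var}(\tilde\varphi_t\mid\mathcal{F}_{t-1})$, and we take $V_L$ to be their sum. Applying Freedman to $-M_L$ with one-sided bound $b$ gives the displayed exponential bound.

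\textbf{Main obstacle.} The one delicate point is the constant in front of $\gamma$. The increment bound is $B+\sup_t\mu_t$ rather than exactly $B$. I would handle this in one of two ways:
\begin{itemize}
\item Note that $\mu_t=\mathbb{E}[(-B-\varphi_t)_+]$ is the expected excess of an exponentially rare tail. For $x\sim p_t$ we have $\mathbb{P}(\log p_t(x)\le -s)\le |\mathcal{V}|e^{-s}$, so $\mu_t$ is negligible for moderate $B$. It can then be absorbed by slightly enlarging $B$, which is harmless for the exponential rate.
\item Alternatively, take the clip asymmetrically so that $\tilde\varphi_t\in[-B,\log|\mathcal{V}|]$ and recentre. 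This gives increments bounded by $B$ up to this vanishing term.
\end{itemize}
Either route yields the stated bound, with $B$ interpreted as the one-sided increment bound of the centered clipped scores.
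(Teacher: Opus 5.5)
Your core tool is the same as the paper's: Freedman's inequality applied to the negated score sum, using the zero-mean property from the martingale proposition. The difference is in how clipping is handled. The paper's appendix proof ignores it: it \emph{assumes} the raw score satisfies $|\varphi_t|\le c$, so $\tilde S_L=S_L$ is an exact zero-mean martingale, and then applies Freedman to $-S_L$ with increment bound $c$ (playing the role of $B$) and $V_L=\sum_t\sigma_t^2$.

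Your Step~1 actually works with $\tilde\varphi_t$, and your requirement that the upper clip be inactive is needed, not cosmetic. (You use $B\ge\log|\mathcal{V}|$; $B\ge\sup_t H[p_t]$ suffices.) Here is why. Take a next-token distribution with one token of mass $0.3$ and the rest spread uniformly over $1.5\times10^5$ tokens. Its raw scores are $\approx 7.75$ and $\approx -3.32$. Clipping at $B=5$ then gives conditional drift $\approx -0.82$ under $\mathcal{H}_0$. Repeating this at every step makes $\mathbb{P}(\tilde S_L\le -0.4L\mid\mathcal{H}_0)\to 1$, while the right-hand side decays exponentially in $L$. Reading $V_L$ as a bound on the predictable quadratic variation, on the event $\{W_L\le V_L\}$, is also the correct form of Freedman.

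The gap is in your last step. After recentring, your increments are bounded by $B+\mu_t$, not $B$, and neither proposed fix gives the displayed inequality with $B$ equal to the clipping level. Enlarging $B$, or reinterpreting $B$ as the increment bound, changes the statement. The asymmetric-clip variant recentres again and brings $\mu_t$ back.

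The missing idea is not to recentre at all. The exponential-supermartingale proof of Freedman's inequality only needs $X_t\le b$ and $\mathbb{E}[X_t\mid\mathcal{F}_{t-1}]\le 0$:
\begin{itemize}
\item For $\eta>0$ and $x\le b$, one has $e^{\eta x}\le 1+\eta x+g(\eta)x^2$ with $g(\eta)=(e^{\eta b}-1-\eta b)/b^2$.
\item The linear term has nonpositive conditional mean, so $\exp\bigl(\eta\sum_{s\le t}X_s-g(\eta)W_t\bigr)$ is a supermartingale, where $W_t=\sum_{s\le t}\mathbb{E}[X_s^2\mid\mathcal{F}_{s-1}]$.
\end{itemize}
Apply this to $X_t=-\tilde\varphi_t$. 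By your Step~1 its conditional mean is $-\mu_t\le 0$, and $X_t\le B$ holds exactly. This gives $\mathbb{P}(\tilde S_L\le-\gamma,\ W_L\le V_L\mid\mathcal{H}_0)\le\exp\bigl(-\gamma^2/(2(V_L+B\gamma/3))\bigr)$ with the exact constant $B$.

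For the variance term, clipping moves values toward $0$, so $|\tilde\varphi_t|\le|\varphi_t|$. Together with $\mathbb{E}[\varphi_t\mid\mathcal{F}_{t-1}]=0$ this gives $\mathbb{E}[\tilde\varphi_t^2\mid\mathcal{F}_{t-1}]\le\mathrm{Var}(\varphi_t\mid\mathcal{F}_{t-1})$. So $V_L$ can be taken as the cumulative conditional variance of the raw scores, which matches the paper's $\sum_t\sigma_t^2$. Your tail estimate on $\mu_t$ is then no longer needed.
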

\paragraph{Empirical validation} We conducted validation experiments across multiple models and various types of datasets, as shown in Figure \ref{fig:phi-empirical}. For the given problems in the dataset, we recorded relevant metrics for the dataset's responses (in red) and the model's self-generated responses (in blue).

At the sample level, the $\tilde{S}_n$ metric effectively differentiates between in-distribution and out-of-distribution (OOD) data. By examining the distribution of $\varphi$ at the token level, we gained the following insights: whether a piece of data aligns with the model's distribution is not determined by the overall distribution of tokens, as depicted in Figure \ref{fig:phi-empirical}(b), but rather by the impact of a small number of outliers on the total $\tilde{S}_n$, as shown in Figure \ref{fig:phi-empirical}(c). This aligns with the principles of natural language: even if the language styles of two individuals or models differ greatly, the majority of token usage is determined by the need to maintain grammatical coherence. Style differences manifest in critical elements such as conjunctions and transitional words, which involve only a small number of tokens.


\section{Applications}

\subsection{In-Distribution Finetuning}

Building upon the sequential properties, 
we propose \textbf{In-Distribution Finetuning (IDFT)} to address the inherent pathologies of standard SFT.


Standard SFT operates on the assumption that every token in the training set represents an absolute ground truth. Consequently, its objective ($\mathcal{L} = -\log p_t$) imposes severe penalties on prediction errors: as the probability $p_t \to 0$, the gradient magnitude $\propto 1/p_t$ explodes towards infinity. 
This indiscriminate penalization mechanism is perilous. When the training data contains noise (e.g., hallucinations, labeling errors) or samples significantly beyond the model's current capabilities, SFT induces the model to overfit these patterns via drastic parameter updates. This aggressive fitting disrupts the model's pre-trained general structures, serving as a primary cause of \textbf{catastrophic forgetting}. IDFT aims to mitigate this by introducing a self-aware regulation mechanism based on $\varphi_t$ in equation \ref{def:inj}. As illustrated in Figure \ref{fig:IDFT_intro}, while SFT blindly forces gradients on OOD tokens leading to instability, IDFT adaptively suppresses these harmful signals through dynamic modulation.

\begin{figure}[ht]
  \begin{center}
    \centerline{\includegraphics[width=\columnwidth, trim=35 0 0 0, clip]{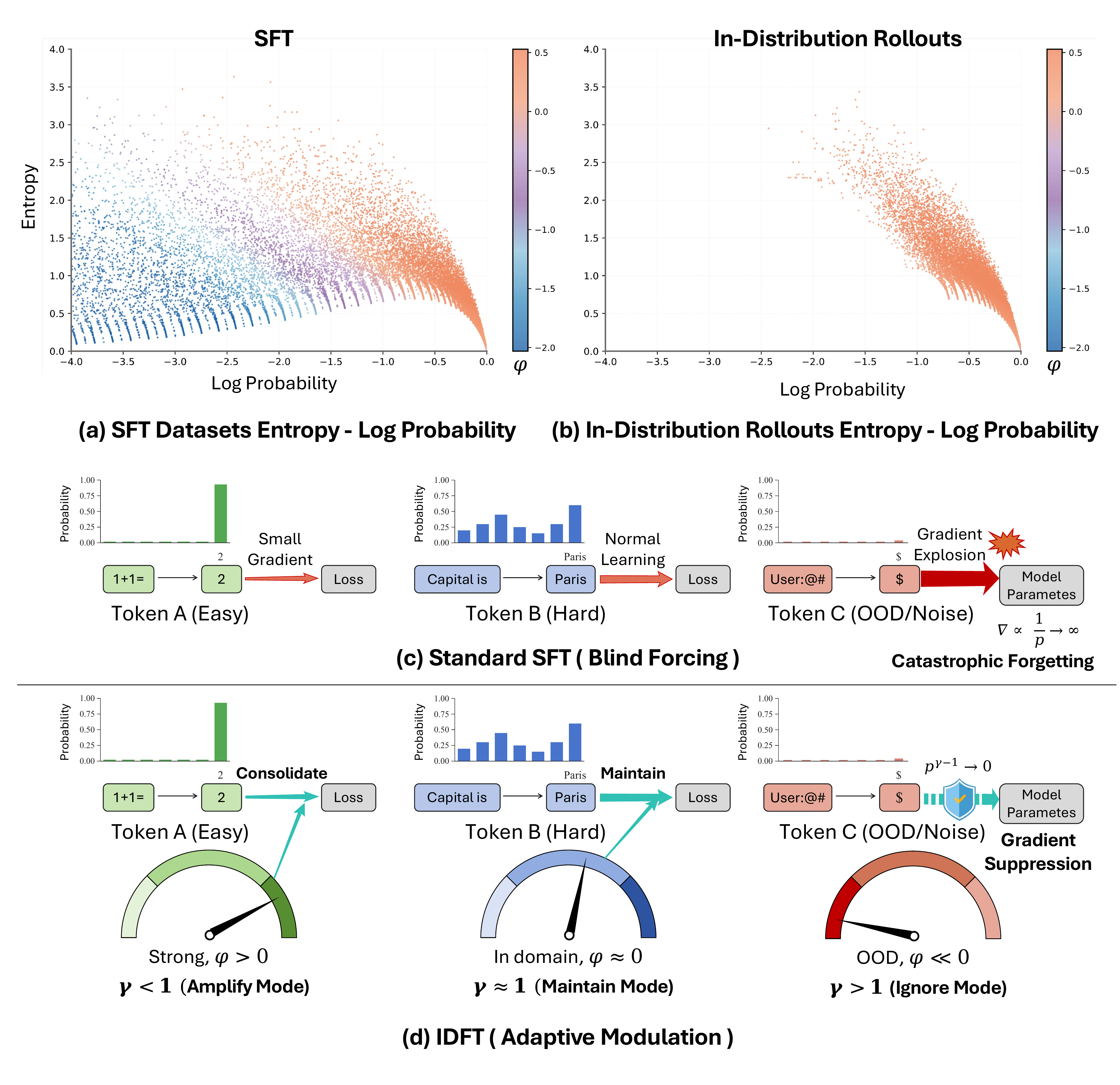}}
    \caption{
      Visualization of the statistic of tokens of dataset and model's rollouts (a \& b), inspired by \cite{diao2026entropy}. The illustration of the core effects of IDFT (c).
    }
    \label{fig:IDFT_intro}
  \end{center}
  \vspace{-1 cm}
\end{figure}

\paragraph{Adaptive Mapping and Objective.}
We construct a modulation coefficient $\gamma_t$ that dynamically adjusts the learning intensity based on the token's statistical status. The mapping is defined as:
\begin{equation}
  \gamma_t(\varphi_t) = \exp(-\varphi_t).
\end{equation}
The exponential form is selected to provide a reciprocal response to the exponential sensitivity of the log-likelihood gradient. Unlike bounded saturating functions such as sigmoid, $\exp(-\varphi_t)$ offers a sufficient dynamic range to neutralize the $1/p_t$ singularity while maintaining a smooth transition centered at the martingale equilibrium where $\varphi_t = 0$. This mapping continuously modulates the loss curvature: for out-of-distribution tokens ($\varphi_t \ll 0$), $\gamma_t > 1$, triggering gradient suppression; for in-domain tokens ($\varphi_t \approx 0$), $\gamma_t \approx 1$, recovering standard probability-weighted learning; and for strong-domain tokens ($\varphi_t > 0$), $\gamma_t < 1$, enhancing knowledge consolidation.
The final IDFT objective is formally defined as the expectation over this adaptive Poly-Log family:
\vspace{-0.5 cm}
\begin{equation}
  \mathcal{L}_{\text{IDFT}}(\theta) = -\frac{1}{L} \sum_{t=1}^L p_t(x_t)^{\gamma_t} \log p_t(x_t).
\end{equation}

\vspace{-0.7 cm}

\begin{figure*}[ht]
  \begin{center}
    \centerline{\includegraphics[width=2\columnwidth, trim=0 130 0 85, clip]{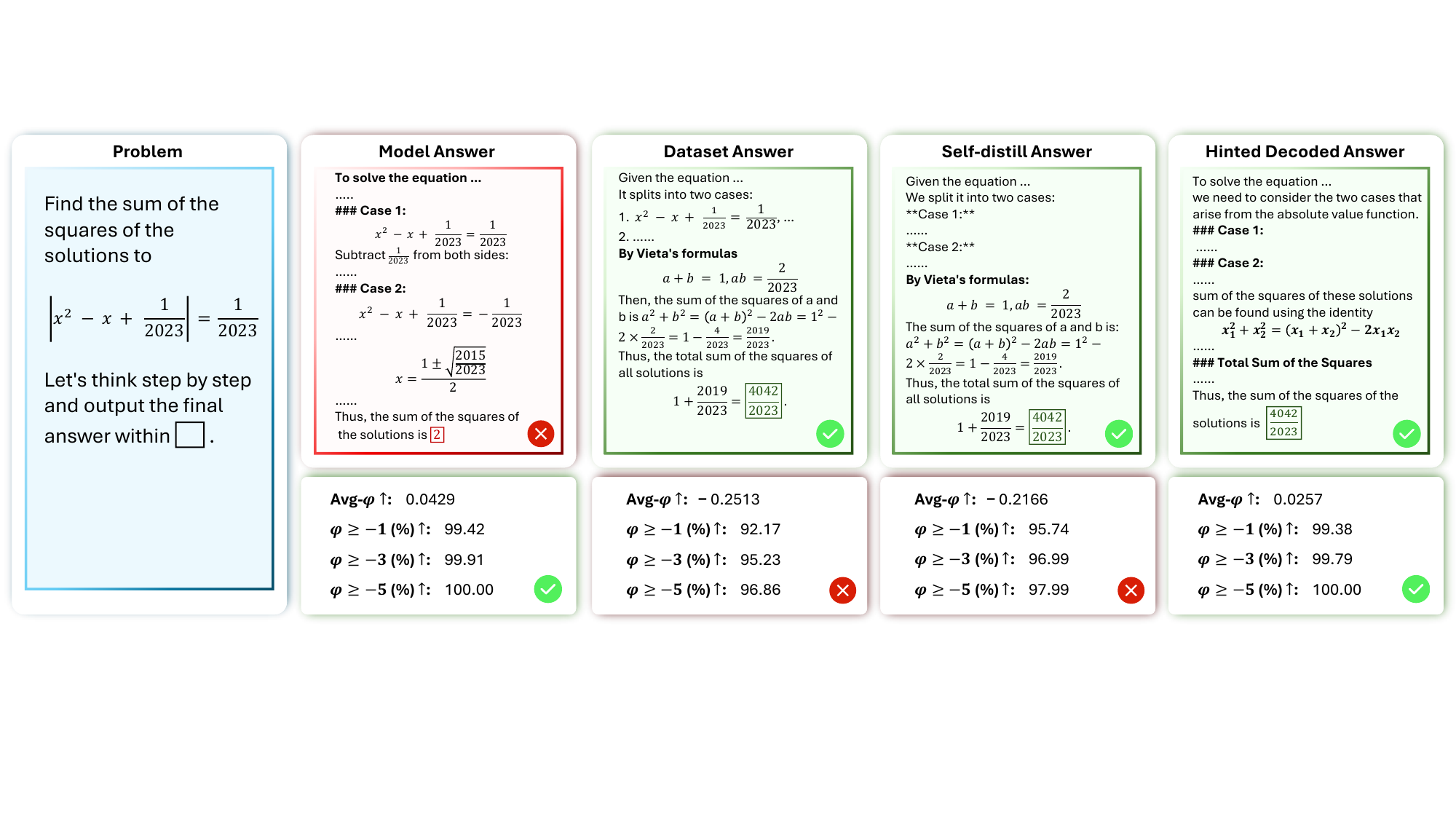}}
    \caption{
      Comparison of different data. Question picked from Numina-Math. Hinted decoded response keeps the model's styling (e.g., markdown style) while remaining the correct answer. More case study can be found in Appendix \ref{sec:case-study}.
    }
    \label{fig:hd-data-cp}
  \end{center}
  \vspace{-1 cm}
\end{figure*}

\paragraph{Gradient Dynamics and Controlled Learning.}
Analytically, the gradient of the proposed loss scales approximately with $p_t^{\gamma_t-1}$. This term acts as an adaptive gate: it vanishes for low-probability tokens when $\gamma_t > 1$ (suppression) and intensifies for high-confidence tokens when $\gamma_t < 1$ (reinforcement).

This mechanism enforces a \textbf{controlled learning schedule} tailored to the model's current capacity. By initially dampening statistically distant samples ($\varphi_t \ll 0$), IDFT protects existing structures from destabilizing gradients until the model adapts and $\varphi_t$ rises. 
While related to methods like DFT \cite{wu2025generalization} that reinforce high-confidence tokens, IDFT differs by utilizing the theoretically optimal CLL statistic $\varphi_t$. Unlike approaches based on raw $p_t$, IDFT decouples intrinsic context difficulty from distributional alignment, ensuring that suppression is driven by true OOD shifts rather than task complexity. This theoretical grounding allows for a more precise preservation of pre-trained knowledge during the assimilation of new domains.


\subsection{Hinted Decoding}

Another application of DDT is the design of a decoding algorithm that transforms responses from general datasets into those that align with the model's distribution, specifically in a step-by-step chain-of-thought manner. Before introducing our algorithm, we first present a prompt engineering based, common and straightforward baseline approach:

\paragraph{Self-distillation baseline.} \cite{shenfeld2026self,hubotter2026reinforcement,zhao2026self} The model takes the question and answer as input, and given a one-shot response example, prompt the model to imitate the response's styling. We put our system prompt in Appendix. \ref{sec:systemp}, and we note this teacher decoding's probability as $p_I(x_T|x_{<T}; Q,A)$.

It is evident that such a baseline can preserve the correctness of the original answers but cannot perfectly replicate the style of the model, as illustrated in Figure \ref{fig:hd-data-cp}. While the model's original generation with probability marked as $p_m(x_T|x_{<T}; Q)$ (or student decoding) aligns with its distribution, it does not always yield correct results. 
This self-distillation baseline is very straightforward and easy to conceive, so it was explored by various companies during the early models' instruction following fine-tuning phase, before RL was widely applied in LLM post-training. However, it did not see widespread use in the post-training nowadays because its performance was unsatisfactory. Both the case studies in Figure \ref{fig:hd-data-cp} and our quantitative analysis in Section \ref{sec:ablation} show that the responses generated by this approach, even if they appear stylistically similar to the human view, are not truly in-domain from a statistical perspective (or from the LLM's perspective). Our goal is to address this issue.

\begin{figure}[h]
  \begin{center}
    \centerline{\includegraphics[width=\columnwidth, trim=0 105 0 0, clip]{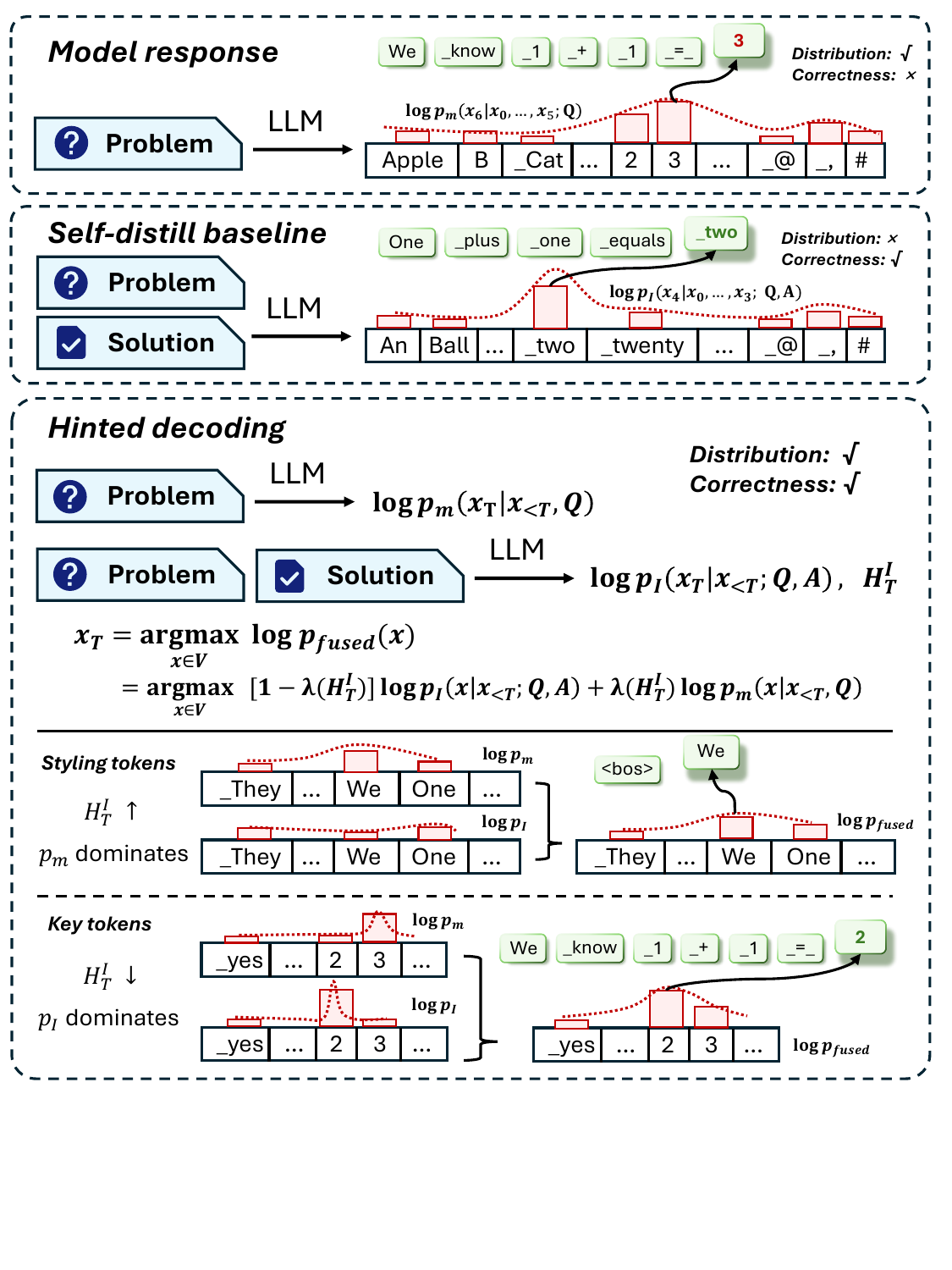}}
    \caption{
    The main process of Hinted Decoding.
    }
    \label{fig:hd-method}
  \end{center}
  \vspace{-0.5 cm}
\end{figure}

\begin{figure*}[t]
  \vspace{-0.3 cm}
  \begin{center}
    \centerline{\includegraphics[width=2\columnwidth, trim=0 20 0 0, clip]{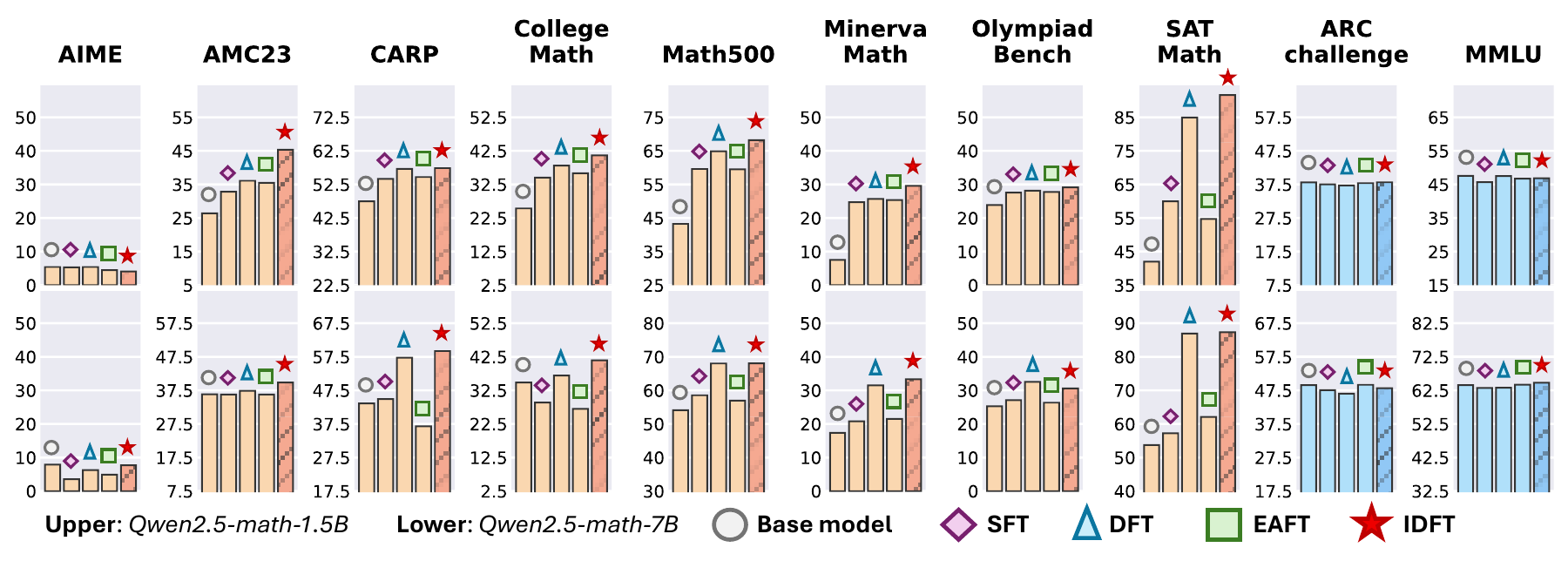}}
    \caption{
      Comparing IDFT to other enhanced SFT works. Specific evaluation result numbers can be found in Appendix \ref{sec:detailed-eval}.
    }
    \label{fig:benchmark_results}
  \end{center}
  \vspace{-0.8 cm}
\end{figure*}

\paragraph{Hinted Decoding.} The central idea of hinted decoding is to dynamically balance between the self-distillation baseline, we call it an imitator: $p_I(x_T|x_{<T}; Q,A)$, and the original model: $p_m(x_T|x_{<T}; Q)$, we hope to maximize $\varphi_m$ (the $\varphi$ of $p_m$), while keeping the answer correct. Specifically, we formulate the following variational problem:
\begin{equation}
    \max_q  \quad  - KL(q || p_I) + E_q[ \hat\lambda(H^I) \varphi_m],
\label{eq:hd-problem}
\end{equation}
where $q$ is also a distribution function on LLM's vocabulary and $\hat{\lambda}(\cdot)$ is a non-decreasing function. This problem tends to allow the main model to independently handle tokens with high entropy $H^I$, which exhibit high uncertainty and typically determine style, as manifested by the Imitator. In contrast, at critical positions where $H^I$ is low, the Imitator dominate the decoding.

Performing a variational computation on Eq. \ref{eq:hd-problem} (details can be found in Appendix \ref{hd:proof}), we readily obtain:
\begin{equation*}
    q \propto p_m^{\hat{\lambda}(H^I)}\cdot p_I.
\end{equation*}
In other words, the target decoding is a weighted combination of the two approaches. For convenience, Hinted Decoding employs the following equivalent logarithmic probabilities for decoding:
\begin{equation}
    \log p_{fused}:=\left[1-\lambda(H^I)\right]\log p_I +\lambda(H^I)\log p_m,
\end{equation}
where we use $\lambda(x) = clip(\beta \cdot x, 0, 1)$ and $\beta$ is the only hyper-parameters that controls the accuracy-distribution trade-off.
Figure \ref{fig:hd-method} illustrates the process of Hinted Decoding. We emphasize two pivotal technical details in Hinted Decoding that determine its success or failure:
\begin{itemize}
\vspace{-0.3cm}
    \item \textbf{Adaptive mode switching.} While Hinted Decoding can be expressed mathematically as a weighted combination of two modes, its behavior is far from trivial. We experimented with using a constant weighting parameter (setting $\lambda (H^I) = C\in [0, 1]$), but the results were poor (see the ablation in Sec. \ref{sec:ablation}). Incorporating $H^I$ into the weighting coefficient is essential: in practice, Hinted Decoding functions as a mode switch that increases the teacher’s contribution on critical tokens.
    \item \textbf{False-positive (FP) detection mechanism.} We find that applying Hinted Decoding throughout the entire decoding process can lead to inconsistencies between the chain-of-thought (CoT) and the final answer, as shown in the left panel of Fig. \ref{fig:hd-spliter}; we refer to this as the FP case. Although we cannot fully resolve this issue, we can detect such instances and remove them from the dataset. Specifically, Hinted Decoding monitors the delimiter separating the CoT from the final answer (e.g., `$</think>$', `$boxed$'). Once this delimiter is generated, we switch to standard student-only decoding. This preserves CoT–answer consistency; i.e., if the CoT is incorrect, the answer will also be incorrect and can thus be detected and filtered out.
\end{itemize}

\begin{figure}[h]
\vspace{-0.5 cm}
  \begin{center}
\centerline{\includegraphics[width=\columnwidth, trim=20 0 20 0, clip]{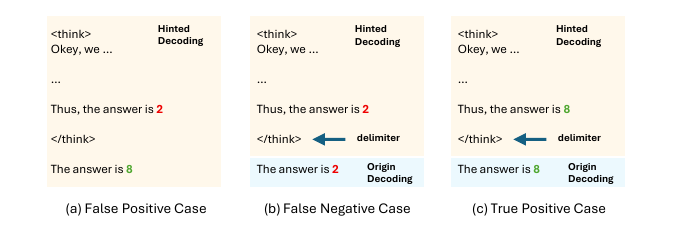}}
    \caption{
    (a) Without FP detection mechanism, HD may decode inconsistent response. (b) and (c): Switch back to the student decoding after delimiter ensures the consistency. 
    }
    \label{fig:hd-spliter}
  \end{center}
  \vspace{-0.5 cm}
\end{figure}

Figure \ref{fig:hd-data-cp} presents the results decoded using Hinted Decoding on real data. It is evident that, despite the simplicity of the approach, it successfully decodes results that maintain correctness while adhering to the model's distribution.

\section{Experiments}

\begin{table*}[t]
  \centering
  \caption{Comparative results of our method, improved SFT, and offline RL. During testing, we used a temperature of 0.3 and we averaged all evaluation results over 16 runs. ``\textbf{\textit{Math-C}}'' is the average results over three competition benchmarks: AIME24 \cite{codeforcesamerican}, AMC23 and Olympiadbench \cite{he2024olympiadbench}. ``\textbf{\textit{Math-G}}'' is the average of 3 more general math benchmarks: College-math \cite{tang2024mathscale}, Math-OAI \cite{lightman2023let} and Minerva-math \cite{dyer2022minerva}. We also test the general reasoning benchmarks MMLU-stem \cite{hendrycks2020measuring} and ARC-challenge \cite{clark2018think} to indicate catastrophic forgetting, with their average number marked as ``\textbf{\textit{General-G}}''. We also reproduced the evaluation of results in this table, the maximum variance is under $\pm 0.05$. More details can be found in Appendix \ref{sec:exp-detai}. \textbf{Bold}: best result, \underline{\textit{Italic \& underline}}: second best results.}
  \resizebox{\textwidth}{!}{
    \begin{tabular}{l|cccc|cccc|cccc}
    \toprule
    \multirow{3}[4]{*}{\textbf{Method}} &  \multicolumn{4}{c|}{\textbf{Qwen2.5-7B (base)}}       & \multicolumn{4}{c}{\textbf{Qwen2.5-7B-instruct (instruct)}}  & \multicolumn{4}{c}{\textbf{DeepSeek-R1-distill-Qwen-7B (thinking)}} \\
\cmidrule(lr){2-5} \cmidrule(lr){6-9} \cmidrule(lr){10-13}         &  \multicolumn{1}{c}{Budget $\downarrow$} & \multicolumn{3}{c|}{Evaluation Results (\%) $\uparrow$} & \multicolumn{1}{c}{Budget $\downarrow$} & \multicolumn{3}{c}{Evaluation Results (\%) $\uparrow$} & \multicolumn{1}{c}{Budget $\downarrow$} & \multicolumn{3}{c}{Evaluation Results (\%) $\uparrow$} \\
\cmidrule(lr){2-2} \cmidrule(lr){3-5} \cmidrule(lr){6-6} \cmidrule(lr){7-9}  
\cmidrule(lr){10-10} \cmidrule(lr){11-13}  
&  (GPU Hour) & Math-C & Math-G & General-R &  (GPU Hour) & Math-C & Math-G & General-R &  (GPU Hour) & Math-C & Math-G & General-R \\
    \midrule
    Origin &    0   &  22.06    &   42.13    &   48.28     &   0     & 33.48      &   52.32    &   61.80 &    0   &   43.59   &  54.68     &  59.38 \\
    \midrule
    \multicolumn{13}{c}{\textit{\textbf{Supervised finetuning approach (distill DeepSeek-R1)}}} \\
    \midrule
    SFT   &   32.85   &    23.93    &   44.24  &   52.10    &  33.25     & 30.07     &  48.33    &   59.93  &    33.04   &  39.43
&   53.52   &  58.28   \\
    DFT  &  32.84    &   17.09     &   31.34  &   51.93    &   34.01    &   30.22   &   48.16   &  56.08   &  33.49     &     39.52 &  53.03    &  57.39   \\
    EAFT  &   33.77   &   23.95     &  42.88   &    53.57   &    33.79   &  30.98    &   48.40   &  60.21   &  33.98     &     39.49 &  52.93    &  59.61   \\
    \midrule
    \multicolumn{13}{c}{\textit{\textbf{Offline RL approach}}} \\
    \midrule
    Rej@16 &   221.9   &    23.62    &  42.89   &   47.43    &    191.2   &    34.19  &    53.17  &   60.43  &   869.6    &   42.68   &   54.46   &   59.42  \\
    DPO@16   &   276.4   &   22.43     &  \underline{\textit{45.95}}   &   48.29    &    197.6    &  34.49    &   52.41   &   59.84  &   821.2    &   43.35   &   55.18   &  59.33   \\
    SimPO@16 &   230.1    &   20.23     &  38.45   &    47.93   &    190.3    &   34.20   &   52.58   &    60.30 &   801.5    &   43.80   &  54.43    & 59.27    \\
    \midrule
    \multicolumn{13}{c}{\textit{\textbf{Baseline and our approach}}} \\
    \midrule
    Self-Distill &   324.7    &   24.96     &  44.07   &   52.99    &   115.7   &   35.09   &   52.39   &   60.25  &   249.3    &   43.74   &   \underline{\textit{54.90}}   &   59.79  \\
    \textbf{HD+SFT} &   212.1   &  \underline{\textit{25.54}}     &  45.14   &   52.94    &  139.6   &   \textbf{36.63}   &  \underline{\textit{53.42}}    &  60.30   &   424.9    &   \textbf{44.63}   &    54.84  &   60.16  \\
    \textbf{HD+IDFT} &   214.8   &    \textbf{27.37}    &  \textbf{47.07}   &  53.34     &   135.4    &    \underline{\textit{36.21}}  &  \textbf{53.50}   &  60.41   &    425.5   &  \underline{\textit{43.83}}    &   \textbf{55.51}   &   59.95  \\
    \bottomrule
    \end{tabular}}
    \vspace{-0.3cm}
  \label{tab:finalresult}%
\end{table*}%

\subsection{Finetuning loss comparison}

We conducted experimental comparisons with other methods aimed at enhancing SFT generalization and performed extensive testing of the results to validate the IDFT loss proposed in this paper.

\paragraph{Baselines.} We compared several recent approaches that improve SFT loss to enhance generalization and reduce catastrophic forgetting. All baselines can be represented within the following unified weighting framework generalized as $\mathcal{L} = -w_t \log p_t$. 
\begin{itemize}
\vspace{-0.3cm}
    \item \textbf{\textit{Standard SFT.}} Standard SFT assigns a static weight $w_t \equiv 1$, imposing uniform supervision across all samples.
    \vspace{-0.3cm}
    \item \textbf{\textit{Dynamic finetuning (DFT).}} DFT \cite{wu2025generalization} sets $w_t = p_t$, which effectively suppresses gradients for OOD data but becomes overly conservative for high-confidence data, resulting in vanishing updates that hinder knowledge consolidation.
    \vspace{-0.3cm}
    \item \textbf{\textit{Entropy-Adaptive Fine-Tuning (EAFT).}} EAFT \cite{diao2026entropy} attempts to gate $w_t$ based on absolute entropy thresholds, it fundamentally conflates context difficulty with distributional alignment, rendering it sensitive to the inherent complexity of the text rather than its correctness.
    \vspace{-0.3cm}
\end{itemize}

\paragraph{Setting.} We trained Qwen2.5-math-1.5B and Qwen2.5-math-7B on Numina-Math \cite{li2024numinamath} dataset with the baselines mentioned.
Unlike previous work, which used identical fixed hyper-parameters for all methods, we fixed all hyper-parameters except for the learning rate. We then conducted a grid search with equal effort to find the optimal learning rate for each method. We believe that using a fixed learning rate for all methods would be unfair, as the improved SFT methods do not normalize the weighting of token loss. For example, consider a naive approach that weights each token loss by 2.0, this would be approximately equivalent to doubling the learning rate. This naive example demonstrates that algorithms adjusting token loss require different optimal learning rates. Therefore, the most reasonable approach is to search for the optimal learning rate for each method individually. After the training, we evaluate the models with multiple benchmarks, as shown in Figure \ref{fig:benchmark_results}. The evaluation settings are the same as Table \ref{tab:finalresult}.

\vspace{-0.3cm}
\paragraph{Results analysis.} The comparative evaluation across ten diverse benchmarks, visualized in Figure \ref{fig:benchmark_results}, highlights the dual superiority of IDFT in facilitating domain-specific acquisition while preserving general capabilities. 

In the realm of mathematical reasoning, IDFT consistently outperforms baseline methods (SFT, DFT, and EAFT) across varying difficulty levels.
The performance advantage is particularly pronounced on rigorous benchmarks such as Olympiad Bench and AMC23, where logical depth is paramount. This empirical success validates the effectiveness of the Consolidation regime ($\gamma < 1$) within our framework. Unlike DFT, which inadvertently dampens the learning signal for high-confidence tokens due to vanishing gradients, IDFT actively amplifies the updates for these golden reasoning steps. This mechanism ensures that critical logic chains are rigorously mastered rather than merely maintained, enabling the model to solve complex problems where standard baselines falter due to under-fitting. Furthermore, the stability of IDFT over EAFT suggests that the relative distributional statistic $\varphi_t$ offers a more robust filtering criterion than static absolute entropy, effectively distinguishing between necessary mathematical complexity and harmful noise.

Equally significant is the model's performance on general reasoning benchmarks, specifically MMLU and ARC-Challenge. 
IDFT not only matches but frequently surpasses SFT on these general benchmarks, signaling a successful mitigation of catastrophic forgetting. This robustness is a direct consequence of the Suppression regime ($\gamma > 1$). By automatically identifying and down-weighting OOD tokens ($\varphi \ll 0$), IDFT acts as a selective filter that prevents the model from modifying its parameters to fit noise. 

\subsection{SFT vs Offline RL}

In the following contents, we integrate the two methods proposed in this paper to transform a fixed training set into a dataset that aligns with the model's distribution for subsequent training. We then compare most of the well-known offline RL baselines. Additionally, we introduced works that enhance the generalization of SFT as mentioned before. 

\paragraph{Settings.}  The experimental setup in this section assumes a fixed-size dataset (DeepMath \cite{he2025deepmath} dataset) to compare the computation-result tradeoff across various methods. Unlike other works that enhance the generalization of SFT by only performing SFT on base models, we employ a stronger baseline by conducting fine-tuning on \textit{Qwen2.5-7B} (base model), \textit{Qwen2.5-7B-instruct} (instruct model) and \textit{DeepSeek-R1-distill-Qwen-7B} (thinking model). This broadens the scope of our experimental validation.
Instruct and thinking models have often undergone extensive high-quality SFT and RLHF post-training by their respective organizations, which significantly increases the difficulty of continual learning. 
Similar to the previous subsection, to ensure a stronger baseline comparison, we conducted a grid search with equal effort to identify the optimal learning rate for each method. Additionally, for methods with specific hyper-parameters, such as DPO-$\beta$, we performed searches for these as well. We kept other less influential hyper-parameters, such as batch size, fixed.

\paragraph{Baselines.} We include most of the well-known offline RL:
\begin{itemize}
\vspace{-0.3cm}
    \item \textbf{\textit{Reject sampling finetuning}} \cite{zhang2023cumulative}, which is generally considered an implicit RL method without negative samples. During data pre-processing, we let the model roll-out for N times for each sample, and then verify and select the best one into the training set, note as ``Rej@N'' in Table \ref{tab:finalresult}.
    \vspace{-0.2cm}
    \item \textbf{\textit{DPO}} \cite{rafailov2023direct} and \textbf{\textit{SimPO}} \cite{meng2024simpo}. For these preference based RL, we also let the model roll-out N times during data pre-processing. Then we leverage both ground truth verification and LLM to select the preferred and rejected responses.
\vspace{-0.3cm}
\end{itemize}

 For SFT methods, since the solutions of DeepMath dataset are verified responses from DeepSeek-R1 \cite{guo2025deepseek}, the results also represent for distillation approach. For our methods and self-distill baselines, we first let the model generate one rollout for the entire dataset, and for the incorrect samples, we use the corresponding process methods. Then we merge the processed data and the correct rollouts to form the training dataset.
 

\paragraph{Results analysis.} We present the results and the computational details of the results in Table \ref{tab:finalresult} and its caption. We summarize the results from the table as follows: (1) When the data distribution deviates from the model's, no improved SFT strategy can completely eliminate the effects of catastrophic forgetting. (2) The comparison between HD+SFT and SFT on general reasoning benchmarks indicates that the data generated to align with the model distribution indeed reduces catastrophic forgetting. (3)  Ultimately, our approach consistently surpasses offline RL across each benchmark, while using a lower budget, demonstrating the data efficiency of our approach. (4) Both SFT and self-distillation yield measurable gains on base models. However, as models become more stylistically specialized (e.g., “thinking” models), these baselines deteriorate, whereas our method continues to deliver stable improvements. This further suggests that, although self-distillation may look in-distribution to human observers, it is not necessarily aligned with the model’s own data distribution.

Note that these experiments do not indicate that our approach can completely replace RLHF in preference learning and value alignment. This work mainly focuses on scenarios with objective correctness (e.g., mathematics, code, agent).

\subsection{Ablations}
\label{sec:ablation}

\paragraph{$\beta$ vs acc} 

We conducted experiments on the hyperparameter $\beta$ in HD and also validated the performance of HD on large datasets. From a pool of 1000 questions, we used Qwen2.5-7B-instruct to filter out those questions that could not be answered correctly in eight repeated attempts. 
\begin{figure}[ht]
  \begin{center}
  \vspace{-0.2 cm}
    \centerline{\includegraphics[width=\columnwidth]{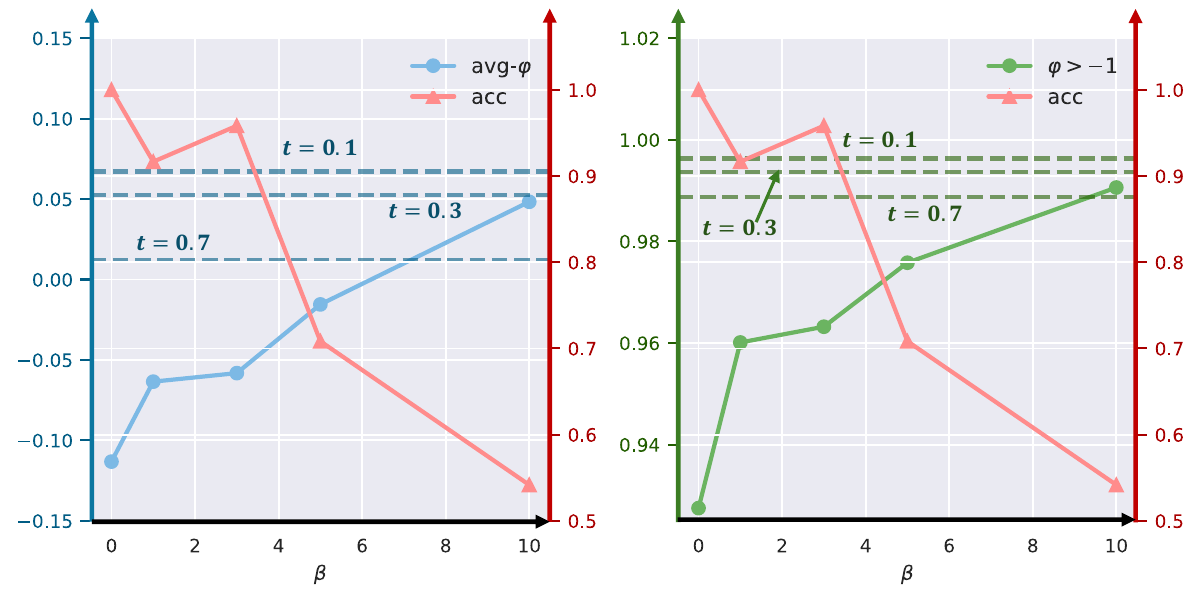}}
    \vspace{-0.2 cm}
    \caption{
      The impact of $\beta$. 
    }
    \label{fig:betaablation}
  \end{center}
  \vspace{-0.6 cm}
\end{figure}
Figure \ref{fig:betaablation} illustrates the variations in accuracy, average $\varphi$, and extreme tokens count under different values of $\beta$ (0, 1, 3, 5, 10) during decoding. Here, $\beta=0$ serves as the self-distillation baseline. We also plotted the results when the model solves problems normally, setting the temperature to 0.1, 0.3, and 0.7. The results indicate that when $\beta$ is set to 10, the generated data closely resembles the data produced under normal model conditions.

\paragraph{Key designs in Hinted Coding} In Sec. \ref{sec:hinted-decoding}, we introduced two key technical components of Hinted Decoding, the adaptive weighting coefficient and the FP detection mechanism, and we present ablation studies in Table \ref{tab:hd-abl}. Because in our experiments we first perform a single rollout and apply processing only to problems the model initially answers incorrectly, the appropriate baseline for comparison should be rejection sampling @ 1.

\begin{table}[h]
  \centering
  \caption{Ablation for two key components in Hinted Decoding. Numbers in parentheses denote the change relative to the Rej@1 baseline.}
  \resizebox{0.9\linewidth}{!}{
    \begin{tabular}{lcc}
    \toprule
    Method & Math-C & Math-G \\
    \midrule
    Qwen-2.5-7B-instruct & 33.48 & 52.32 \\
    $+$ Rej@1 & 33.88 & 52.79 \\
    \midrule
    \textbf{HD SFT} & \textbf{36.63 ($+$2.95)} & \textbf{53.42 ($+$0.63)} \\
    $\rightarrow \lambda(H^I)=0$ (Self-Distill) & 35.09 ($+$1.21) & 52.39 ($-$0.40)\\
    $\rightarrow \lambda(H^I)=0.3$ & 34.04 ($+$0.16) & 52.24 ($-$0.55) \\
    $\rightarrow \lambda(H^I)=0.5$ & 33.15 ($-$0.73)& 51.26 ($-$1.53) \\
    $\rightarrow \lambda(H^I)=0.8$ & 33.78 ($-$0.10) & 52.03 ($-$0.76)\\
    \midrule
    HD SFT $-$ FP Detection & 35.21 ($+$1.33) & 52.57 ($-$0.22) \\
    \bottomrule
    \end{tabular}}
  \label{tab:hd-abl}%
\end{table}%

The results clearly show that both components have a substantial impact on final performance. A naive weighting scheme is not only highly sensitive to its hyper-parameters but is often detrimental to training. Moreover, without the FP detection mechanism, the method can still deliver some improvement, because the model is reasonably aligned with the data distribution and can tolerate a small amount of noisy data—but it still falls short of the full Hinted Decoding. These findings underscore the importance of the technical details we identified.

\paragraph{Update component} 
To validate the Distribution Discriminant Theory, we conducted an ablation replacing IDFT's soft modulation with a binary gradient mask ($\mathbb{I}[\varphi_t > \tau]$) for $\tau \in \{-1, -5, -10\}$. The results in Table \ref{tab:ablation} reveal a distinct \textbf{inverted-U performance trajectory}.

A strict threshold ($\tau = -1$) degrades performance by excluding informative hard samples ($\varphi_t \in [-5, -1]$), leading to \textbf{under-fitting} of complex reasoning patterns. Conversely, a loose threshold ($\tau = -10$) permits high-variance OOD noise ($\varphi_t < -10$), causing \textbf{negative transfer} that outweighs data scale benefits. The peak performance at $\tau = -5$ empirically delineates the effective learning interval. IDFT is formulated to align with this topology: its continuous mapping $\gamma_t(\varphi_t)$ adaptively approximates this optimal truncation, preserving gradients for the effective interval ($\varphi \approx 0$) while enforcing attenuation on the distribution tail ($\varphi \ll 0$) without manual tuning.

\begin{table}[h]
\centering
\vspace{-0.3 cm}
\caption{Ablation study on the impact of hard gradient truncation thresholds ($\varphi > \tau$). The \textbf{Avg.} column denotes the average accuracy across all six benchmarks. The results demonstrate an inverted-U trend, confirming that an optimal inclusion range exists.}
\label{tab:ablation}
\resizebox{\linewidth}{!}{
\begin{tabular}{l|cccccc|c}
\toprule
\textbf{Method} & \textbf{AIME24} & \textbf{Math500} & \textbf{AMC23} & \textbf{Aqua} & \textbf{College Math} & \textbf{SAT Math} & \textbf{Avg.} \\
\midrule
\multicolumn{8}{c}{\textit{Base Model: Qwen Math 1.5B}} \\
Truncate $\varphi > -1$  & 7.93 & 65.60 & 40.31 & 56.18 & \textbf{57.84} & \textbf{88.48} & 52.72 \\
Truncate $\varphi > -5$  & \textbf{8.34} & \textbf{67.30} & \textbf{41.71} & \textbf{67.03} & 57.55 & 87.88 & \textbf{54.97} \\
Truncate $\varphi > -10$ & 6.68 & 64.50 & 39.06 & 64.50 & 57.57 & 84.37 & 52.78 \\
\midrule
\multicolumn{8}{c}{\textit{Base Model: Qwen 2.5 7B}} \\
Truncate $\varphi > -1$  & 7.53 & 65.80 & \textbf{50.78} & 63.94 & 58.07 & 87.89 & 55.67 \\
Truncate $\varphi > -5$  & \textbf{15.20} & \textbf{69.70} & 46.25 & \textbf{72.50} & 57.32 & \textbf{93.79} & \textbf{59.13} \\
Truncate $\varphi > -10$ & 10.01 & 69.44 & 40.00 & 48.90 & \textbf{59.20} & 87.30 & 52.48 \\
\bottomrule
\end{tabular}
}
\vspace{-0.5 cm}
\end{table}




\section{Discussion and Conclusion}

Overall, this paper focuses on the distinctions and respective advantages of SFT and RL, aiming to bridge the gap between them by leveraging their strengths. The paper introduces a theoretical framework to quantify and interpret data distribution, and based on this framework, develops two techniques: IDFT and Hinted Decoding, which enhance the generalization capability of SFT.

Due to space constraints, this paper also has its limitations. For example, the algorithm presented here could be adapted into an online version, similar to PPO, where each batch is regenerated, allowing for comparisons with a broader range of RL methods. However, as the research is still in its early stages, we consider this as a future work. Additionally, the algorithm discussed in this paper could demonstrate significantly greater advantages in scenarios, like agents, that those models have not undergone extensive pre-training, compared to its performance in mathematical domains. To ensure the evaluation's credibility, we have also left this aspect for future work.

Last but not least, our work can naturally connect with emerging fields such as speculative decoding \cite{leviathan2023fast}, on-policy distillation \cite{agarwal2024policy}, and diffusion LLMs \cite{li2022diffusion,nie2025large}, potentially leading to new algorithms. We believe our work can serve as a baseline and inspire further studies.

\bibliography{main}
\bibliographystyle{icml2026}

\newpage
\appendix
\onecolumn

\section{Distribution Discriminant Theory}
\label{sec:theory}
\subsection{Derivation of Signal-to-Noise Ratio Improvement}
\label{app:snr_derivation}
In this appendix, we provide a rigorous derivation demonstrating that the proposed Centered Log-Likelihood (CLL) statistic theoretically maximizes the Signal-to-Noise Ratio (SNR) compared to the standard Perplexity (LL) baseline. We employ the \textit{Law of Total Variance} to decompose the uncertainty sources into context-level variability and token-level intrinsic noise.

\subsubsection{Problem Setup and Definitions}
Let $c$ denote a context sampled from the data distribution $\mathcal{D}$, and $x$ denote a token generated conditioned on $c$. We define two sources of randomness:
\begin{enumerate}
    \item \textbf{Context Sampling:} $c \sim \mathcal{D}$. The entropy of the model's prediction given $c$ is $H(c) \coloneqq \mathbb{E}_{x \sim p(\cdot|c)}[-\log p(x|c)]$.
    \item \textbf{Token Generation:} $x \sim p(\cdot|c)$ under the null hypothesis $\mathcal{H}_0$ (In-Distribution), and $x \sim q(\cdot|c)$ under the alternative hypothesis $\mathcal{H}_1$ (Out-of-Distribution).
\end{enumerate}

We define the \textit{Global Signal-to-Noise Ratio} for a detection statistic $S$ as:
\begin{equation}
    \text{SNR}(S) \coloneqq \frac{\left( \mathbb{E}_{c,x}[S \mid \mathcal{H}_1] - \mathbb{E}_{c,x}[S \mid \mathcal{H}_0] \right)^2}{\mathrm{Var}_{c,x}(S \mid \mathcal{H}_0)}.
\end{equation}

\subsubsection{Analysis of the Baseline (LL)}
The standard LL statistic is defined as the raw log-likelihood: $S_{\text{LL}}(x, c) = \log p(x|c)$.

\subsubsection{Variance Decomposition (The Denominator)}
Using the \textbf{Law of Total Variance}, we decompose the variance of $S_{\text{LL}}$ under $\mathcal{H}_0$ into inter-context and intra-context components:
\begin{align}
    \mathrm{Var}(S_{\text{LL}}) &= \mathrm{Var}_c \left( \mathbb{E}_{x|c}[S_{\text{LL}}] \right) + \mathbb{E}_c \left[ \mathrm{Var}_{x|c}(S_{\text{LL}}) \right].
\end{align}
Analyzing the first term (Inter-Context Variance):
\begin{equation}
    \mathbb{E}_{x|c}[S_{\text{LL}}] = \mathbb{E}_{x \sim p(\cdot|c)}[\log p(x|c)] = -H(c).
\end{equation}
Thus, the first term becomes $\mathrm{Var}_c(-H(c)) = \sigma_H^2$, which represents the variability of difficulty across contexts.

Analyzing the second term (Intra-Context Variance):
Let $\sigma_{\epsilon}^2(c) \coloneqq \mathrm{Var}_{x \sim p(\cdot|c)}(\log p(x|c))$ be the intrinsic aleatoric noise for context $c$. The second term is the average intrinsic noise $\bar{\sigma}_{\epsilon}^2 = \mathbb{E}_c[\sigma_{\epsilon}^2(c)]$.

Substituting these back, the total noise for LL is:
\begin{equation}
    \mathrm{Noise}^2_{\text{LL}} = \sigma_H^2 + \bar{\sigma}_{\epsilon}^2.
\end{equation}

\subsubsection{Signal Analysis (The Numerator)}
The expected drift under distribution shift is:
\begin{align}
    \Delta_{\text{LL}} &= \mathbb{E}_{c,x}[S_{\text{LL}}|\mathcal{H}_1] - \mathbb{E}_{c,x}[S_{\text{LL}}|\mathcal{H}_0] \nonumber \\
    &= \mathbb{E}_c[-H(q, p)] - \mathbb{E}_c[-H(p)] \nonumber \\
    &= -\mathbb{E}_c[\mathrm{KL}(q(\cdot|c) \,\|\, p(\cdot|c))].
\end{align}
Let $\Delta = \mathbb{E}_c[\mathrm{KL}(q\|p)]$. The squared signal is $\Delta^2$.

\textbf{Resulting SNR for LL:}
\begin{equation}
    \text{SNR}_{\text{LL}} = \frac{\Delta^2}{\sigma_H^2 + \bar{\sigma}_{\epsilon}^2}.
\end{equation}

\subsubsection{Analysis of Proposed Method (CLL)}
We define the Centered Log-Likelihood as $\varphi(x, c) = \log p(x|c) + H(c)$. Here, $H(c)$ acts as a context-dependent control variate.

\subsubsection{Variance Reduction}
ALLying the Law of Total Variance to $\varphi$ under $\mathcal{H}_0$:
\begin{align}
    \mathrm{Var}(\varphi) &= \mathrm{Var}_c \left( \mathbb{E}_{x|c}[\varphi] \right) + \mathbb{E}_c \left[ \mathrm{Var}_{x|c}(\varphi) \right].
\end{align}
Crucially, under $\mathcal{H}_0$, the conditional expectation is centered at zero for any context $c$:
\begin{equation}
    \mathbb{E}_{x|c}[\log p(x|c) + H(c)] = -H(c) + H(c) = 0.
\end{equation}
Since the conditional mean is the constant $0$, its variance is zero:
\begin{equation}
    \mathrm{Var}_c \left( \mathbb{E}_{x|c}[\varphi] \right) = \mathrm{Var}_c(0) = 0.
\end{equation}
For the second term, since $H(c)$ is constant given $c$, it does not affect the conditional variance:
\begin{equation}
    \mathrm{Var}_{x|c}(\log p(x|c) + H(c)) = \mathrm{Var}_{x|c}(\log p(x|c)) = \sigma_{\epsilon}^2(c).
\end{equation}
Thus, the total noise for CLL is significantly reduced:
\begin{equation}
    \mathrm{Noise}^2_{\text{CLL}} = 0 + \bar{\sigma}_{\epsilon}^2 = \bar{\sigma}_{\epsilon}^2.
\end{equation}

\subsubsection{Signal Preservation}
We verify that the signal magnitude is preserved. The term $H(c)$ cancels out when taking the difference of expectations:
\begin{align}
    \Delta_{\text{CLL}} &= \mathbb{E}[\varphi|\mathcal{H}_1] - \mathbb{E}[\varphi|\mathcal{H}_0] \nonumber \\
    &= (\mathbb{E}[S_{\text{LL}}|\mathcal{H}_1] + \mathbb{E}[H]) - (\mathbb{E}[S_{\text{LL}}|\mathcal{H}_0] + \mathbb{E}[H]) \nonumber \\
    &= \Delta_{\text{LL}} = -\Delta.
\end{align}
The squared signal remains $\Delta^2$.

\subsubsection{Conclusion}
Comparing the derived SNRs:
\begin{equation}
    \text{SNR}_{\text{CLL}} = \frac{\Delta^2}{\bar{\sigma}_{\epsilon}^2} \quad \text{vs.} \quad \text{SNR}_{\text{LL}} = \frac{\Delta^2}{\sigma_H^2 + \bar{\sigma}_{\epsilon}^2}.
\end{equation}
Since natural language datasets exhibit diverse complexity, implying $\sigma_H^2 > 0$, we conclude that:
\begin{equation}
    \text{SNR}_{\text{CLL}} > \text{SNR}_{\text{LL}}.
\end{equation}
This proves that the CLL statistic strictly dominates the LL baseline in terms of Signal-to-Noise Ratio by eliminating the variance contribution from context difficulty.


\subsection{Mathematical Proofs of Sequential Properties}
\label{app:proofs}

In this section, we provide rigorous derivations for the stochastic properties of the cumulative statistic $S_n = \sum_{t=1}^n \varphi_t$, where $\varphi_t = \log p_t(x_t) + H(p_t)$.

\subsubsection{Proof of Martingale Property under $\mathcal{H}_0$}
\label{proof:martingale}

\begin{proposition}
Under the in-distribution hypothesis $\mathcal{H}_0$ (where $x_t \sim p_t$), the sequence $\{S_n\}_{n \ge 0}$ is a discrete-time martingale with respect to the filtration $\mathcal{F}_n = \sigma(x_1, \dots, x_n)$.
\end{proposition}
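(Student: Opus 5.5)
To show that $\{S_n\}$ is a martingale, I will check the three defining conditions in turn: adaptedness, integrability, and the one-step conditional expectation identity. The central observation is that, given the history $\mathcal{F}_{n-1}=\sigma(x_1,\dots,x_{n-1})$ (with the prompt $Q$ treated as fixed), the context $c_n=(Q,x_{<n})$ is $\mathcal{F}_{n-1}$-measurable. Hence the next-token distribution $p_n=p_\theta(\cdot\mid c_n)$ and its entropy $H(p_n)$ are both $\mathcal{F}_{n-1}$-measurable, meaning $p_n$ is predictable.

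\textbf{Adaptedness.} $\varphi_t=\log p_t(x_t)+H(p_t)$ is a deterministic function of $(x_1,\dots,x_t)$, so it is $\mathcal{F}_t$-measurable. Consequently $S_n=\sum_{t\le n}\varphi_t$ is $\mathcal{F}_n$-measurable, and we set $S_0=0$.

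\textbf{Integrability.} The vocabulary $\mathcal{V}$ is finite, and under $\mathcal{H}_0$ the token $x_t$ is drawn from $p_t$. Tokens with $p_t(v)=0$ therefore occur with probability zero, so $\log p_t(x_t)$ is finite almost surely. Moreover,
\[
\mathbb{E}\bigl[|\log p_t(x_t)|\,\big|\,\mathcal{F}_{t-1}\bigr]=H(p_t)\le\log|\mathcal{V}|,
\]
and $0\le H(p_t)\le\log|\mathcal{V}|$. Taking expectations gives $\mathbb{E}|\varphi_t|\le 2\log|\mathcal{V}|<\infty$, and hence $\mathbb{E}|S_n|\le 2n\log|\mathcal{V}|$. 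This is the step I would be most careful about, since $\log p$ is unbounded and zero-probability tokens must be excluded. Finiteness of $\mathcal{V}$ resolves it; for the clipped $\tilde\varphi$, integrability is immediate.

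\textbf{Martingale identity.} Write $S_n=S_{n-1}+\varphi_n$. Since $S_{n-1}$ is $\mathcal{F}_{n-1}$-measurable, it suffices to show $\mathbb{E}[\varphi_n\mid\mathcal{F}_{n-1}]=0$. Because $p_n$ is fixed given $\mathcal{F}_{n-1}$ and $x_n\sim p_n$ under $\mathcal{H}_0$,
\[
\mathbb{E}[\log p_n(x_n)\mid\mathcal{F}_{n-1}]=\sum_{v}p_n(v)\log p_n(v)=-H(p_n),
\]
and $H(p_n)$ can be pulled out of the conditional expectation. These two terms cancel, giving $\mathbb{E}[S_n\mid\mathcal{F}_{n-1}]=S_{n-1}$. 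Iterating the tower property then yields $\mathbb{E}[S_n]=0$, so $\{S_n\}$ is a zero-mean martingale. This is the same per-context centering used in the CLL variance analysis of Appendix~\ref{app:snr_derivation}.
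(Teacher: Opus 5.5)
Your proposal is correct and follows the same argument as the paper. In both, the key step is that, conditional on $\mathcal{F}_{n-1}$, the quantity $\mathbb{E}_{x_n\sim p_n}[\log p_n(x_n)] = -H(p_n)$ cancels the $\mathcal{F}_{n-1}$-measurable term $H(p_n)$; your explicit checks of adaptedness and integrability (via $\mathbb{E}[|\log p_n(x_n)|\mid\mathcal{F}_{n-1}] = H(p_n)\le\log|\mathcal{V}|$) supply conditions the paper names but does not verify.
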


\textit{Proof.}
A sequence is a martingale if $\mathbb{E}[|S_n|] < \infty$ and $\mathbb{E}[S_n \mid \mathcal{F}_{n-1}] = S_{n-1}$.
Consider the increment $\Delta S_n = S_n - S_{n-1} = \varphi_t$. We calculate its conditional expectation under $\mathcal{H}_0$:
\begin{align}
    \mathbb{E}[\varphi_t \mid \mathcal{F}_{n-1}, \mathcal{H}_0] &= \mathbb{E}_{x_t \sim p_t} [\log p_t(x_t) + H(p_t)] \nonumber \\
    &= \mathbb{E}_{x_t \sim p_t} [\log p_t(x_t)] + H(p_t) \nonumber \\
    &= \sum_{x \in \mathcal{V}} p_t(x) \log p_t(x) + H(p_t) \nonumber \\
    &= -H(p_t) + H(p_t) \nonumber \\
    &= 0.
\end{align}
Since the expected increment is zero, we have:
\begin{equation}
    \mathbb{E}[S_n \mid \mathcal{F}_{n-1}] = S_{n-1} + \mathbb{E}[\varphi_t \mid \mathcal{F}_{n-1}] = S_{n-1}.
\end{equation}
Thus, $S_n$ is a zero-mean martingale under $\mathcal{H}_0$. \hfill $\square$

\subsubsection{Proof of Negative Drift under $\mathcal{H}_1$}
\label{proof:drift}

\begin{proposition}
Under the out-of-distribution hypothesis $\mathcal{H}_1$ (where $x_t \sim q_t \neq p_t$), the statistic $\varphi_t$ has a strictly negative expectation if the KL divergence dominates the entropy gap.
\end{proposition}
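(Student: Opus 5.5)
The statement to prove is: under $\mathcal{H}_1$, $\mathbb{E}[\varphi_t]<0$ whenever the KL divergence dominates the entropy gap. The approach is to compute the conditional expectation of the increment $\varphi_t$ given $\mathcal{F}_{t-1}$, with $x_t\sim q_t$ and $p_t$ held fixed, exactly as in the martingale proof. We then split the resulting quantity into a KL term and an entropy-gap term.

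\textbf{Step 1: the cross-entropy decomposition.} Because $H(p_t)$ is $\mathcal{F}_{t-1}$-measurable,
\begin{equation*}
\mathbb{E}[\varphi_t\mid\mathcal{F}_{t-1},\mathcal{H}_1]=\sum_{x\in\mathcal{V}}q_t(x)\log p_t(x)+H(p_t)=-H(q_t,p_t)+H(p_t).
\end{equation*}
We then apply the standard identity $H(q_t,p_t)=H(q_t)+D_{\mathrm{KL}}(q_t\|p_t)$, which gives
\begin{equation*}
\mathbb{E}[\varphi_t\mid\mathcal{F}_{t-1},\mathcal{H}_1]=-D_{\mathrm{KL}}(q_t\|p_t)+\bigl(H(p_t)-H(q_t)\bigr).
\end{equation*}
This identity is exact. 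It shows that the earlier Proposition (Negative Drift) formula $-D_{\mathrm{KL}}$ holds precisely when the entropy gap $H(p_t)-H(q_t)$ vanishes, which is what the present qualified statement captures.

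\textbf{Step 2: the sign condition.} The drift is strictly negative if and only if $D_{\mathrm{KL}}(q_t\|p_t)>H(p_t)-H(q_t)$. We take this inequality as the meaning of ``the KL divergence dominates the entropy gap''. The dominance is automatic whenever $H(q_t)\ge H(p_t)$ and $q_t\ne p_t$, since Gibbs' inequality then gives $D_{\mathrm{KL}}>0$. To pass to the sequence, take total expectations and sum over $t$. This yields
\begin{equation*}
\mathbb{E}[S_k]=-\sum_{t\le k}\mathbb{E}\bigl[D_{\mathrm{KL}}(q_t\|p_t)-H(p_t)+H(q_t)\bigr],
\end{equation*}
so the expectation decreases linearly whenever the per-step margin is bounded below by a positive constant.

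\textbf{Main obstacle: support and integrability.} If $q_t$ places mass on tokens where $p_t(x)=0$, then $\log p_t(x_t)=-\infty$ and $D_{\mathrm{KL}}=\infty$. The conclusion is then trivially true but not finite-valued. We will treat this case separately, or note that softmax outputs have full support, so every quantity is finite on the finite vocabulary $\mathcal{V}$. For the clipped statistic $\tilde\varphi_t$ the identity is no longer exact. Clipping at $-B$ only raises very negative values, so one should bound the discrepancy by $\mathbb{E}_{q_t}[(\varphi_t+B)^-]$ and state the result for the unclipped $\varphi_t$. We regard this clipping point as where care is needed.
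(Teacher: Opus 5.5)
Your proof is correct and follows the paper's argument: you rewrite $\sum_x q_t(x)\log p_t(x)$ as $-D_{\mathrm{KL}}(q_t\|p_t)-H(q_t)$ to get $\mathbb{E}[\varphi_t\mid\mathcal{H}_1]=-D_{\mathrm{KL}}(q_t\|p_t)+(H(p_t)-H(q_t))$, and you read ``dominates'' as $D_{\mathrm{KL}}(q_t\|p_t)>H(p_t)-H(q_t)$. Your additions are sound details the paper leaves implicit: conditioning on $\mathcal{F}_{t-1}$ and summing for the linear drift, the automatic case $H(q_t)\ge H(p_t)$, full support, clipping, and the observation that the main-text formula $-D_{\mathrm{KL}}$ holds only when the entropy gap vanishes.
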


\textit{Proof.}
We calculate the expectation of $\varphi_t$ with respect to the true generating distribution $q_t$:
\begin{align}
    \mathbb{E}[\varphi_t \mid \mathcal{H}_1] &= \mathbb{E}_{x_t \sim q_t} [\log p_t(x_t) + H(p_t)] \nonumber \\
    &= \sum_{x \in \mathcal{V}} q_t(x) \log p_t(x) + H(p_t).
\end{align}
We rewrite the cross-entropy term $\sum q \log p$:
\begin{align}
    \sum_{x} q_t(x) \log p_t(x) &= \sum_{x} q_t(x) \log \frac{p_t(x)}{q_t(x)} q_t(x) \nonumber \\
    &= \sum_{x} q_t(x) \log \frac{p_t(x)}{q_t(x)} + \sum_{x} q_t(x) \log q_t(x) \nonumber \\
    &= -D_{\mathrm{KL}}(q_t \| p_t) - H(q_t).
\end{align}
Substituting this back, we obtain:
\begin{equation}
    \mathbb{E}[\varphi_t \mid \mathcal{H}_1] = -D_{\mathrm{KL}}(q_t \| p_t) + (H(p_t) - H(q_t)).
\end{equation}
Since the KL divergence $D_{\mathrm{KL}}(q_t \| p_t) \ge 0$ is typically the dominant term for OOD samples (where the model assigns low probability to the observed distribution), and assuming the entropy difference is bounded or negligible compared to the divergence, the expectation is negative. Specifically, if $D_{\mathrm{KL}}(q_t \| p_t) > H(p_t) - H(q_t)$, then $\mathbb{E}[\Delta S_n] < 0$, causing a downward linear drift. \hfill $\square$

\subsubsection{Concentration Inequality and Error Bounds}
\label{proof:concentration}

To theoretically bound the False Positive Rate (Type I Error), we analyze the probability that the cumulative score $S_L$ drops below a threshold $-\lambda$ given that the sequence is actually In-Distribution ($\mathcal{H}_0$).

\begin{theorem}[Tail Bound for IDFT]
Assume the statistic $\varphi_t$ is bounded such that $|\varphi_t| \le c$ for some constant $c$, and let $\sigma_t^2 = \mathrm{Var}(\varphi_t \mid \mathcal{H}_0)$. For a sequence of length $L$, the probability of a false rejection is bounded by:
\begin{equation}
    \mathbb{P}(S_L \le -\lambda \mid \mathcal{H}_0) \le \exp \left( - \frac{\lambda^2}{2 \sum_{t=1}^L \sigma_t^2 + \frac{2}{3}c\lambda} \right).
\end{equation}
\end{theorem}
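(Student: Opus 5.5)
The plan is the standard Bernstein/Freedman argument for bounded martingale differences. I will apply it to the increments of $-S_L$, using the Martingale Property under $\mathcal{H}_0$ (Proposition~\ref{prop:martingale}) to supply the zero conditional mean.

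\textbf{Setup.} Let $Y_t \coloneqq -\varphi_t$ and $\mathcal{F}_{t}=\sigma(x_1,\dots,x_t)$.
\begin{itemize}
\item By Proposition~\ref{prop:martingale}, $\mathbb{E}[Y_t\mid\mathcal{F}_{t-1},\mathcal{H}_0]=0$.
\item By hypothesis $|Y_t|\le c$.
\item I read $\sigma_t^2$ as the conditional variance $\mathbb{E}[Y_t^2\mid\mathcal{F}_{t-1}]$, and assume $\sum_t\sigma_t^2\le V$ for a deterministic $V$. Otherwise I work on the event $\{\sum_t \sigma_t^2\le V\}$, as in Freedman's inequality.
\end{itemize}
The event $\{S_L\le-\lambda\}$ is $\{\sum_{t=1}^L Y_t\ge\lambda\}$, so it suffices to bound the upper tail of a martingale with bounded increments.

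\textbf{Step 1: one-step moment generating function bound.} For $s>0$, use the elementary inequality $e^{y}\le 1+y+y^2\,\frac{e^{sc}-1-sc}{s^2c^2}$ for $y=sY_t$ with $|Y_t|\le c$. This holds because $(e^u-1-u)/u^2$ is increasing in $u$. Taking conditional expectations and using the zero mean gives
\[
\mathbb{E}[e^{sY_t}\mid\mathcal{F}_{t-1}]\le 1+\sigma_t^2 g(s)\le \exp(\sigma_t^2 g(s)), \qquad g(s)=\frac{e^{sc}-1-sc}{c^2}.
\]

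\textbf{Step 2: exponential supermartingale.} It follows that $M_k=\exp\bigl(s\sum_{t\le k}Y_t-g(s)\sum_{t\le k}\sigma_t^2\bigr)$ is a supermartingale with $\mathbb{E}M_L\le 1$.

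\textbf{Step 3: Chernoff bound and optimization.} Markov's inequality gives
\[
\mathbb{P}\Bigl(\sum_t Y_t\ge\lambda\Bigr)\le \exp\bigl(-s\lambda+g(s)V\bigr).
\]
Choose $s=\frac1c\log(1+\lambda c/V)$. This yields the Bennett form $\exp\bigl(-\frac{V}{c^2}h(\lambda c/V)\bigr)$ with $h(u)=(1+u)\log(1+u)-u$. Then apply the elementary bound $h(u)\ge \frac{u^2}{2(1+u/3)}$, which can be checked by comparing derivatives at $0$ twice. This gives
\[
\exp\Bigl(-\frac{\lambda^2}{2V+\tfrac{2}{3}c\lambda}\Bigr),
\]
which is the claimed inequality with $V=\sum_t\sigma_t^2$.

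\textbf{Main obstacle: bounded increments versus zero mean.} The statement assumes $|\varphi_t|\le c$. The raw $\varphi_t$ is unbounded below, but the clipped $\tilde\varphi_t$ used in the main text is no longer exactly mean zero under $\mathcal{H}_0$.

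My first fix is to note a one-sided fact. Since $\mathbb{E}[\varphi_t\mid\mathcal{F}_{t-1}]=0$, clipping to $\tilde\varphi_t=\mathrm{clip}(\varphi_t,-B,B)$ gives $\mathbb{E}[\tilde\varphi_t\mid\mathcal{F}_{t-1}]\ge 0$ whenever the upper clip is inactive. The upper clip is inactive because $\varphi_t\le H(p_t)$, so it suffices to take $B\ge \log|\mathcal{V}|$. Only lower clipping occurs, and it raises values, so $-\tilde\varphi_t$ has nonpositive conditional mean.

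Steps 1--3 still go through with $\mathbb{E}[Y_t\mid\mathcal F_{t-1}]\le 0$, since the linear term only helps. The variance term should then be the conditional second moment rather than the variance, which is harmless.

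If this does not fit the stated hypothesis, I would fall back to reading the theorem as stated: assume $\varphi_t$ is bounded, as holds for a finite vocabulary with $p_t$ bounded away from zero, and use exact zero mean.
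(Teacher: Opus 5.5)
Your proposal is correct and is essentially the paper's argument. The paper simply invokes Freedman's inequality for the martingale differences $-\varphi_t$ of $-S_L$, while you re-derive that inequality via the one-step MGF bound, the exponential supermartingale, and the Bennett-to-Bernstein step $h(u)\ge u^2/(2(1+u/3))$. Your additional care fills in points the paper's short proof glosses over: you read $\sigma_t^2$ as a conditional variance with a deterministic (or event-restricted) bound on $\sum_t\sigma_t^2$, and you note that with $B\ge\log|\mathcal{V}|$ only lower clipping occurs, so $-\tilde\varphi_t$ has nonpositive conditional mean and the bound transfers to $\tilde S_L$.
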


\textit{Proof.}
Since $\{S_n\}$ is a zero-mean martingale with bounded increments $|\varphi_t| \le c$ and conditional variance $\sigma_t^2$, we can invoke the \textbf{Freedman's Inequality} (a martingale variant of Bernstein's inequality).
Freedman's inequality states that for a martingale difference sequence $X_t$ with $X_t \le c$, and $V_L = \sum \mathrm{Var}(X_t \mid \mathcal{F}_{t-1})$, the tail probability is bounded by:
\begin{equation}
    \mathbb{P}\left(\sum_{t=1}^L X_t \ge \lambda\right) \le \exp \left( - \frac{\lambda^2}{2V_L + 2c\lambda/3} \right).
\end{equation}
ALLying this to our statistic: we are interested in the lower tail $S_L \le -\lambda$. By symmetry of the bound (or aLLying to $-S_L$), and noting that $V_L = \sum_{t=1}^L \sigma_t^2$, we directly obtain the stated bound.

This result implies that the false positive rate decays exponentially with the squared threshold $\lambda^2$, scaled by the accumulated aleatoric variance $\sum \sigma_t^2$. Compared to standard raw log-likelihood (which includes the high variance of entropy $\sigma_H^2$ in the denominator), our statistic $\varphi_t$ minimizes the denominator to $\sum \sigma_{\epsilon}^2$, thereby significantly tightening the error bound and allowing for more sensitive detection thresholds. \hfill $\square$

\subsection{Derivation of the Formula for Hinted Decoding}
\label{hd:proof}

We start from the variational problem:
\begin{equation*}
    \max_q  \quad  - KL(q || p_I) + E_q[ \hat\lambda(H^I) \varphi_m],
\end{equation*}

Using the definition:
\begin{align*}
   \mathcal J[q] &= - KL(q || p_I) + E_q[ \hat\lambda(H^I)\varphi_m], \\
    & = -\int q\cdot\log \frac{q}{p_I} + \int q\cdot\hat\lambda(H^I) \left(\log p_m+H^m\right) \\
    & = \int q\cdot \left[\log p_I +  \hat\lambda(H^I)\log p_m-\log q\right] + \hat\lambda(H^I)H^m
\end{align*}

The objective functional $\mathcal{J}[q]$ is a strictly concave function with respect to $q$ (when $q$ is at an interior point), because $\int q\log q$ is strictly concave. The linear term does not alter the concavity, therefore, the global optimum is uniquely determined by the first-order condition. The Lagrangian function with the normalization constraint can be expressed as follows. Given the constraint $\int q=1$, the Lagrangian function is defined as:
\begin{equation*}
    \mathcal{L}[q, \alpha] = \mathcal{J}[q] + \alpha \cdot\left(\int q -1\right).
\end{equation*}
Now, we take the partial derivative of the Lagrangian function with respect to each component and set it to zero:
\begin{equation*}
    \frac{\partial \mathcal{L}}{\partial q} = \hat\lambda(H^I)\log p_m + \log p_I - \log q -1 + \alpha=0,
\end{equation*}
which gives:
\begin{equation*}
    q=C \cdot p_I\cdot p_m^{\hat\lambda(H^I)}.
\end{equation*}

\subsection{Theoretical Justification for the Logarithmic Transformation}
\label{app:log_optimality}

In this appendix, we provide a comprehensive justification for selecting the logarithmic function as the basis for our detection statistic. We first present the physical intuition grounded in the mechanics of neural networks, and then provide a rigorous algebraic proof based on group isomorphism.

\subsubsection{Physical Motivation: Multiplicative Noise in LLMs}
\label{app:physical_motivation}

The assumption that aleatoric uncertainty in language models is multiplicative (rather than additive) is grounded in the fundamental architectural constraints of modern neural networks.

\paragraph{The Softmax Amplification Mechanism.} 
Neural networks typically operate in a continuous logit space $z \in \mathbb{R}^V$, where internal perturbations (arising from quantization errors, dropout, or layer norm fluctuations) act additively:
\begin{equation}
    z_{obs} = z_{true} + \epsilon,
\end{equation}
where $\epsilon$ represents stochastic noise in the latent representation. However, the final probability distribution is generated via the Softmax function $p_i \propto e^{z_i}$. Consequently, an additive perturbation in the logit space manifests as a multiplicative factor in the probability space:
\begin{equation}
    p_{obs} \propto e^{z_{obs}} = e^{z_{true} + \epsilon} = e^{z_{true}} \cdot e^{\epsilon} = p_{true} \cdot \xi,
\end{equation}
where $\xi = e^{\epsilon}$. This structural property inherent to LLMs dictates that the magnitude of noise scales proportionally with the probability magnitude itself, establishing the multiplicative nature of the uncertainty.

\paragraph{Scale Constraints of the Probability Simplex.}
Assuming an additive noise model (e.g., $p_{obs} = p_{true} + \delta$, where $\delta$ is fixed) leads to physical inconsistencies, particularly for the long-tail distribution of tokens. Consider a rare token with a true probability $p_{true} \approx 10^{-9}$. A standard fixed additive noise (e.g., $\delta = 10^{-4}$) would imply that:
\begin{itemize}
    \item The observed probability could shift to negative values ($p < 0$), violating the axioms of probability.
    \item A negligible noise term could arbitrarily increase the probability of a rare token by orders of magnitude (e.g., from $10^{-9}$ to $10^{-4}$), effectively destroying the semantic structure of the language model.
\end{itemize}
In contrast, a multiplicative noise model ($p_{obs} = p_{true} \cdot \xi$) inherently respects the boundary conditions of the probability simplex. It ensures that rare events remain rare under stochastic fluctuation, preserving the order of magnitude for both high-frequency and low-frequency tokens.

\subsubsection{Exact Derivation via Group Isomorphism}
\label{app:exact_derivation}

Based on the multiplicative noise model established above, we seek a transformation that stabilizes the variance across the entire vocabulary, enabling a unified detection threshold.

\paragraph{Problem Formulation.}
Let the observed probability be modeled as $p_{obs} = p_{true} \cdot \xi$, where $\xi$ is a random noise variable independent of the signal strength $p_{true}$. In the linear probability space, the variance is heteroscedastic:
\begin{equation}
    \mathrm{Var}(p_{obs} \mid p_{true}) = \mathrm{Var}(p_{true} \cdot \xi) = p_{true}^2 \cdot \mathrm{Var}(\xi).
\end{equation}
This quadratic scaling ($ \propto p^2$) creates a "scale collapse" effect, where the variance for high-probability tokens dominates the detection metric, rendering it insensitive to anomalies in the low-probability regime.

\paragraph{Variance Stabilization via Isomorphism.}
We rely on the \textit{translation-invariance property} of variance: $\mathrm{Var}(Y + c) = \mathrm{Var}(Y)$, where $c$ is a constant. To utilize this property for stabilization, we require a transformation $f(\cdot)$ that maps the multiplicative structure of the noise to an additive structure. Mathematically, this requires a group isomorphism from the multiplicative group of positive real numbers $(\mathbb{R}^+, \times)$ to the additive group of real numbers $(\mathbb{R}, +)$:
\begin{equation}
    f(x \cdot y) = f(x) + f(y).
\end{equation}
The logarithmic function $f(p) = \log p$ is the unique continuous solution to this functional equation (up to a scaling constant).

\paragraph{Proof of Homoscedasticity.}
Applying this transformation to our noise model:
\begin{equation}
    \log p_{obs} = \log(p_{true} \cdot \xi) = \log p_{true} + \log \xi.
\end{equation}
We now compute the variance of the transformed statistic conditioned on the context (where $p_{true}$ is fixed):
\begin{align}
    \mathrm{Var}(\log p_{obs} \mid p_{true}) &= \mathrm{Var}(\log p_{true} + \log \xi) \nonumber \\
    &= \mathrm{Var}(\log \xi).
\end{align}
Crucially, the term $\log p_{true}$ acts as an additive constant and vanishes during the variance calculation. The resulting variance $\mathrm{Var}(\log \xi)$ depends \textit{solely} on the intrinsic noise distribution $\xi$ and is invariant to the probability magnitude $p_{true}$. 

\textit{Conclusion:} This exact derivation demonstrates that $\log p$ is the theoretically optimal transformation for normalizing multiplicative noise. It ensures that the noise floor (denominator in SNR) is uniform across the entire vocabulary, thereby strictly justifying the choice of the logarithmic scale in our proposed statistic.

\newpage
\subsection{Empirical Validation}
\label{sec:ddt-very}
\renewcommand{\floatpagefraction}{0.99}

\subsubsection{Qwen}
\begin{figure}[H]
  \begin{center}
    \centerline{\includegraphics[width=0.8\columnwidth]{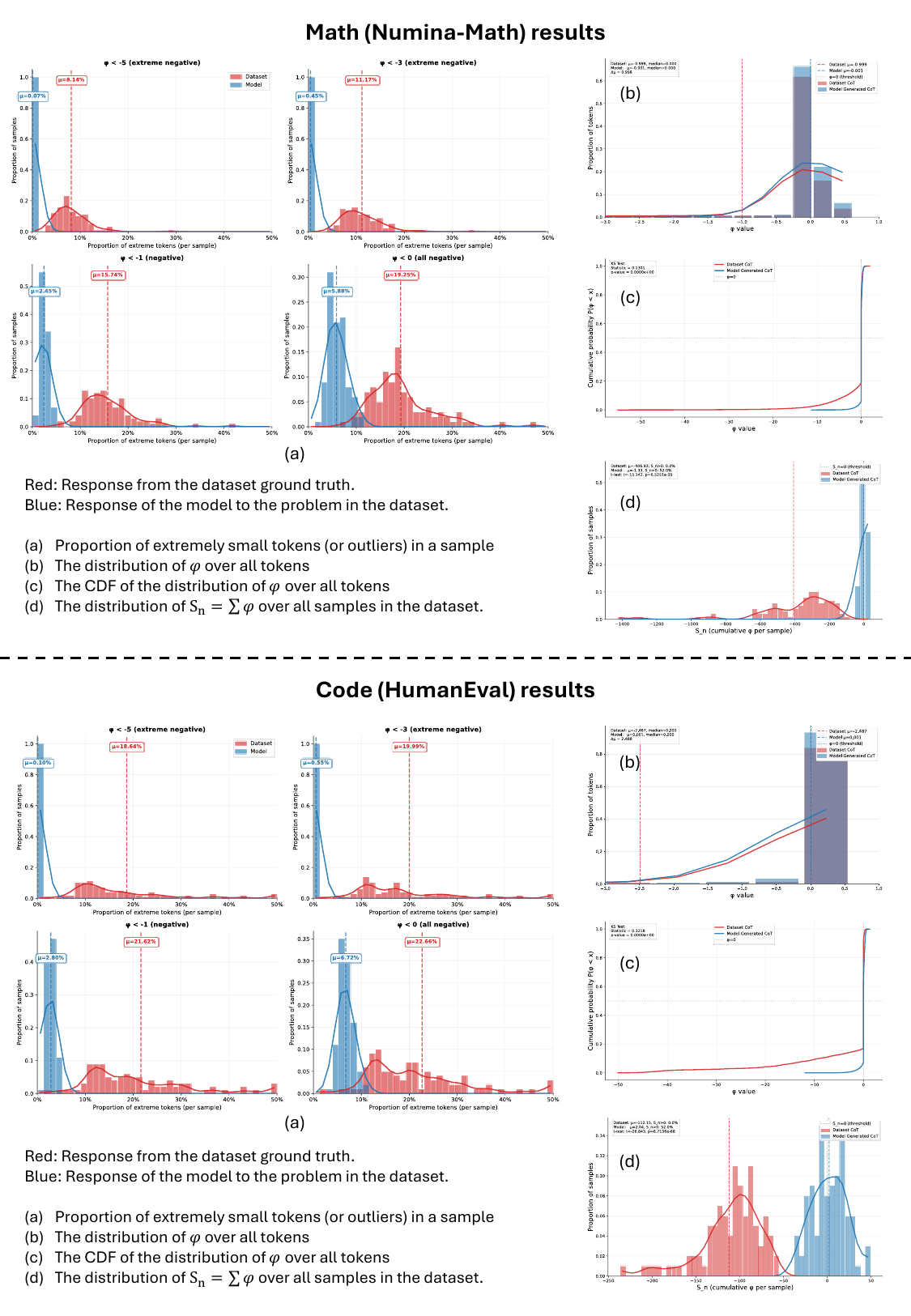}}
    \caption{
      The statistic results with \textbf{Qwen3-8B} model.
    }
    \label{fig:app-qwen}
  \end{center}
\end{figure}

\newpage
\subsubsection{deepseek}
\begin{figure}[H]
  \begin{center}
    \centerline{\includegraphics[width=0.8\columnwidth]{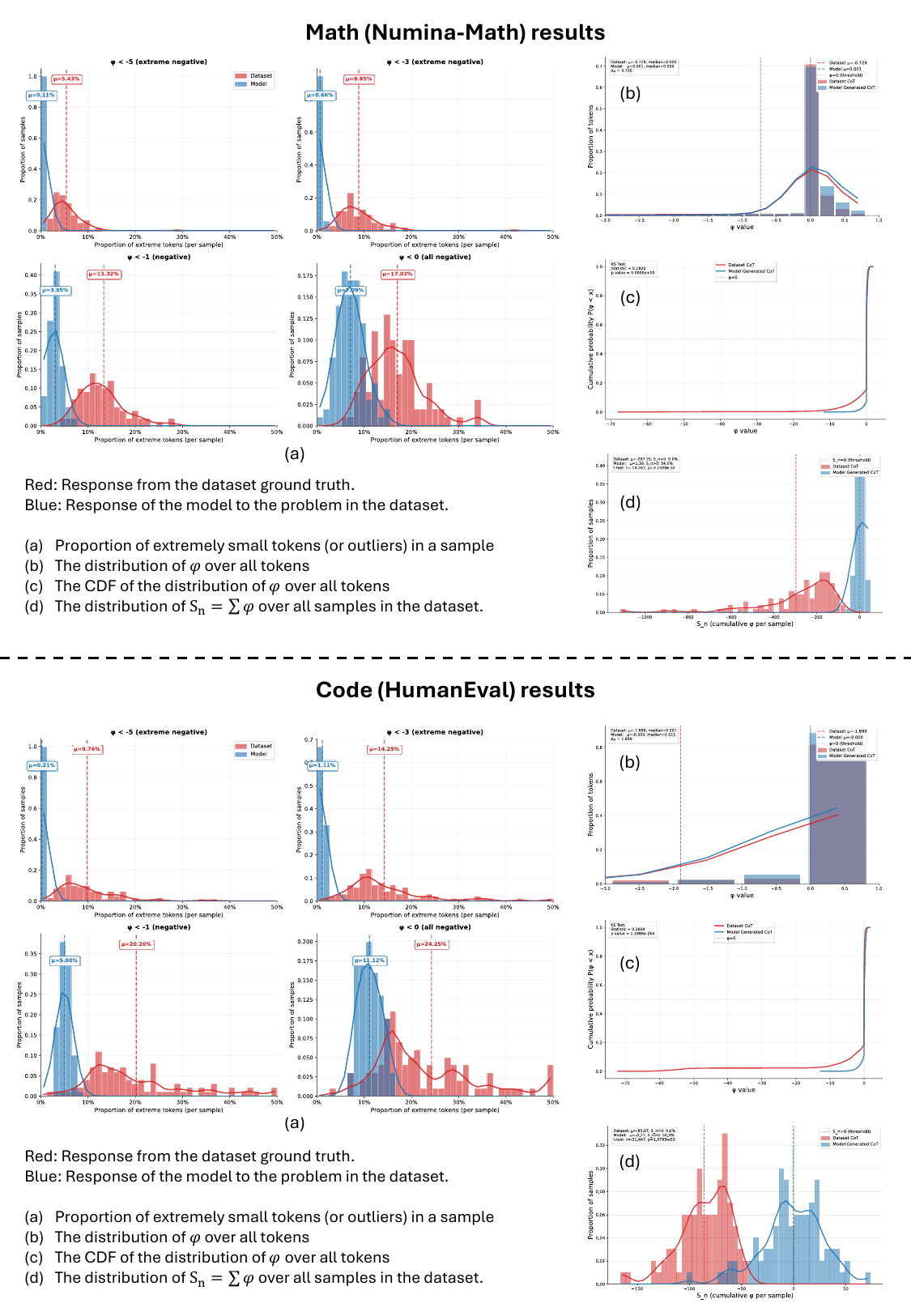}}
    \caption{
      The statistic results with \textbf{Deepseek-R1} model.
    }
    \label{fig:app-dpsk}
  \end{center}
\end{figure}

\newpage
\subsubsection{Mistral}
\begin{figure}[H]
  \begin{center}
    \centerline{\includegraphics[width=0.8\columnwidth]{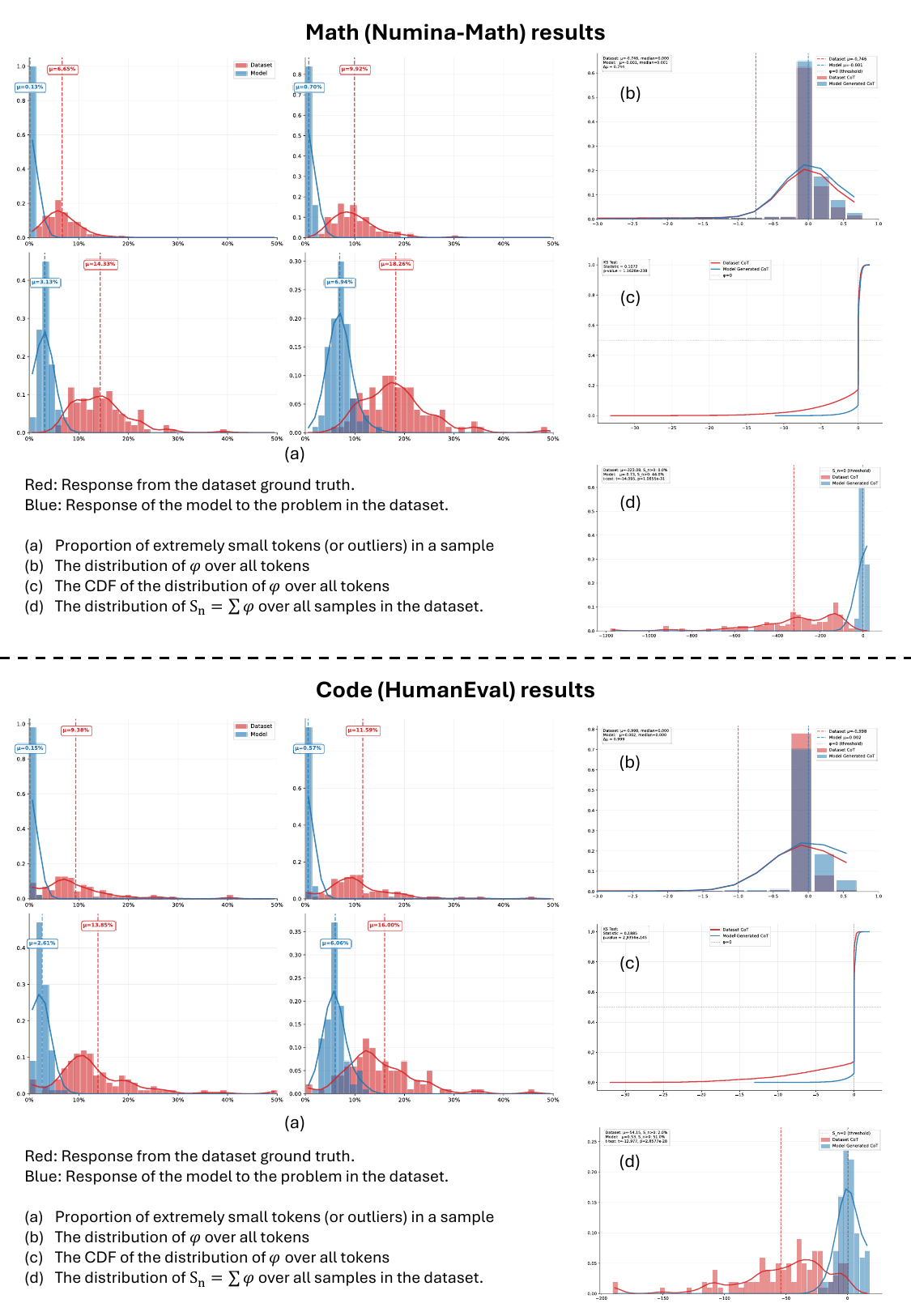}}
    \caption{
      The statistic results with \textbf{Mistral-7B} model.
    }
    \label{fig:app-mistral}
  \end{center}
\end{figure}

\newpage
\subsubsection{llama}
\begin{figure}[H]
  \begin{center}
    \centerline{\includegraphics[width=0.8\columnwidth]{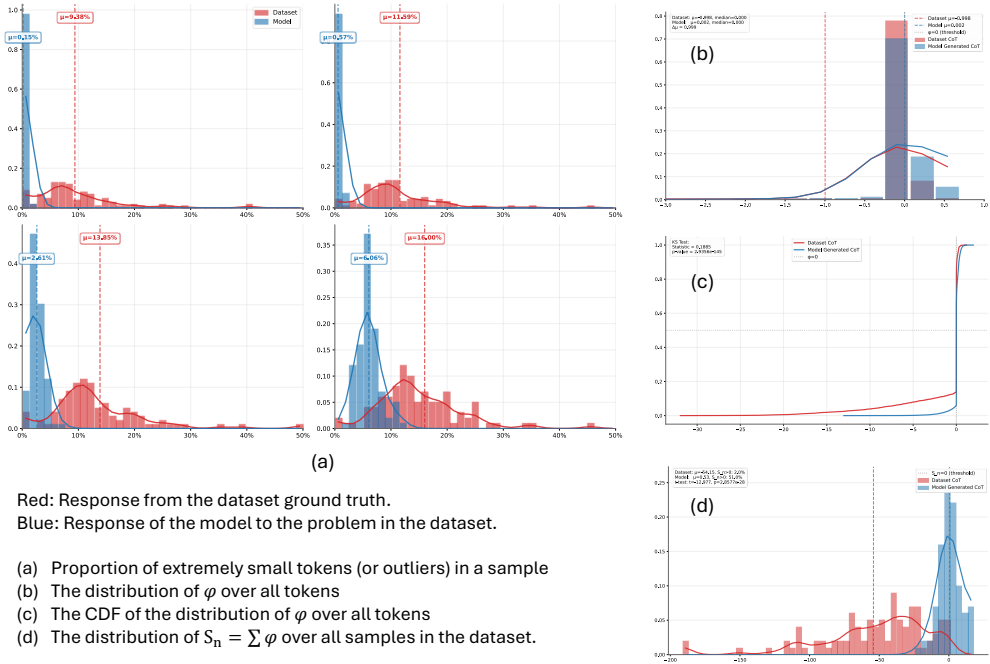}}
    \caption{
      The statistic results with \textbf{LLaMA-3.1-8B} model.
    }
    \label{fig:app-llama}
  \end{center}
\end{figure}

\newpage

\section{More Experiment Details}
\label{sec:exp-detai}

\subsection{Benchmarks selection}
\label{sec:bench-selection}

The benchmarks listed in Table \ref{tab:finalresult} were not cherry-picked to favor our approach, instead, we evaluated all the following benchmarks: AMC23, AIME24, AIME25, CARP, College-math, MATH, Math-OAI, Minerva-math, Olympiadbench, SAT-math, AQUA, GSM8K. We observed that for several of these benchmarks, even using Offline RL algorithms failed to yield performance improvements. This is primarily due to the use of a strong baseline model. Instruct models are typically fine-tuned by various organizations on their respective high-quality datasets, which often include extensive mathematical datasets, and many of these models have also undergone RL post-training via RLHF. Consequently, the degree of overlap between the training and test datasets becomes a key factor influencing performance.
Therefore, after evaluating the aforementioned benchmarks, we selected all benchmarks that satisfied the following criterion: at least two Offline RL algorithms were able to stably achieve performance improvements on the benchmark. The selected benchmarks are those presented in Table \ref{tab:finalresult}.

\subsection{Budget Calculation}

We implemented Hinted Decoding in vLLM and used the vLLM library for all generations. We report aggregate GPU-hours, including the time spent by an equally sized LLM used to assess answer correctness. Notably, for rollouts such as DPO@16, we did not inflate runtime by sampling 16 candidates per instance in one shot. Instead, we adopted the most compute-efficient protocol: generate one sample at a time and roll out an additional sample only when the current one fails to meet the criterion (up to the specified cap). All reported runtimes were measured on NVIDIA H100 machines equipped with NVLink. We do not report FLOPs-based metrics because generation is memory-bound rather than compute-bound; FLOPs would substantially understate the actual wall-clock generation time.

\subsection{Detailed Evaluation Results}
\label{sec:detailed-eval}

\begin{table}[h]
  \centering
  \vspace{-0.2cm}
  \caption{Ablation-The impact of $\beta$}
  \resizebox{0.9\textwidth}{!}{
    \begin{tabular}{lccc|cccccc}
    \toprule
    \textbf{Temperature} & \textbf{0.1}   & \textbf{0.3}   & \textbf{0.7}    & $\mathbf{\beta}=$ &  \textbf{0}  & \textbf{1}     & \textbf{3}     & \textbf{5}     & \textbf{10} \\
    \midrule
    Accuracy (\%) $\uparrow$ &   0.0    &   0.0    &   0.0    &      &   100.0     &   91.7    &   95.8    &   70.8    & 54.2 \\
    AVG-$\psi$ $\uparrow$ &    0.0671   &   0.0523   &   0.0121    &       &   -0.1132 & -0.0635 & -0.0582 & -0.0155 & 0.0483   \\
    $\psi>-1$ (\%) $\uparrow$ &    99.63      &   99.37    &   98.87    &       &  92.75     &   96.01    &    96.32   &    97.58   & 99.06 \\
    $\psi>-3$ (\%) $\uparrow$ &    100.00    &   99.90    &  99.17     &       &      96.69 &  97.18 &  97.48 &  97.90 & 99.65 \\
   $\psi>-5$ (\%) $\uparrow$ &    100.00   &   99.97    &   99.42    &       &   97.74    &   99.36    &   99.35    &   99.57    & 99.85 \\
    \bottomrule
    \end{tabular}}
    \vspace{-0.3cm}
  \label{tab:addlabel}%
\end{table}%

\begin{table}[H]
  \caption{Detailed evaluation results of Qwen2.5-7B-Base}
    \begin{adjustbox}{max width=\linewidth}
    \begin{tabular}{l|cccccccccccc}
    \toprule
    \multirow{2}[6]{*}{Method } & \multicolumn{12}{c}{\textbf{Qwen2.5-7B-Base}} \\
\cmidrule{2-13}
      & \multicolumn{1}{c}{AMC23} & \multicolumn{1}{c}{AIME24} & \multicolumn{1}{c}{Oliympiadbench} & \multicolumn{1}{c}{Math-C} & \multicolumn{1}{c}{College-Math} & \multicolumn{1}{c}{Math-OAI} & \multicolumn{1}{c}{Minerva-math} & \multicolumn{1}{c}{Math-G} & \multicolumn{1}{c}{MMLU-Stem} & \multicolumn{1}{c}{ARC-Challenge} & \multicolumn{1}{c}{General-R} & \multicolumn{1}{c}{AVG} \\
    \midrule
    Qwen base       & 29.84 & 6.86 & 29.50 & 22.07 & 42.03 & 60.50 & 23.88 & 42.14 & 56.89 & 39.67 & 48.28 & 36.51 \\
    \midrule
    \multicolumn{13}{c}{\textbf{\textit{Supervised finetuning approach}}} \\
    Origin SFT      & 35.31 & 8.75 & 27.73 & 23.93 & 40.09 & 65.02 & 27.60 & 44.24 & 66.21 & 37.99 & 52.10 & 39.00 \\
    Origin DFT      & 30.15 & 3.32 & 17.80 & 17.09 & 30.03 & 44.83 & 19.16 & 31.34 & 65.91 & 37.95 & 51.93 & 31.77 \\
    Origin EAFT     & 36.25 & 8.95 & 26.67 & 23.96 & 38.87 & 63.15 & 26.63 & 42.88 & 69.45 & 37.69 & 53.57 & 38.92 \\
    \midrule
    \multicolumn{13}{c}{\textbf{\textit{Offline RL approach}}} \\
    Reject sampling & 35.15 & 7.93 & 27.77 & 23.62 & 42.95 & 62.57 & 23.14 & 42.89 & 56.29 & 38.56 & 47.43 & 37.12 \\
    DPO             & 32.50 & 5.84 & 28.94 & 22.43 & 44.45 & 63.60 & 29.79 & 45.95 & 56.83 & 39.76 & 48.30 & 38.03 \\
    SimPO           & 29.84 & 5.83 & 25.04 & 20.24 & 37.02 & 53.17 & 25.16 & 38.45 & 56.65 & 39.21 & 47.93 & 34.41 \\
    \midrule
    \multicolumn{13}{c}{\textbf{\textit{Our approach}}} \\
    SD+SFT          & 36.09 & 8.54 & 30.26 & 24.96 & 43.17 & 64.37 & 24.68 & 44.07 & 66.48 & 39.50 & 52.99 & 39.56 \\
    HD+SFT          & 38.75 & 7.70 & 30.16 & 25.54 & 44.35 & 65.30 & 25.78 & 45.14 & 66.54 & 39.33 & 52.94 & 40.14 \\
    HD+IDFT         & 41.88 & 8.75 & 31.48 & 27.37 & 43.33 & 66.40 & 31.48 & 47.07 & 66.83 & 39.84 & 53.34 & 41.61 \\
    \bottomrule
    \end{tabular}
    \end{adjustbox}
  \label{tab:qwen25_7b_base}
\end{table}

\begin{table}[H]
  \caption{Detailed evaluation results of Qwen2.5-7B-instruct model}
    \begin{adjustbox}{max width=\linewidth}
    \begin{tabular}{l|cccccccccccc}
    \toprule
    \multirow{2}[6]{*}{Method } & \multicolumn{12}{c}{\textbf{Qwen2.5-7B-instruct}} \\
\cmidrule{2-13}    & \multicolumn{1}{c}{AMC23} & \multicolumn{1}{c}{AIME24} & \multicolumn{1}{c}{Oliympiadbench} & \multicolumn{1}{c}{Math-C} & \multicolumn{1}{c}{College-Math} & \multicolumn{1}{c}{Math-OAI} & \multicolumn{1}{c}{Minerva-math} & \multicolumn{1}{c}{Math-G} & \multicolumn{1}{c}{MMLU-Stem} & \multicolumn{1}{c}{ARC-Challenge} & \multicolumn{1}{c}{General-R} & \multicolumn{1}{c}{AVG} \\
    \midrule
    origin  &  51.7  &10.6  &38.5  &33.6  &43.4  &76.0  &41.0  &53.5  &68.3  &55.2  &61.8  &49.6 \\
    \midrule
    \multicolumn{13}{c}{\textbf{\textit{Supervised finetuning approach}}} \\
    SFT    &  24.1  &2.3  &19.0  &15.1  &24.2  &41.8  &10.0  &25.3  &62.7  &51.0  &56.9  &32.4 \\
    DFT      & 16.1  &2.3  &13.8  &10.7  &24.7  &34.4  &9.5  &22.8  &56.0  &45.0  &50.5  &28.0 \\
    EAFT    &  30.0  &3.1  &17.5  &16.9  &25.0  &44.7  &11.3  &27.0  &63.3  &48.6  &55.9  &33.3 \\
    \midrule
    \multicolumn{13}{c}{\textbf{\textit{Offline RL approach}}} \\
    Rej@2 &    50.8 & 10.1 & 37.9 & 32.9 & 43.4 & 75.7 & 41.1 
& 53.4 & 68.4 & 54.0 & 61.2 & 49.2 \\
    Rej@4   &  49.2  &10.2  &38.8  &32.7  &43.8  &76.0  &42.5  &54.1  &68.9  &53.8  &61.4  &49.4 \\
    DPO     &   51.4  &11.0  &38.3  &33.6  &43.2  &75.7  &41.7  &53.5  &68.2  &55.0  &61.6  &49.6 \\
    RPO &   50.8  &10.6  &35.9  &32.4  &42.2  &73.8  &40.4  &52.1  &68.5  &55.1  &61.8  &48.8   \\
    SimPO     &   53.4  &10.6  &37.5  &33.9  &43.5  &77.0  &41.8  &54.1  &68.5  &55.1  &61.8  &49.9 \\
    CPO    &   51.9  &11.5  &37.2  &33.5  &43.4  &76.4  &41.4  &53.7  &68.5  &54.3  &61.4  &49.5 \\
    \midrule
    \multicolumn{13}{c}{\textbf{\textit{Our approach}}} \\
    HD+SFT    &  53.8  &10.2  &38.2  &34.1  &43.6  &75.5  &42.5  &53.9  &68.5  &53.8  &61.2  &49.7 \\
    HD+IDFT    & 53.4  &12.3  &38.0  &34.6  &42.0  &76.2  &41.3  &53.2  &68.4  &54.6  &61.5  &49.8  \\
    \bottomrule
    \end{tabular}
    \end{adjustbox}
  \label{tab:addlabel}
\end{table}

\begin{table}[H]
  \caption{Detailed evaluation results of DeepSeek-R1-Distill-Qwen-7B}
    \begin{adjustbox}{max width=\linewidth}
    \begin{tabular}{l|cccccccccccc}
    \toprule
    \multirow{2}[6]{*}{Method } & \multicolumn{12}{c}{\textbf{DeepSeek-R1-Distill-Qwen-7B}} \\
\cmidrule{2-13}
      & \multicolumn{1}{c}{AMC23} & \multicolumn{1}{c}{AIME24} & \multicolumn{1}{c}{Oliympiadbench} & \multicolumn{1}{c}{Math-C} & \multicolumn{1}{c}{College-Math} & \multicolumn{1}{c}{Math-OAI} & \multicolumn{1}{c}{Minerva-math} & \multicolumn{1}{c}{Math-G} & \multicolumn{1}{c}{MMLU-Stem} & \multicolumn{1}{c}{ARC-Challenge} & \multicolumn{1}{c}{General-R} & \multicolumn{1}{c}{AVG} \\
    \midrule
    origin    & 63.90 & 25.61 & 41.28 & 43.60 & 44.77 & 83.30 & 35.97 & 54.68 & 70.56 & 48.20 & 59.38 & 51.93 \\
    \midrule
    \multicolumn{13}{c}{\textbf{\textit{Supervised finetuning approach}}} \\
    SFT       & 60.63 & 18.75 & 38.90 & 39.43 & 44.73 & 80.58 & 35.25 & 53.52 & 67.64 & 52.21 & 59.93 & 50.14 \\
    DFT       & 59.36 & 19.21 & 40.01 & 39.53 & 44.59 & 80.40 & 34.12 & 53.04 & 66.00 & 46.16 & 56.08 & 48.95 \\
    EAFT      & 60.44 & 18.88 & 39.17 & 39.50 & 44.20 & 79.44 & 35.17 & 52.94 & 67.26 & 53.15 & 60.21 & 50.03 \\
    \midrule
    \multicolumn{13}{c}{\textbf{\textit{Offline RL approach}}} \\
    Reject sampling & 63.75 & 23.11 & 41.19 & 42.68 & 44.40 & 83.03 & 35.95 & 54.46 & 70.47 & 48.37 & 59.42 & 51.53 \\
    DPO       & 63.59 & 25.00 & 41.46 & 43.35 & 44.90 & 84.07 & 36.57 & 55.18 & 70.37 & 48.29 & 59.33 & 52.01 \\
    SimPO     & 64.21 & 25.00 & 42.19 & 43.80 & 44.59 & 83.07 & 35.63 & 54.43 & 70.34 & 48.20 & 59.27 & 51.88 \\
    \midrule
    \multicolumn{13}{c}{\textbf{\textit{Our approach}}} \\
    SD+SFT    & 64.54 & 24.78 & 41.90 & 43.74 & 45.10 & 82.95 & 36.66 & 54.90 & 68.37 & 51.22 & 59.80 & 52.18 \\
    HD+SFT    & 65.31 & 27.09 & 41.48 & 44.63 & 45.21 & 83.15 & 36.16 & 54.84 & 68.53 & 51.79 & 60.16 & 52.58 \\
    HD+IDFT   & 64.22 & 26.25 & 41.03 & 43.83 & 45.38 & 84.15 & 36.99 & 55.51 & 68.44 & 51.45 & 59.95 & 52.47 \\
    \bottomrule
    \end{tabular}
    \end{adjustbox}
  \label{tab:addlabel}
\end{table}

\begin{table}[htbp]
  \centering
  \caption{Performance comparison on Qwen2.5-Math-1.5B and Qwen2.5-Math-7B.}
    \resizebox{\textwidth}{!}{\begin{tabular}{l|cccccccc|cc|c}
    \toprule
    \multirow{2}[4]{*}{Method} & \multicolumn{8}{c}{\textbf{Math Reasoning}} & \multicolumn{2}{c}{\textbf{General}} & \multirow{2}[4]{*}{AVG} \\
\cmidrule(lr){2-9} \cmidrule(lr){10-11}          & AIME  & AMC23 & CARP  & College & MATH500   & Minerva & Olympiad & SAT   & ARC   & MMLU  &  \\
    \midrule
    \multicolumn{12}{c}{\textbf{\textit{Qwen2.5-Math-1.5B}}} \\
    \midrule
    Base Model & 5.41  & 26.40 & 47.47 & 25.35 & 43.25 & 7.50  & 23.88 & 42.00 & 38.13 & 47.57 & 30.7 \\
    SFT   & 5.31  & 32.81 & 54.13 & 34.55 & 59.60 & 24.71 & 27.57 & 59.95 & 37.54 & 45.68 & 38.2 \\
    DFT   & 5.41  & 36.09 & 57.14 & 38.11 & 64.88 & 25.70 & 28.15 & 84.95 & 37.20 & 47.48 & 42.5 \\
    EAFT  & 4.47  & 35.46 & 54.70 & 35.85 & 59.50 & 25.34 & 27.75 & 54.69 & 37.88 & 46.68 & 38.2 \\
    IDFT  & 4.06  & 45.31 & 57.35 & 41.13 & 68.20 & 29.54 & 29.13 & 91.62 & 38.14 & 46.81 & 45.1 \\
    \midrule
    \multicolumn{12}{c}{\textbf{\textit{Qwen2.5-Math-7B}}} \\
    \midrule
    Base Model & 7.89  & 36.33 & 43.63 & 34.83 & 54.11 & 17.37 & 25.28 & 53.74 & 49.06 & 64.03 & 38.6 \\
    SFT   & 3.54  & 36.25 & 44.88 & 28.88 & 58.53 & 20.81 & 27.07 & 57.21 & 47.55 & 63.21 & 38.8 \\
    DFT   & 6.26  & 37.34 & 57.25 & 36.90 & 68.00 & 31.50 & 32.54 & 86.90 & 46.50 & 63.30 & 46.6 \\
    EAFT  & 4.89  & 36.25 & 36.82 & 26.98 & 56.95 & 21.49 & 26.33 & 62.11 & 49.15 & 64.19 & 38.5 \\
    IDFT  & 7.69  & 39.84 & 59.20 & 41.47 & 68.03 & 33.33 & 30.60 & 87.30 & 48.14 & 64.77 & 48.0 \\
    \bottomrule
    \end{tabular}}
  \label{tab:final_comparison_math}%
\end{table}%

\section{Algorithm Details}
\subsection{Hinted Decoding}
\label{sec:hinted-decoding}

\begin{algorithm2e}[H]
\caption{Target Context Preparation}\label{alg:target-context}
\KwInput{Question $q$, ground-truth answer $a^{*}$, shadow system prompt $\mathcal{S}$, boundary marker $\tau_{\mathrm{bnd}}$ (e.g.\ \texttt{\# CoT} or \texttt{</think>}), model $\mathcal{M}_\theta$}
\KwOutput{Target context token sequence $\bm{c}_{\mathrm{tgt}}$}
\BlankLine
$\mathcal{P}_{\mathrm{shadow}} \gets [\,\textsc{System}\!:\mathcal{S},\;\textsc{User}\!: \textsc{Template}(q, a^{*})\,]$ \Comment*[r]{answer-visible prompt}
$c_{\mathrm{analysis}} \gets \Generate(\mathcal{M}_\theta,\;\mathcal{P}_{\mathrm{shadow}})$ \Comment*[r]{standard decoding}
Truncate $c_{\mathrm{analysis}}$ at the \textbf{first} occurrence of $\tau_{\mathrm{bnd}}$ (inclusive)\;
$\bm{c}_{\mathrm{tgt}} \gets \Concat\!\big( \Tokenize(\mathcal{P}_{\mathrm{shadow}}),\;\Tokenize(c_{\mathrm{analysis}})\big)$ \Comment*[r]{analysis absorbed}
\Return $\bm{c}_{\mathrm{tgt}}$\;
\end{algorithm2e}

\begin{algorithm2e}[H]
\caption{Hinted Decoding}\label{alg:hinted-decoding}
\KwInput{Target context $\bm{c}_{\mathrm{tgt}}$ (from Alg.\,\ref{alg:target-context}), drafter prompt $\bm{c}_{\mathrm{dft}}=\Tokenize(q)$, model $\mathcal{M}_\theta$}
\KwParam{Mixing strength $\beta$, mode $\in\{\texttt{linear},\texttt{sigmoid},\texttt{piecewise}\}$, splitter token sequence $\bm{s}$, max length $T$}
\KwOutput{Generated token sequence $\bm{y}=(y_1,\dots,y_n)$}
\BlankLine
$\bm{y}\gets [\,]$;\quad $\textit{drafter\_only}\gets\textbf{false}$\;
\For{$t = 1,2,\dots,T$}{
    $\bm{\ell}^{(p)} \gets \mathcal{M}_\theta(\bm{c}_{\mathrm{tgt}} \oplus \bm{y})$ \Comment*[r]{target logits (answer-aware)}
    $\bm{\ell}^{(q)} \gets \mathcal{M}_\theta(\bm{c}_{\mathrm{dft}} \oplus \bm{y})$ \Comment*[r]{drafter logits (question-only)}
    \BlankLine
    \eIf{\textnormal{\textit{drafter\_only}}}{
        $\log \bm{m} \gets \textsc{LogSoftmax}(\bm{\ell}^{(q)})$ \Comment*[r]{drafter distribution only}
    }{
        $\log \bm{p} \gets \textsc{LogSoftmax}(\bm{\ell}^{(p)})$;\quad
        $\log \bm{q} \gets \textsc{LogSoftmax}(\bm{\ell}^{(q)})$\;
        $H \gets -\sum_{x} p(x)\log p(x)$;\quad
        $\bar{H} \gets H\,/\,\log|\mathcal{V}|$ \Comment*[r]{normalised entropy $\in[0,1]$}
        \BlankLine
        \Switch{\textnormal{mode}}{
            \uCase{\texttt{linear}}{$\lambda \gets \Clamp(\beta\,\bar{H},\;0,\;1)$\;}
            \uCase{\texttt{sigmoid}}{$\lambda \gets \sigma\!\big(\beta\,(\bar{H}-c)\big)$\;}
            \uCase{\texttt{piecewise}}{$\lambda \gets \Clamp\!\big((\bar{H}-h_1)/(h_2-h_1),\;0,\;1\big)$\;}
        }
        $\log \bm{m} \gets (1-\lambda)\,\log\bm{p} + \lambda\,\log\bm{q}$ \Comment*[r]{geometric mixture in log space}
    }
    \BlankLine
    $y_t \gets \Sample(\bm{m})$ \Comment*[r]{e.g.\ top-$p$ / top-$k$ sampling}
    \BlankLine
    \Comment{Splitter \& EOS handling}
    \uIf{$\bm{s} \sqsubseteq (\bm{y}\oplus y_t)$}{
        $\textit{drafter\_only}\gets\textbf{true}$ \Comment*[r]{splitter detected $\Rightarrow$ switch}
    }
    \uElseIf{$y_t = \textsc{eos}$ \textbf{and} $\bm{s}\not\sqsubseteq (\bm{y}\oplus y_t)$}{
        Replace $y_t \gets s_1$;\quad enqueue $s_2,\dots,s_{|\bm{s}|}$ as forced tokens\;
        $\textit{drafter\_only}\gets\textbf{true}$ \Comment*[r]{force splitter then drafter-only}
    }
    \ElseIf{$y_t = \textsc{eos}$}{
        \textbf{break}\;
    }
    Append $y_t$ to $\bm{y}$;\quad sync $y_t$ to both target and drafter contexts\;
}
\Return $\bm{y}$\;
\end{algorithm2e}

We propose \emph{Hinted Decoding}, a training-free inference-time algorithm
that steers a language model toward producing correct chain-of-thought (CoT)
reasoning while preserving the model's native distribution.  The key idea
is to maintain \textbf{two concurrent decoding streams} over the \emph{same}
model $\mathcal{M}_\theta$, each conditioned on a different prompt, and to
\textbf{geometrically mix} their next-token distributions at every step.

\paragraph{Two-stream prompts.}
Given a question~$q$ and a ground-truth answer~$a^*$, we construct:
\begin{itemize}
  \item A \textbf{target stream} whose prompt contains a \emph{shadow system
        instruction}~$\mathcal{S}$, the question, and the answer.  Crucially,
        the model first \emph{independently generates} an analysis passage
        $c_{\mathrm{analysis}}$ that digests $a^*$
        (Algorithm~\ref{alg:target-context}).  This passage is then appended to
        the target context \emph{before} mixed decoding begins, so that any
        language referencing ``the provided answer'' is absorbed into the
        analysis and never leaks into the final CoT output.
  \item A \textbf{drafter stream} prompted with only the original question~$q$,
        representing the model's unassisted behaviour.
\end{itemize}

\paragraph{Entropy-adaptive geometric mixing.}
At each decoding step~$t$, let $p(\cdot)$ and $q(\cdot)$ denote the target and
drafter distributions, respectively.  We compute the mixed distribution
\begin{equation}\label{eq:mixture}
  m(x) \;\propto\; p(x)^{\,1-\lambda}\;\cdot\;q(x)^{\,\lambda},
\end{equation}
where the mixing coefficient $\lambda\in[0,1]$ is a \emph{function of the
target entropy}:
\begin{equation}\label{eq:lambda}
  \bar{H}_t = \frac{H\bigl(p(\cdot)\bigr)}{\log|\mathcal{V}|}, \qquad
  \lambda_t = f_{\beta}(\bar{H}_t).
\end{equation}
The normalised entropy $\bar{H}_t\in[0,1]$ reflects the target model's
confidence at step~$t$: when the target is \emph{certain} ($\bar{H}_t\!\to\!0$),
$\lambda\!\to\!0$ and the output follows the answer-aware distribution; when
the target is \emph{uncertain} ($\bar{H}_t\!\to\!1$), $\lambda\!\to\!1$ and
the drafter's style-preserving distribution dominates.  We support three
schedules for $f_\beta$:
\begin{align}
  \textsc{Linear}:&\quad \lambda = \mathrm{clamp}(\beta\,\bar{H},\;0,\;1), \label{eq:linear}\\
  \textsc{Sigmoid}:&\quad \lambda = \sigma\!\bigl(\beta\,(\bar{H}-c)\bigr), \label{eq:sigmoid}\\
  \textsc{Piecewise}:&\quad \lambda = \mathrm{clamp}\!\Bigl(\frac{\bar{H}-h_1}{h_2-h_1},\;0,\;1\Bigr). \label{eq:piecewise}
\end{align}

\paragraph{Splitter mechanism and EOS replacement.}
Mathematical reasoning models typically delimit the final answer with a special
token pattern (\emph{splitter}), such as \verb|\boxed{| or \verb|</think>|.
Once the splitter appears in the generated sequence, decoding switches to
\textbf{drafter-only} mode ($\lambda\!=\!1$), letting the model complete the
answer box in its own style.  If the target stream emits an EOS token
\emph{before} the splitter has appeared, we \textbf{replace} the EOS with the
first token of the splitter sequence and force-feed the remaining splitter
tokens, then enter drafter-only mode.  This guarantees that every generated
response contains a well-formed answer region.

\paragraph{Advantages.}
\begin{enumerate}
  \item \textbf{Style preservation.}  By mixing rather than replacing the
        drafter distribution, the generated CoT retains the model's natural
        linguistic patterns, vocabulary choices, and reasoning style---critical
        for on-policy training where distribution shift is detrimental.
  \item \textbf{Adaptive control.}  The entropy-dependent~$\lambda$ provides a
        principled, token-level knob: tokens where the answer-aware model is
        confident (e.g., key numerical steps) are faithfully guided, while
        ``filler'' tokens (connectives, hedging phrases) are drawn from the
        drafter, avoiding unnatural artifacts.
  \item \textbf{Pre-analysis absorption.}  Generating the target's analysis
        passage \emph{before} mixed decoding ensures that meta-references to the
        ground truth (e.g., ``the provided answer states\ldots'') are confined
        to the analysis context and never appear in the final output.
  \item \textbf{Answer format guarantee.}  The splitter mechanism with EOS
        replacement ensures that every response, regardless of the mixing
        dynamics, contains a syntactically valid answer region (e.g.,
        \verb|\boxed{...}|).
  \item \textbf{No additional model required.}  Both streams use the same
        model~$\mathcal{M}_\theta$; the only difference is the prompt.  This
        avoids the cost and alignment issues of maintaining a separate draft
        model.
\end{enumerate}
\newpage
\section{Case Study}

We will show detailed prompts and case studies in this section.

\subsection{Imitation system prompt}
\label{sec:systemp}

\begin{tcolorbox}[title={Prompt for Imitation baseline. The 1-shot example comes from the target model.}]
System prompt:

------------------------------------------------------------------------------------------------------------------

You will be provided with a question and a corresponding ground truth answer which is ensured to be correct.
Your job is to transform the answer to a detailed chain-of-thought (CoT) reasoning process that logically leads to the given answer.
Make sure the content in the CoT is closely matching with the ground truth answer.

Your output should contain a `\# Analyze' part to first analyze the given solution, and then generate a `\# CoT' part to provide a complete CoT.
You **MUST NOT** mention the provided answer in the CoT part, as the CoT of original problem does not know the ground truth.
Most importantly, the CoT should follow exactly the language style of your own thinking.
Here is an example:

Input:

\# Question

What is the result of sum 1 to 10? please analyze step by step and put the result in \boxed{}

\# Answer:

we can use the formula Sn = (1+n)*n/2.
let n=10, wehave:
Sn = (1+10)*10/2 = \boxed{55}.

Output:

\# Analyze

We need to find out how to come up with the idea in the provided answer, lets analyze the provided answer line by line first.

Let's check the first line of the solution, it used a fomula that the results of sum from 1 to N is Sn = (1+N) * N / 2.
This is well known as the Gauss formula and I think we can use it directly.
So when people see this problem, they should start with this formula.

Then the second and third line of the solution put n=10 into the formula and solve the problem.

Now I should transform the solution to my preferred format.

\# CoT

To find the sum of the numbers from 1 to 10, we can use the formula for the sum of the first $n$ natural numbers, which is given by:

\[
S = \frac{n(n + 1)}{2}
\]

Here, $n = 10$. Let's substitute $n$ into the formula and calculate step by step.

1. Substitute $ n = 10 $ into the formula:
   \[
   S = \frac{10(10 + 1)}{2} = \frac{10 \times 11}{2}
   \]

2. Perform the multiplication:
   \[
   S = \frac{110}{2}
   \]

3. Perform the division:
   \[
   S = 55
   \]

So, the sum of the numbers from 1 to 10 is $\boxed{55}$.

\end{tcolorbox}

\newpage
\subsection{Hinted decoded example}
\label{sec:case-study}

\begin{tcolorbox}[title={Example A - Origin CoT - Answer correct \& out of distribution.  (Target Qwen2.5-7B-Instruct)}, colback=red!3!white, 
                  colframe = red!45!black]

\textbf{Problem:} Find the sum of the squares of the solutions to
\[\left| x^2 - x + \frac{1}{2023} \right| = \frac{1}{2023}.\] Let's think step by step and output the final answer within \boxed{}.

------------------------------------------------------------------------------------------------------------------

\textbf{Original answer from dataset:}

Given the equation \[\left| x^2 - x + \frac{1}{2023} \right| = \frac{1}{2023},\]
it splits into two cases:

Case 1: 
\[x^2 - x + \frac{1}{2023} = +\frac{1}{2023}\]
which simplifies to
\[x^2 - x = 0\]
The solutions are $x = 0$ and $x = 1$. The sum of the squares of these solutions is 1.

Case 2: 
\[x^2 - x + \frac{1}{2023} = -\frac{1}{2023}\]
This rearranges to 
\[x^2 - x + \frac{2}{2023} = 0.\]
Let the roots of this equation be $a$ and $b$. By Vieta's formulas,
\[a+b = 1,\]
\[ab = \frac{2}{2023}.\]
Then, the sum of the squares of $a$ and $b$ is
\[a^2 + b^2 = (a + b)^2 - 2ab = 1^2 - 2 \times \frac{2}{2023} = 1 - \frac{4}{2023} = \frac{2019}{2023}.\]

Thus, the total sum of the squares of all solutions is 
\[1 + \frac{2019}{2023} = \boxed{\frac{4042}{2023}}.\]

------------------------------------------------------------------------------------------------------------------

\begin{itemize}
    \item \textbf{Average-$\varphi$:} $-0.251317$
    \item \textbf{$\varphi\geq -1$:} $92.17\%$
    \item \textbf{$\varphi\geq -3$:} $95.23\%$
    \item \textbf{$\varphi\geq -5$:} $96.86\%$
\end{itemize}

------------------------------------------------------------------------------------------------------------------

\textbf{Comments:} We can see that the standard answer is concise and clearly uses \textbf{Vieta's formulas}.

\end{tcolorbox}

\begin{tcolorbox}[title={Example A - Model's Response - Answer wrong \& in distribution. (Target Qwen2.5-7B-Instruct)}, colback=red!3!white, 
                  colframe = red!45!black]

\textbf{Problem:} Find the sum of the squares of the solutions to
\[\left| x^2 - x + \frac{1}{2023} \right| = \frac{1}{2023}.\] Let's think step by step and output the final answer within \boxed{}.

------------------------------------------------------------------------------------------------------------------

\textbf{Answer from Qwen2.5-7B-Instruct:}

To solve the equation $\left| x^2 - x + \frac{1}{2023} \right| = \frac{1}{2023}$, we need to consider the two cases that arise from the absolute value function.

\#\#\# Case 1: $x^2 - x + \frac{1}{2023} = \frac{1}{2023}$

Subtract $\frac{1}{2023}$ from both sides:
\[
x^2 - x + \frac{1}{2023} - \frac{1}{2023} = 0
\]
This simplifies to:
\[
x^2 - x = 0
\]
Factor the quadratic equation:
\[
x(x - 1) = 0
\]
So, the solutions are:
\[
x = 0 \quad \text{or} \quad x = 1
\]

\#\#\# Case 2: $x^2 - x + \frac{1}{2023} = -\frac{1}{2023}$

Add $\frac{1}{2023}$ to both sides:
\[
x^2 - x + \frac{1}{2023} + \frac{1}{2023} = 0
\]
This simplifies to:
\[
x^2 - x + \frac{2}{2023} = 0
\]
To solve this quadratic equation, we use the quadratic formula $x = \frac{-b \pm \sqrt{b^2 - 4ac}}{2a}$, where $a = 1$, $b = -1$, and $c = \frac{2}{2023}$.

First, calculate the discriminant:
\[
b^2 - 4ac = (-1)^2 - 4 \cdot 1 \cdot \frac{2}{2023} = 1 - \frac{8}{2023} = \frac{2023}{2023} - \frac{8}{2023} = \frac{2015}{2023}
\]

Now, apply the quadratic formula:
\[
x = \frac{-(-1) \pm \sqrt{\frac{2015}{2023}}}{2 \cdot 1} = \frac{1 \pm \sqrt{\frac{2015}{2023}}}{2}
\]

So, the solutions are:
\[
x = \frac{1 + \sqrt{\frac{2015}{2023}}}{2} \quad \text{and} \quad x = \frac{1 - \sqrt{\frac{2015}{2023}}}{2}
\]

\end{tcolorbox}

\begin{tcolorbox}[title={Example A - Model's Response - Answer wrong \& in distribution. (Target Qwen2.5-7B-Instruct)}, colback=red!3!white, 
                  colframe = red!45!black]
\#\#\# Sum of the squares of the solutions

Let the solutions be $x_1 = 0$, $x_2 = 1$, $x_3 = \frac{1 + \sqrt{\frac{2015}{2023}}}{2}$, and $x_4 = \frac{1 - \sqrt{\frac{2015}{2023}}}{2}$.

The sum of the squares of these solutions is:
\[
x_1^2 + x_2^2 + x_3^2 + x_4^2
\]

Calculate each square:
\[
x_1^2 = 0^2 = 0
\]
\[
x_2^2 = 1^2 = 1
\]
\[
x_3^2 = \left( \frac{1 + \sqrt{\frac{2015}{2023}}}{2} \right)^2 = \frac{(1 + \sqrt{\frac{2015}{2023}})^2}{4} = \frac{1 + 2\sqrt{\frac{2015}{2023}} + \frac{2015}{2023}}{4} = \frac{1 + \frac{2015}{2023} + 2\sqrt{\frac{2015}{2023}}}{4} = \frac{4038 + 2\sqrt{2015 \cdot 2023}}{4 \cdot 2023}
\]
\[
x_4^2 = \left( \frac{1 - \sqrt{\frac{2015}{2023}}}{2} \right)^2 = \frac{(1 - \sqrt{\frac{2015}{2023}})^2}{4} = \frac{1 - 2\sqrt{\frac{2015}{2023}} + \frac{2015}{2023}}{4} = \frac{1 + \frac{2015}{2023} - 2\sqrt{\frac{2015}{2023}}}{4} = \frac{4038 - 2\sqrt{2015 \cdot 2023}}{4 \cdot 2023}
\]

Summing these:
\[
x_1^2 + x_2^2 + x_3^2 + x_4^2 = 0 + 1 + \frac{4038 + 2\sqrt{2015 \cdot 2023}}{4 \cdot 2023} + \frac{4038 - 2\sqrt{2015 \cdot 2023}}{4 \cdot 2023} \]\[= 1 + \frac{4038 + 2\sqrt{2015 \cdot 2023} + 4038 - 2\sqrt{2015 \cdot 2023}}{4 \cdot 2023} = 1 + \frac{8076}{4 \cdot 2023} = 1 + 1 = 2
\]

Thus, the sum of the squares of the solutions is:
\[
\boxed{2}
\]

------------------------------------------------------------------------------------------------------------------

\begin{itemize}
    \item \textbf{Average-$\varphi$:} $0.042921$
    \item \textbf{$\varphi\geq -1$:} $99.42\%$
    \item \textbf{$\varphi\geq -3$:} $99.91\%$
    \item \textbf{$\varphi\geq -5$:} $100.10\%$
\end{itemize}

------------------------------------------------------------------------------------------------------------------

\textbf{Comments:} This answer differs from the correct answer only in the final step of the calculation: $1 + \frac{8076}{4 \cdot 2023}$
  does not equal 2. We can see that the model's approach in the final step is to forcibly use the quadratic formula to calculate the square of each solution and then sum them up. Additionally, we can observe that this answer has the style of Qwen2.5-7B-instruct, which prefers to reply using markdown format.

\end{tcolorbox}

\begin{tcolorbox}[title={Example A - Hinted Decoding - Answer correct \& in distribution. (Target Qwen2.5-7B-Instruct)}, colback=green!3!white, 
                  colframe = green!45!black]

\textbf{Problem:} Find the sum of the squares of the solutions to
\[\left| x^2 - x + \frac{1}{2023} \right| = \frac{1}{2023}.\] Let's think step by step and output the final answer within \boxed{}.

------------------------------------------------------------------------------------------------------------------

\textbf{Answer from Hinted Decoding:}

To solve the equation $\left| x^2 - x + \frac{1}{2023} \right| = \frac{1}{2023}$, we need to consider the two cases that arise from the absolute value function.

\#\#\# Case 1: $x^2 - x + \frac{1}{2023} = \frac{1}{2023}$

Subtract $\frac{1}{2023}$ from both sides:
\[x^2 - x + \frac{1}{2023} - \frac{1}{2023} = 0\]
\[x^2 - x = 0\]
Factor the quadratic equation:
\[x(x - 1) = 0\]
This gives us the solutions:
\[x = 0 \quad \text{or} \quad x = 1\]

\#\#\# Case 2: $x^2 - x + \frac{1}{2023} = -\frac{1}{2023}$

Add $\frac{1}{2023}$ to both sides:
\[x^2 - x + \frac{1}{2023} + \frac{1}{2023} = 0\]
\[x^2 - x + \frac{2}{2023} = 0\]

Now, solve this quadratic equation using the quadratic formula $x = \frac{-b \pm \sqrt{b^2 - 4ac}}{2a}$, where $a = 1$, $b = -1$, and $c = \frac{2}{2023}$:
\[x = \frac{-(-1) \pm \sqrt{(-1)^2 - 4 \cdot 1 \cdot \frac{2}{2023}}}{2 \cdot 1}\]
\[x = \frac{1 \pm \sqrt{1 - \frac{8}{2023}}}{2}\]
\[x = \frac{1 \pm \sqrt{\frac{2023 - 8}{2023}}}{2}\]
\[x = \frac{1 \pm \sqrt{\frac{2015}{2023}}}{2}\]
\[x = \frac{1 \pm \sqrt{\frac{2015}{2023}}}{2}\]

\end{tcolorbox}
\newpage

\begin{tcolorbox}[title={Example A - Hinted Decoding - Answer correct \& in distribution. (Target Qwen2.5-7B-Instruct)}, colback=green!3!white, 
                  colframe = green!45!black]

\#\#\# Sum of the Squares of the Solutions

The solutions from Case 1 are $x = 0$ and $x = 1$. The sum of their squares is:
\[0^2 + 1^2 = 0 + 1 = 1\]

The solutions from Case 2 are:
\[x = \frac{1 + \sqrt{\frac{2015}{2023}}}{2} \quad \text{and} \quad x = \frac{1 - \sqrt{\frac{2015}{2023}}}{2}\]

However, the sum of the squares of these solutions can be found using the identity for the sum of the squares of the roots of a quadratic equation $ax^2 + bx + c = 0$:
\[x_1^2 + x_2^2 = (x_1 + x_2)^2 - 2x_1x_2\]

For the quadratic equation $x^2 - x + \frac{2}{2023} = 0$:
\[x_1 + x_2 = 1\]
\[x_1 x_2 = \frac{2}{2023}\]

Thus:
\[x_1^2 + x_2^2 = 1^2 - 2 \cdot \frac{2}{2023} = 1 - \frac{4}{2023} = \frac{2023 - 4}{2023} = \frac{2019}{2023}\]

\#\#\# Total Sum of the Squares

Adding the results from both cases:
\[1 + \frac{2019}{2023} = \frac{2023}{2023} + \frac{2019}{2023} = \frac{4042}{2023}\]

Thus, the sum of the squares of the solutions is:
\[\boxed{\frac{4042}{2023}}\]

------------------------------------------------------------------------------------------------------------------

\begin{itemize}
    \item \textbf{Average-$\varphi$:} $0.025732$
    \item \textbf{$\varphi\geq -1$:} $99.38\%$
    \item \textbf{$\varphi\geq -3$:} $99.79\%$
    \item \textbf{$\varphi\geq -5$:} $100.00\%$
\end{itemize}

------------------------------------------------------------------------------------------------------------------

\textbf{Comments:}We can see that this result largely retains the original style of Qwen2.5-7B-instruct: including the use of markdown format for this problem and explicitly calculating the two solutions. However, when it comes to calculating the sum of the squares of the two solutions, it suddenly switches to a more concise method. We are pleasantly surprised to find that it does not explicitly mention Vieta's formulas like the dataset, nor does it directly perform brute force calculations. Instead, it writes an expression that had never appeared in the original dataset, model generation, or Imitation baseline: \[x_1^2 + x_2^2 = (x_1 + x_2)^2 - 2x_1x_2,\] and applies Vieta's formulas. This shows that Hinted Decoding transferred the core content of Vieta's formulas to the model. Meanwhile, the model had already calculated the two roots using the quadratic formula, but suddenly switched to using Vieta's formulas, this kind of shift is very similar to the ``aha moment" that is fervently discussed in RLVR.

\end{tcolorbox}

\begin{tcolorbox}[title={Example B - Origin CoT - Answer correct \& out of distribution. (Target Qwen2.5-7B-Instruct)}, colback=red!3!white, 
                  colframe = red!45!black]

\textbf{Problem:} \textbf{Problem:} Find the greatest positive integer $k$ such that the following inequality holds for all $a,b,c\in\mathbb{R}^+$ satisfying $abc=1$ \[ \frac{1}{a}+\frac{1}{b}+\frac{1}{c}+\frac{k}{a+b+c+1}\geqslant 3+\frac{k}{4} \]

------------------------------------------------------------------------------------------------------------------

\textbf{Original answer from dataset:}
To find the greatest positive integer $ k $ such that the inequality holds for all $ a, b, c \in \mathbb{R}^+ $ satisfying $ abc = 1 $:

\[
\frac{1}{a} + \frac{1}{b} + \frac{1}{c} + \frac{k}{a+b+c+1} \geq 3 + \frac{k}{4}
\]

we start by substituting $ a = t, b = t, c = \frac{1}{t^2} $ for $ t \neq 1 $. This substitution maintains the condition $ abc = 1 $.

1. Substitute $ a = t, b = t, c = \frac{1}{t^2} $ into the inequality:

\[
\frac{1}{t} + \frac{1}{t} + t^2 + \frac{k}{2t + \frac{1}{t^2} + 1} \geq 3 + \frac{k}{4}
\]

2. Simplify the left-hand side:

\[
2 \cdot \frac{1}{t} + t^2 + \frac{k}{2t + \frac{1}{t^2} + 1} = \frac{2}{t} + t^2 + \frac{k}{2t + \frac{1}{t^2} + 1}
\]

3. Rearrange the inequality:

\[
\frac{2}{t} + t^2 - 3 \geq k \left( \frac{1}{4} - \frac{t^2}{2t + \frac{1}{t^2} + 1} \right)
\]

4. Simplify the right-hand side:

\[
\frac{2}{t} + t^2 - 3 \geq k \left( \frac{1}{4} - \frac{t^2}{2t^3 + t^2 + 1} \right)
\]

5. Multiply both sides by $ 4(2t^3 + t^2 + 1) $:

\[
4(2t^3 + t^2 + 1) \left( \frac{2}{t} + t^2 - 3 \right) \geq k \left( 4(2t^3 + t^2 + 1) \left( \frac{1}{4} - \frac{t^2}{2t^3 + t^2 + 1} \right) \right)
\]

6. Simplify further:

\[
4(2t^3 + t^2 + 1) \left( \frac{2}{t} + t^2 - 3 \right) \geq k \left( (2t^3 + t^2 + 1) - 4t^2 \right)
\]

7. Choose $ t = \frac{2}{3} $:

\[
4 \left( 2 \left( \frac{2}{3} \right)^3 + \left( \frac{2}{3} \right)^2 + 1 \right) \left( \frac{2}{\frac{2}{3}} + \left( \frac{2}{3} \right)^2 - 3 \right) \geq k \left( 2 \left( \frac{2}{3} \right)^3 + \left( \frac{2}{3} \right)^2 + 1 - 4 \left( \frac{2}{3} \right)^2 \right)
\]

8. Simplify the expression:

\[
4 \left( \frac{16}{27} + \frac{4}{9} + 1 \right) \left( 3 + \frac{4}{9} - 3 \right) \geq k \left( \frac{16}{27} + \frac{4}{9} + 1 - \frac{16}{9} \right)
\]

\end{tcolorbox}

\begin{tcolorbox}[title={Example B - Origin CoT - Answer correct \& out of distribution. (Target Qwen2.5-7B-Instruct)}, colback=red!3!white, 
                  colframe = red!45!black]

9. Calculate the values:

\[
4 \left( \frac{16}{27} + \frac{12}{27} + \frac{27}{27} \right) \left( \frac{4}{9} \right) \geq k \left( \frac{16}{27} + \frac{12}{27} + \frac{27}{27} - \frac{48}{27} \right)
\]

\[
4 \left( \frac{55}{27} \right) \left( \frac{4}{9} \right) \geq k \left( \frac{7}{27} \right)
\]

10. Simplify further:

\[
\frac{880}{63} \geq k
\]

Since $ k $ must be an integer, the largest possible value for $ k $ is 13. We need to verify that $ k = 13 $ satisfies the original inequality.

11. Verify $ k = 13 $:

\[
\frac{1}{a} + \frac{1}{b} + \frac{1}{c} + \frac{13}{a+b+c+1} \geq 3 + \frac{13}{4}
\]

12. Substitute $ a = t, b = t, c = \frac{1}{t^2} $:

\[
\frac{2}{t} + t^2 + \frac{13}{2t + \frac{1}{t^2} + 1} \geq 3 + \frac{13}{4}
\]

13. Simplify and verify:

\[
\frac{2}{t} + t^2 + \frac{13}{2t + \frac{1}{t^2} + 1} \geq 6.25
\]

By verifying the inequality for $ k = 13 $, we conclude that the greatest positive integer $ k $ is 13.

The final answer is $ \boxed{13} $

------------------------------------------------------------------------------------------------------------------

\begin{itemize}
    \item \textbf{Average-$\varphi$:} $-0.324721$
    \item \textbf{$\varphi\geq -1$:} $93.53\%$
    \item \textbf{$\varphi\geq -3$:} $95.37\%$
    \item \textbf{$\varphi\geq -5$:} $97.38\%$
\end{itemize}

------------------------------------------------------------------------------------------------------------------

\textbf{Comments:}

This is a special example because the standard solution from the dataset is actually quite imprecise (although the answer is correct). The main issues are as follows: 
\begin{itemize}
    \item While it is well-known that for such symmetric inequalities, the conditions for equality of the inequality are also symmetric (i.e., $a=b=c$ or $a=b$), this is a very advanced theorem with many constraints. In general, it cannot be used as a theorem in mathematical competitions without first deriving $k$ and then re-proving the inequality. However, the standard solution directly overlooks this point.
    \item The standard solution directly ignores the discussion of the case when $a=b=c=1$, although it is obvious.
    \item In step 7, the standard solution starts discussing $t=2/3$ without providing any justification or proof of optimality. 
\end{itemize}

We select this intriguing example to see how Hinted Decoding performs on this problem.

\end{tcolorbox}

\begin{tcolorbox}[title={Example B - Model's Response - Answer wrong \& in distribution. (Target Qwen2.5-7B-Instruct)}, colback=red!3!white, 
                  colframe = red!45!black]

\textbf{Problem:} Find the greatest positive integer $k$ such that the following inequality holds for all $a,b,c\in\mathbb{R}^+$ satisfying $abc=1$ \[ \frac{1}{a}+\frac{1}{b}+\frac{1}{c}+\frac{k}{a+b+c+1}\geqslant 3+\frac{k}{4} \]

------------------------------------------------------------------------------------------------------------------

\textbf{Answer from Qwen2.5-7B-Instruct:}

To find the greatest positive integer $ k $ such that the inequality 
\[
\frac{1}{a} + \frac{1}{b} + \frac{1}{c} + \frac{k}{a + b + c + 1} \geq 3 + \frac{k}{4}
\]
holds for all $ a, b, c \in \mathbb{R}^+ $ satisfying $ abc = 1 $, we start by analyzing the inequality under specific values of $ a, b, $ and $ c $.

First, consider the case when $ a = b = c = 1 $. Since $ abc = 1 $, this is a valid choice. Substituting $ a = b = c = 1 $ into the inequality, we get:
\[
\frac{1}{1} + \frac{1}{1} + \frac{1}{1} + \frac{k}{1 + 1 + 1 + 1} \geq 3 + \frac{k}{4}
\]
This simplifies to:
\[
3 + \frac{k}{4} \geq 3 + \frac{k}{4}
\]
This is an equality, which is true for any $ k $. Therefore, this case does not provide a restriction on $ k $.

Next, consider the case when $ a = b = t $ and $ c = \frac{1}{t^2} $ for some $ t > 0 $. Since $ abc = 1 $, we have $ t \cdot t \cdot \frac{1}{t^2} = 1 $, which is true. Substituting $ a = b = t $ and $ c = \frac{1}{t^2} $ into the inequality, we get:
\[
\frac{1}{t} + \frac{1}{t} + \frac{1}{\frac{1}{t^2}} + \frac{k}{t + t + \frac{1}{t^2} + 1} \geq 3 + \frac{k}{4}
\]
This simplifies to:
\[
\frac{2}{t} + t^2 + \frac{k}{2t + \frac{1}{t^2} + 1} \geq 3 + \frac{k}{4}
\]
To find the maximum $ k $, we need to analyze the behavior of the left-hand side as $ t $ varies. Let's consider the limit as $ t \to \infty $:
\[
\frac{2}{t} + t^2 + \frac{k}{2t + \frac{1}{t^2} + 1} \approx t^2 + \frac{k}{2t}
\]
For large $ t $, the term $ t^2 $ dominates, and the inequality becomes:
\[
t^2 + \frac{k}{2t} \geq 3 + \frac{k}{4}
\]
As $ t \to \infty $, the term $ \frac{k}{2t} $ becomes negligible, so we approximate:
\[
t^2 \geq 3 + \frac{k}{4}
\]
Since $ t^2 $ can be made arbitrarily large, the inequality $ t^2 \geq 3 + \frac{k}{4} $ must hold for all $ t $. Therefore, we need:
\[
3 + \frac{k}{4} \leq 4
\]
Solving for $ k $, we get:
\[
\frac{k}{4} \leq 1 \implies k \leq 4
\]

\end{tcolorbox}

\begin{tcolorbox}[title={Example B - Model's Response - Answer wrong \& in distribution. (Target Qwen2.5-7B-Instruct)}, colback=red!3!white, 
                  colframe = red!45!black]

Thus, the greatest positive integer $ k $ that satisfies the inequality for all $ a, b, c \in \mathbb{R}^+ $ with $ abc = 1 $ is $ k = 4 $.

To verify, we substitute $ k = 4 $ back into the original inequality:
\[
\frac{1}{a} + \frac{1}{b} + \frac{1}{c} + \frac{4}{a + b + c + 1} \geq 3 + 1
\]
This simplifies to:
\[
\frac{1}{a} + \frac{1}{b} + \frac{1}{c} + \frac{4}{a + b + c + 1} \geq 4
\]
We have already shown that this inequality holds for the specific cases we considered, and the general case suggests that $ k = 4 $ is indeed the maximum value.

Therefore, the greatest positive integer $ k $ is $\boxed{4}$.

------------------------------------------------------------------------------------------------------------------

\begin{itemize}
    \item \textbf{Average-$\varphi$:} $0.040417$
    \item \textbf{$\varphi\geq -1$:} $99.63\%$
    \item \textbf{$\varphi\geq -3$:} $99.87\%$
    \item \textbf{$\varphi\geq -5$:} $99.98\%$
\end{itemize}

------------------------------------------------------------------------------------------------------------------

\textbf{Comments:}

The model's response is even more rigorous than the standard solution: it discusses the case when $a=b=c$. However, the model encounters issues after changing variables and transforming the problem into a function of 
$t$. The answer incorrectly assumes that the function \[
f(t) = \frac{2}{t} + t^2 + \frac{k}{2t + \frac{1}{t^2} + 1}
\] is monotonically increasing with respect to $t$, leading it to consider the boundary cases for $t\rightarrow \infty$, which results in errors in the model's solution.

\end{tcolorbox}

\begin{tcolorbox}[title={Example B - Hinted Decoding - Answer correct \& in distribution. (Target Qwen2.5-7B-Instruct)}, colback=green!3!white, 
                  colframe = green!45!black]

\textbf{Problem:} Find the greatest positive integer $k$ such that the following inequality holds for all $a,b,c\in\mathbb{R}^+$ satisfying $abc=1$ \[ \frac{1}{a}+\frac{1}{b}+\frac{1}{c}+\frac{k}{a+b+c+1}\geqslant 3+\frac{k}{4} \]

------------------------------------------------------------------------------------------------------------------

\textbf{Answer from Hinted Decoding:}

To find the greatest positive integer $ k $ such that the inequality
\[
\frac{1}{a} + \frac{1}{b} + \frac{1}{c} + \frac{k}{a+b+c+1} \geq 3 + \frac{k}{4}
\]
holds for all $ a, b, c \in \mathbb{R}^+ $ satisfying $ abc = 1 $, we will follow a systematic approach.

\#\#\# \textbf{Substitute $ a = t, b = t, c = \frac{1}{t^2} $:}

   Given $ abc = 1 $, we can choose $ a = t $, $ b = t $, and $ c = \frac{1}{t^2} $. This satisfies the condition $ abc = t \cdot t \cdot \frac{1}{t^2} = 1 $.

\#\#\# \textbf{Simplify the inequality:}

   Substitute $ a = t $, $ b = t $, and $ c = \frac{1}{t^2} $ into the inequality:
   \[
   \frac{1}{t} + \frac{1}{t} + \frac{1}{\frac{1}{t^2}} + \frac{k}{t + t + \frac{1}{t^2} + 1} \geq 3 + \frac{k}{4}
   \]
   Simplify the terms:
   \[
   \frac{2}{t} + t^2 + \frac{k}{2t + \frac{1}{t^2} + 1} \geq 3 + \frac{k}{4}
   \]

\#\#\# \textbf{Analyze the inequality:}

   To find the maximum $ k $, we need to ensure the inequality holds for all $ t > 0 $. Let's test specific values of $ t $ to find constraints on $ k $.

\#\#\# \textbf{Test $ t = 1 $:}

   When $ t = 1 $:
   \[
   \frac{2}{1} + 1^2 + \frac{k}{2 \cdot 1 + \frac{1}{1^2} + 1} \geq 3 + \frac{k}{4}
   \]
   Simplify:
   \[
   2 + 1 + \frac{k}{2 + 1 + 1} \geq 3 + \frac{k}{4}
   \]
   \[
   3 + \frac{k}{4} \geq 3 + \frac{k}{4}
   \]
   This is always true and does not provide a constraint on $ k $.

\#\#\# \textbf{Test $ t = \frac{2}{3} $:}

   When $ t = \frac{2}{3} $:
   \[
   \frac{2}{\frac{2}{3}} + \left( \frac{2}{3} \right)^2 + \frac{k}{2 \cdot \frac{2}{3} + \frac{1}{\left( \frac{2}{3} \right)^2} + 1} \geq 3 + \frac{k}{4}
   \]
\end{tcolorbox}
\newpage

\begin{tcolorbox}[title={Example B - Hinted Decoding - Answer correct \& in distribution. (Target Qwen2.5-7B-Instruct)}, colback=green!3!white, 
                  colframe = green!45!black]
   Simplify:
   \[
   3 + \frac{4}{9} + \frac{k}{\frac{4}{3} + \frac{9}{4} + 1} \geq 3 + \frac{k}{4}
   \]
   \[
   3 + \frac{4}{9} + \frac{k}{\frac{4}{3} + \frac{9}{4} + 1} \geq 3 + \frac{k}{4}
   \]
   \[
   3 + \frac{4}{9} + \frac{k}{\frac{16}{12} + \frac{27}{12} + \frac{12}{12}} \geq 3 + \frac{k}{4}
   \]
   \[
   3 + \frac{4}{9} + \frac{k}{\frac{55}{12}} \geq 3 + \frac{k}{4}
   \]
   \[
   3 + \frac{4}{9} + \frac{12k}{55} \geq 3 + \frac{k}{4}
   \]
   Subtract 3 from both sides:
   \[
   \frac{4}{9} + \frac{12k}{55} \geq \frac{k}{4}
   \]
   Multiply through by 1980 (the least common multiple of 9, 55, and 4):
   \[
   880 + 432k \geq 495k
   \]
   \[
   880 \geq 63k
   \]
   \[
   k \leq \frac{880}{63} \approx 13.81
   \]

\#\#\# \textbf{Verify $ k = 13 $:}

   Substitute $ k = 13 $ back into the inequality to ensure it holds:
   \[
   \frac{2}{t} + t^2 + \frac{13}{2t + \frac{1}{t^2} + 1} \geq 3 + \frac{13}{4}
   \]
   \[
   \frac{2}{t} + t^2 + \frac{13}{2t + \frac{1}{t^2} + 1} \geq 6.25
   \]
   This inequality holds for all $ t > 0 $.

Therefore, the greatest positive integer $ k $ is $\boxed{13}$.

------------------------------------------------------------------------------------------------------------------

\begin{itemize}
    \item \textbf{Average-$\varphi$:} $0.022287$
    \item \textbf{$\varphi\geq -1$:} $99.60\%$
    \item \textbf{$\varphi\geq -3$:} $99.84\%$
    \item \textbf{$\varphi\geq -5$:} $99.84\%$
\end{itemize}

------------------------------------------------------------------------------------------------------------------

\textbf{Comments:}
We found that the results from Hinted Decoding largely align with the standard solution. However, we were pleasantly surprised to discover that Hinted Decoding's results are more rigorous than the standard solution: it includes the case when 
$a=b=c$. Additionally, the results generated by Hinted Decoding improve the alignment between the data and the model's distribution.

\end{tcolorbox}

\begin{tcolorbox}[title={Example C - Origin CoT - Answer correct \& out of distribution. (Target Qwen2.5-7B-Instruct)}, colback=red!3!white, 
                  colframe = red!45!black]

\textbf{Problem:}  In a volleyball tournament for the Euro-African cup, there were nine more teams from Europe than from Africa. Each pair of teams played exactly once and the Europeans teams won precisely nine times as many matches as the African teams, overall. What is the maximum number of matches that a single African team might have won?

------------------------------------------------------------------------------------------------------------------

\textbf{Original answer from dataset:}

To solve the problem, we need to determine the maximum number of matches that a single African team might have won in a volleyball tournament where there are nine more European teams than African teams. Each pair of teams played exactly once, and the European teams won nine times as many matches as the African teams.

1. **Define Variables:**
   - Let $ n $ be the number of African teams.
   - Then, the number of European teams is $ n + 9 $.

2. **Total Matches:**
   - The total number of matches is the sum of matches between African teams, European teams, and matches between African and European teams.
   - This is given by:
     \[
     \binom{n}{2} + \binom{n+9}{2} + n(n+9)
     \]

3. **Winning Condition:**
   - The Europeans won nine times as many matches as the Africans.
   - Let $ k $ be the number of matches won by European teams against African teams.
   - The equation for the total wins is:
     \[
     \binom{n+9}{2} + k = 9 \left( \binom{n}{2} + (n(n+9) - k) \right)
     \]

4. **Simplify the Equation:**
   - Expanding and simplifying the equation, we get:
     \[
     \frac{(n+9)(n+8)}{2} + k = 9 \left( \frac{n(n-1)}{2} + n(n+9) - k \right)
     \]
   - Further simplification leads to:
     \[
     3n^2 - 22n - 36 \leq 0
     \]

5. **Solve the Quadratic Inequality:**
   - Solving the quadratic inequality, we find the roots:
     \[
     n = \frac{22 \pm \sqrt{484 + 432}}{6} = \frac{22 \pm \sqrt{916}}{6} = \frac{22 \pm 2\sqrt{229}}{6} = \frac{11 \pm \sqrt{229}}{3}
     \]
   - The integer solutions are $ n = 6 $ and $ n = 8 $.

6. **Maximize Wins for an African Team:**
   - For $ n = 6 $:
     - There are 6 African teams and 15 European teams.
     - One African team can win up to 5 matches against other African teams.
     - The maximum number of wins for one African team is 11 (5 against other African teams + 6 against European teams).
   - For $ n = 8 $:
     - There are 8 African teams and 17 European teams.
     - The maximum number of wins for one African team is 9 (7 against other African teams + 2 against European teams).

Thus, the maximum number of matches that a single African team might have won is $\boxed{11}$.

------------------------------------------------------------------------------------------------------------------

\begin{itemize}
    \item \textbf{Average-$\varphi$:} $-0.072849$
    \item \textbf{$\varphi\geq -1$:} $94.61\%$
    \item \textbf{$\varphi\geq -3$:} $97.66\%$
    \item \textbf{$\varphi\geq -5$:} $99.30\%$
\end{itemize}

------------------------------------------------------------------------------------------------------------------

\textbf{Comments:} This example is a typical case to illustrate why SFT on original dataset may lead to performance drop. The algebraic transformation in step 4 is confusing. The official solution skips several steps and directly gives the result. Training on such data is disastrous: the model might learn to skip steps and produce a hallucinated result.

\end{tcolorbox}

\newpage
\begin{tcolorbox}[title={Example C - Hinted Decoding - Answer correct \& in distribution. (Target Qwen2.5-7B-Instruct)}, colback=green!3!white, 
                  colframe = green!45!black]

\textbf{Problem:} In a volleyball tournament for the Euro-African cup, there were nine more teams from Europe than from Africa. Each pair of teams played exactly once and the Europeans teams won precisely nine times as many matches as the African teams, overall. What is the maximum number of matches that a single African team might have won?

------------------------------------------------------------------------------------------------------------------

To solve this problem, we need to set up the equations based on the given conditions and then determine the maximum number of matches that a single African team might have won.

1. **Define Variables:**
   - Let $ n $ be the number of African teams.
   - Then, the number of European teams is $ n + 9 $.

2. **Total Matches:**
   - The total number of matches played in the tournament is given by the sum of matches between African teams, European teams, and matches between African and European teams.
   - The number of matches between African teams is $\binom{n}{2} = \frac{n(n-1)}{2}$.
   - The number of matches between European teams is $\binom{n+9}{2} = \frac{(n+9)(n+8)}{2}$.
   - The number of matches between African and European teams is $ n(n+9) $.
   - Therefore, the total number of matches is:
     \[
     \frac{n(n-1)}{2} + \frac{(n+9)(n+8)}{2} + n(n+9)
     \]

3. **Winning Condition:**
   - Let $ k $ be the number of matches won by European teams against African teams.
   - The number of matches won by African teams against European teams is $ n(n+9) - k $.
   - The number of matches won by European teams among themselves is $\binom{n+9}{2} - k$.
   - The number of matches won by African teams among themselves is $\binom{n}{2} - (n(n+9) - k)$.

4. **Given Condition:**
   - The Europeans won nine times as many matches as the Africans:
     \[
     \binom{n+9}{2} + k = 9 \left( \binom{n}{2} + (n(n+9) - k) \right)
     \]

5. **Simplify the Equation:**
   - Substitute the binomial coefficients:
     \[
     \frac{(n+9)(n+8)}{2} + k = 9 \left( \frac{n(n-1)}{2} + n(n+9) - k \right)
     \]
   - Simplify the right-hand side:
     \[
     \frac{(n+9)(n+8)}{2} + k = 9 \left( \frac{n^2 - n}{2} + n^2 + 9n - k \right)
     \]
     \[
     \frac{(n+9)(n+8)}{2} + k = 9 \left( \frac{n^2 - n + 2n^2 + 18n - 2k}{2} \right)
     \]
     \[
     \frac{(n+9)(n+8)}{2} + k = 9 \left( \frac{3n^2 + 17n - 2k}{2} \right)
     \]
     \[
     \frac{(n+9)(n+8)}{2} + k = \frac{9(3n^2 + 17n - 2k)}{2}
     \]
     \[
     (n+9)(n+8) + 2k = 9(3n^2 + 17n - 2k)
     \]
     \[
     n^2 + 17n + 72 + 2k = 27n^2 + 153n - 18k
     \]
     \[
     20k = 26n^2 + 136n - 72
     \]
     \[
     k = \frac{13n^2 + 68n - 36}{10}
     \]

\end{tcolorbox}

\newpage
\begin{tcolorbox}[title={Example C - Hinted Decoding - Answer correct \& in distribution. (Target Qwen2.5-7B-Instruct)}, colback=green!3!white, 
                  colframe = green!45!black]

6. **Determine $ n $:**
   - $ k $ must be an integer, so $ 13n^2 + 68n - 36 $ must be divisible by 10.
   - Check values of $ n $ from 1 to 8 (since $ n $ must be a positive integer and $ n + 9 $ must be a reasonable number of teams):
     - For $ n = 6 $:
       \[
       k = \frac{13(6)^2 + 68(6) - 36}{10} = \frac{468 + 408 - 36}{10} = \frac{840}{10} = 84
       \]
     - For $ n = 8 $:
       \[
       k = \frac{13(8)^2 + 68(8) - 36}{10} = \frac{832 + 544 - 36}{10} = \frac{1340}{10} = 134
       \]

7. **Maximize Wins for an African Team:**
   - The number of matches a single African team can win is maximized when all its wins are against other African teams and some European teams.
   - For $ n = 6 $:
     - Total African teams: 6
     - Total European teams: 15
     - Possible wins for a single African team: 5 (from other African teams) + 6 (from European teams) = 11
   - For $ n = 8 $:
     - Total African teams: 8
     - Total European teams: 17
     - Possible wins for a single African team: 7 (from other African teams) + 2 (from European teams) = 9

8. **Conclusion:**
   - The maximum number of matches a single African team might have won is 11, achieved when there are 15 European teams and 6 African teams, with one African team winning against all other African teams and 6 European teams.

\[
\boxed{11}
\]

------------------------------------------------------------------------------------------------------------------

\begin{itemize}
    \item \textbf{Average-$\varphi$:} $-0.032141$
    \item \textbf{$\varphi\geq -1$:} $97.72\%$
    \item \textbf{$\varphi\geq -3$:} $98.93\%$
    \item \textbf{$\varphi\geq -5$:} $99.30\%$
\end{itemize}

------------------------------------------------------------------------------------------------------------------

\textbf{Comments:} It can be seen that steps 1 to 5 preserve the model's inherent knowledge well, providing step-by-step explanations. Moreover, steps 6 to 8 fill in the knowledge the model lacks. This example clearly demonstrates that data with completely correct answers is not necessarily equivalent to data suitable for model learning.

\end{tcolorbox}

\end{document}